\DeclareMathOperator*{\argmin}{arg\,min}
\theoremstyle{plain}
\newtheorem{theorem}{Theorem}
\newtheorem{corollary}[theorem]{Corollary}
\newtheorem{proposition}{Proposition}
\newtheorem{lemma}[proposition]{Lemma}
\theoremstyle{definition}
\newtheorem{definition}{Definition}
\theoremstyle{remark}
\newtheorem{example}{Example}
\tikzstyle{vertex}=[circle, draw, inner sep=0pt, minimum size=6pt]
\newcommand{\eps}{\varepsilon}
\newcommand{\cW}{\check{W}}
\newcommand{\cY}{\check{Y}}
\newcommand{\R}{\mathbb{R}}
\definecolor{britishracinggreen}{rgb}{0.0, 0.26, 0.15}
\definecolor{burntorange}{rgb}{0.8, 0.33, 0.0}
\definecolor{blendedblue}{rgb}{0.2,0.2,0.7}
\newcommand{\be}{\beta}
\newcommand{\ga}{\gamma}
\title{A variational Bayes approach to debiased inference for low-dimensional parameters in high-dimensional linear regression}
\author{}
\date{}
\author{Isma\"el Castillo$^*$, Alice L'Huillier$^*$, Kolyan Ray$^\dagger$, Luke Travis$^\dagger$\\
\\
\textit{Sorbonne Universit\'e$^*$ and Imperial College London$^\dagger$}}
\begin{document}
	\maketitle

\begin{abstract}
We propose a scalable variational Bayes method for statistical inference for a single or low-dimensional subset of the coordinates of a high-dimensional parameter in sparse linear regression. Our approach relies on assigning a mean-field approximation to the nuisance coordinates and carefully modelling the conditional distribution of the target given the nuisance. This requires only a preprocessing step and preserves the computational advantages of mean-field variational Bayes, while ensuring accurate and reliable inference for the target parameter, including for uncertainty quantification. We investigate the numerical performance of our algorithm, showing that it performs competitively with existing methods. We further establish accompanying theoretical guarantees for estimation and uncertainty quantification in the form of a Bernstein--von Mises theorem.

\textit{Keywords: variational Bayes, debiased inference, spike-and-slab prior, sparsity, uncertainty quantification, high-dimensional regression.}
\end{abstract}

\section{Introduction}
Consider high-dimensional linear regression
\begin{equation}
Y = X\beta + \sigma\eps, \hspace{10mm}\eps \sim \mathcal{N}_n(0, I_n),	\label{eq:linear_regression}
\end{equation}
with response $Y \in \R^n$, design matrix $X \in \mathbb{R}^{n\times p}$, parameter vector $\beta \in \mathbb{R}^p$ and noise level $\sigma>0$. We are interested in the sparse setup, where $p \geq n$ and typically $p \gg n$, and many of the coefficients $\beta_i$ are (approximately) zero. We focus here on estimation and uncertainty quantification for a fixed and prespecified \textit{low-dimensional} number $k$ of coordinates of the high-dimensional parameter $\beta$. For notational simplicity and without loss of generality, we consider the first $k$ coordinates $\beta_{1:k}=(\beta_1,\dots,\beta_k)^T$, relabeling the coordinates if necessary. Model \eqref{eq:linear_regression} is routine in many real-world settings, for example in genetics, where only a small number of genes out of a large pool may have an association with a disease and the amount of data one has is limited. The problem at hand then corresponds to one of making inference for a certain prescribed subset of the large gene pool holding particular interest.

Frequentist estimation of the entire parameter $\beta$ has been extensively studied in the sparse setting, with the most common approach being the LASSO \cite{Tibshirani1996}, which has many good estimation properties \cite{Buhlmann2011}. However, it is well-known that high-dimensional procedures, including the LASSO, can provide biased estimators for low-dimensional parameters \cite{Zou2006}, which is especially harmful for uncertainty quantification. This has led to several approaches to debias such frequentist methods \cite{Zhang2014,Javanmard2014,vandeGeerBuhlmann2014,JM2018,DY2019,CM24} and thus provide reliable confidence sets in the present problem.

Sparse Bayesian methods have similarly received much recent study for \textit{global} estimation and inference properties, see for instance \cite{Castillo_2012, jr10, CS-HV2015,Ray_Szabo_2020,Bai2022, bcg21} and the references therein. However, much less is known concerning Bayesian estimation of low-dimensional functionals in sparse high-dimensional models. Indeed, similar to frequentist methods, naively using high-dimensional Bayesian methods to estimate functionals can lead to `regularization bias' and poor uncertainty quantification, meaning care must be taken with the choice of prior \cite{CR2015}. In particular, there has been little work on Bayesian approaches to the present problem of estimating a low-dimensional subset of the parameter $\beta$ in model \eqref{eq:linear_regression}. A notable exception is Yang \cite{DY2019}, who combines a clever reparametrization of the likelihood with certain sparse priors \cite{GvdVZ_2020} to establish a one-dimensional posterior asymptotic normality result for the first coordinate $\beta_1$. Another result is \cite{CS-HV2015}, who provide a global Bernstein-von Mises theorem under strong signal strength conditions that implies a similar result, see also \cite{Chae2019} and \cite{wuetal23}. However, the discrete model selection priors used in both approaches can make computation hugely challenging, especially for modern problem sizes of interest. Our goal here is thus to provide {\it computationally scalable and statistically reliable Bayesian inference for a low-dimensional subset of the coordinates of $\beta$}, especially for uncertainty quantification.
  
A popular scalable posterior approximation method is variational Bayes (VB), which approximates the true posterior by the closest element in Kullback-Leibler sense from a family of more tractable distributions. This involves solving an optimization problem which, for suitable variational families, can dramatically increase scalability \cite{Bishop2006,Blei2017}. However, this requires a statistical versus computational tradeoff, with simpler variational families leading to faster computation but worse posterior approximation, potentially leading to poor statistical behaviour. An especially popular variational approximation is \textit{mean field} (MF) VB, where the variational family consists of distributions under which the model parameters are independent, breaking any dependencies in the posterior approximation and losing any correlation information. MF VB has been successfully used to approximate sparse priors, such as the spike and slab, in a number of settings \cite{logsdon2010,carbonetto2012,ormerod2017,Ray_Szabo_2020,RaySzaboClara2020,komodromos2022}. While fast, available results on mean-field VB often show that this factorizable approach provides overconfident (wrong) uncertainty quantification by underestimating the posterior variance \cite{Bishop2006,Blei2017}, including for the \textit{marginal} distributions of the coordinates in sparse settings \cite{komodromos2022}.

The problem with standard MF VB is that it selects a (diagonal) covariance matrix to match the \textit{precision} rather than the \textit{covariance} matrix of the posterior. In correlated settings, as are common in practice, this can lead to an underestimate of the posterior variance, see e.g. \cite{margossian2025}. Our solution is to consider a parameter transformation $\beta_{1:k} \to \beta_{1:k}^*$ used in \cite{DY2019} that orthogonalizes the likelihood of $\beta_{1:k}^*$ and $\beta_{-k} = (\beta_{k+1},\dots,\beta_p)$, thereby decorrelating $\beta_{1:k}^*$ and $\beta_{-k}$ under the posterior. We then use a variational family making $\beta_{1:k}^*$ and $\beta_{-k}$ independent, with a rougher MF approximation on the high-dimensional nuisance parameter $\beta_{-k}$ and a richer approximation on the low-dimensional $\beta_{1:k}^*$, the latter of which correctly captures the posterior covariance of $\beta_{1:k}^*$ as the two components are now decorrelated.
Undoing this transformation, our variational approximation then correctly captures the posterior variance component of $\beta_{1:k}$ coming from $\beta_{1:k}^*$, with a potential underestimation of the component coming from $\beta_{-k}$ due to the MF approximation. This last variance component is typically of smaller order, so its underestimation has much less of an effect than when using full MF VB, leading to dramatically improved uncertainty quantification.
More broadly, our approach can be viewed as a form of generalized MF approximation in a transformed parameter space which decorrelates the posterior. Viewed in the original untransformed parameter space, this is equivalent to using MF VB for the nuisance parameter $\beta_{-k}$, and then carefully modelling the conditional distribution $\beta_{1:k}|\beta_{-k}$ to closely match the posterior, which remains computationally feasible since $\beta_{1:k}$ is low-dimensional. Our main contribution is thus to propose a variational family based on this orthogonalizing transformation that allows statistically reliable inference for $\beta_{1:k}$ (assuming the underlying posterior does) while preserving the computational advantages of MF VB.


For computational reasons, we consider a spike--and--slab type prior related to the theoretically motivated but computationally infeasible prior from \cite{DY2019}. 
Since this prior is based on the same reparametrization discussed above, it aligns with our variational family and thus has the attractive feature of decoupling computation of $\beta_{1:k}^*$ and $\beta_{-k}$. Both optimization problems are then obtained by a simple pre-processing step consisting of projecting the data and design matrix onto orthogonal spaces, giving two new linear regression problems that can be solved in parallel. In particular, one can directly use existing implementations for both these new linear regression steps without requiring any modification to the original algorithms, for instance using precise (low-dimensional) samplers for $\beta_{1:k}^*$ and fast coordinate ascent variational ascent (CAVI) for the more expensive nuisance parameter $\beta_{-k}$. In this way, our approach can exploit the computational scalability of standard MF VB while still providing statistically reliable inference for $\beta_{1:k}$. 

We apply our method to numerical simulations and real covariate data, showing that it provides fast and reliable statistical inference for $\beta_{1:k}$ in practice, including for uncertainty quantification, performing at least as well as existing frequentist methods, notably the debiased LASSO \cite{Zhang2014,Javanmard2014,vandeGeerBuhlmann2014}. A particular strength of our method is that it appears to approximate quite well the correlation structure of the underlying posterior for $\beta_{1:k}$ in a variety of settings, leading to (variational) Bayesian credible sets with good frequentist coverage and size. In particular, it continues to perform well in correlated settings, in stark contrast to full MF VB. We further provide asymptotic theoretical guarantees for our method in the form of a semiparametric Bernstein--von Mises theorem, showing  that when the design is not too correlated, the marginal posterior for $\beta_{1:k}$ will be asymptotically normal and centered at an efficient estimator and with optimal covariance as $n,p \to \infty$. One consequence is that certain standard credible sets are asymptotic confidence sets of the right level, thereby providing a frequentist justification for this variational Bayes approach.

Our method shares connections to existing work studying modified versions of the mean-field approach to improve VB inference. For instance, \cite{Tan2013,Tan2021} also consider reparametrizations based on linear transformations before applying MF VB to improve global posterior approximation. Other approaches use blocking techniques \cite{Goplerud2022,Menictas2023,komodromos2023}, i.e. grouping together certain parameters in the MF factorization (as we do for $\beta_{1:k}^*$ here), or partial factorizations \cite{hoffman2015,Fasano2022,goplerud2023}. Another approach is to apply MF VB on a subset of parameters and some other approximation methods on the remaining parameters \cite{yuetal20, youetal23}.
Our work can be viewed as extending some of these concepts to semiparametric estimation, with a carefully tailored variational family designed to provide fast and reliable VB inference for the low-dimensional functional $\beta_{1:k}$ without requiring modifications to existing software implementations. On the theory side, there has been much recent work on the asymptotic properties of VB methods, but this has mostly focused on convergence rates or posterior approximation, e.g. \cite{ar20, ypb20, zg20,Ray_Szabo_2020}. For uncertainty quantification, \cite{WangBlei2019} establish a Bernstein--von Mises result for low-dimensional parametric models, confirming that credible sets from full MF VB can suffer from under coverage. In nonparametric settings, there exist more positive results for other variational families, notably sparse Gaussian processes, showing that these can be reliable, if sometimes conservative \cite{vakili2022i,Nieman2023,travis2024}.

\textbf{Organization.} In Section \ref{sec:methodology}, we describe our variational Bayesian methodology and provide a sampling algorithm. Section \ref{sect:simulations} presents our numerical results, Section \ref{sec:BvM_VB_1d} our theoretical asymptotic guarantees and Section \ref{sec:discussion} our conclusions. In Section \ref{sec:heuristics} we present heuristic calculations explaining the excellent performance of our method in the simulations for strongly correlated settings. All proofs are deferred to Section \ref{sec:proofs}, a discussion of the computational cost of our algorithm in Section \ref{sec:computational_cost}, additional simulations are presented in Section \ref{sec:additional_simulations}, a discussion of the conditions for our theoretical results is given in Section \ref{sec::Additional_result} and some further background on our variational method is given in Section \ref{sec:additional_VB}.

{\bf Notation.} For $\beta \in \R^p$, we denote $\beta_{1:k}=(\beta_1,\dots,\beta_k)^T \in \R^{k}$ the vector consisting of the first $k$ coordinates of $\beta$ and $\beta_{-k}=(\beta_{k+1},\dots,\beta_{p})^T \in \R^{p-k}$. We write $X_i \in \R^n$ for the $i^{th}$ column of $X$, $X_{1:k}=(X_1, \dots, X_k) \in \R^{n \times k}$ for the matrix consisting of the first $k$ columns of $X$ and $X_{-k} = (X_{k+1}, \dots, X_p) \in \R^{n \times (p-k)}$. Similarly, for $\Pi$ a measure on $\R^p$ we denote $\Pi_{-k}$ for the measure induced on the last $p-k$ coordinates. The Lebesgue measure on $\R^k$ is denoted $\Lambda= \Lambda_k$, $\|\cdot\|_2$ is the Euclidean norm, and for a matrix $X$, we define $\|X\| =\max_{i = 1, \dots, p}\|X_i\|_2$.
For $\lambda>0$, we denote $\text{Lap}(\lambda)$ the centered Laplace distribution on $\R$, with density  proportional to $e^{-\lambda|x|}$. The Kullback-Leibler divergence between two probability distributions $P$ and $Q$ is denoted $\operatorname{KL}(P, Q)$. For a vector $\beta\in \R^p$, denote $S_\beta = \{i: \beta_i \neq 0 \}$ by the set of non-zero coefficients of $\beta$. For $S \subset \{1,\dots,p\}$ let $\beta_S = (\beta_i)_{i\in S}$ and $|S|$ be the cardinality of $S$. For our theoretical results, we take $\sigma^2=1$ and write $P_0$ for the probability measure induced by the model $Y = X\beta^0 + \eps$, where $\eps \sim \mathcal{N}_n(0, I_n)$.

\section{Methodology}\label{sec:methodology}
We now introduce our variational Bayes methodology to conduct fast and accurate inference, including uncertainty quantification, on a low-dimensional subset $\beta_{1:k} = (\beta_1,\dots,\beta_k)^T$ of the high dimensional parameter $\beta \in \R^p$ in model \eqref{eq:linear_regression} with $k\ll p$ a fixed integer. Our method applies equally to any subset $\beta_{S}$, $S \subseteq \{1,\dots,p\}$ with $|S|=k$, but without loss of generality we may write $S = \{1,\dots,k\}$ by relabelling the parameters.
The noise variance $\sigma^2$ in \eqref{eq:linear_regression} is typically unknown and must be estimated. We follow \cite{CS-HV2015,Ray_Szabo_2020} and compute an estimator $\hat{\sigma}^2$ of $\sigma^2$, and divide both sides of the regression model by $\hat{\sigma}$, yielding $\tilde{Y} = Y/\hat{\sigma}$, $\tilde{X} = X/\hat{\sigma}$ and $\tilde{\eps} = (\sigma/\hat{\sigma})\eps$, where $\eps \sim \mathcal{N}_n(0, I_n)$. If the estimator $\hat{\sigma}$ is close to $\sigma$, we should approximately recover the $\sigma=1$ case using instead $(\tilde{X},\tilde{Y})$. For our method, we first require such a transformation and thus apply the following methodology with $\sigma =1$, using the notation $(X,Y)$ to refer to these normalized values (i.e. $(\tilde{X},\tilde{Y})$) for simplicity. For clarity of exposition, we first describe the case $k = 1$ before treating the general case.

\subsection{Prior and posterior for $k=1$}
For our underlying prior on $\beta$, we follow the approach of \cite{DY2019} of placing a sparse model selection prior on the nuisance parameter $\beta_{-1} = (\beta_2,\dots,\beta_p)^T$ and a carefully chosen conditional distribution on $\beta_1|\beta_{-1}$. The latter choice is motivated by the following decomposition of the likelihood, discussed further below.
Define $H = X_1 X_1^T/\|X_1\|_2^2$ to be the projection matrix onto span$(X_1)$, and $\gamma_i = X_1^T X_i/\|X_1\|_2^2$, $i=2,\dots, p$, to be the rescaled correlations between the $i^{th}$ and $1^{st}$ columns. By using the orthogonal decomposition $I_n = H + (I_n - H)$, the likelihood can be written
\begin{align}
	\mathcal{L}_n(\beta, Y) 
	 &\propto \exp\left\{-\frac{1}{2}\|Y - X\beta\|_2^2 \right\} \nonumber\\
	&\propto \exp\left\{-\frac{1}{2}\left\|HY - X_1 \beta_1^*\right\|_2^2 \right\} \cdot \exp\left\{-\frac{1}{2}\|(I_n-H)Y - (I_n-H)X_{-1}\beta_{-1}\|_2^2 \right\}, \label{eq:likelihood_decomp}
\end{align}
where
\begin{equation}\label{eq:beta_1*}
\beta_1^* := \beta_1 + \sum_{i = 2}^p\gamma_i \beta_i.
\end{equation}
The first exponential contains all the $\beta_1$ terms via the transformed parameter $\beta_1^*$, while the second term involves only $\beta_{-1}$. Using the above bijective reparametrization $\beta = (\beta_1^*,\beta_{-1})$, the likelihood decouples $\beta_1^*$ from $\beta_{-1}$, with the dependence of $\beta_1$ and $\beta_{-1}$ fully captured through \eqref{eq:beta_1*}. This transformation can be viewed as an orthogonalization to approximately decorrelate the parameter of interest $\beta_1$ from the nuisance parameter $\beta_{-1}$ under the data. Note that if $X_1$ is orthogonal to $X_{-1}$, as might occur in experimental design setups, then $\beta_1^* = \beta_1$ since $X_{-1} \beta_{-1}$ carries no information about $\beta_1$. For the Bayesian, if the priors on $\beta_1^*$ and $\beta_{-1}$ are independent, then the corresponding posterior distributions will also be independent, allowing a particularly simple characterisation of the posterior. For this reason, we place independent priors on $\beta_1^*$ and $\beta_{-1}$, with $\beta_1^*$ sampled from a continuous distribution on $\R$. This induces a conditional (non-independent) prior on $\beta_1|\beta_{-1}$, but it is often helpful to think in terms of the decorrelated parameterization $(\beta_1^*,\beta_{-1})$. For the sparse prior on $\beta_{-1}$, we consider the class of model selection priors \cite{Castillo_2012,CS-HV2015,Ray_Szabo_2020} defined as follows.
 \begin{definition}[Model selection prior] \label{def:sparse_prior}
 For $d\geq 1$, $\nu$ a probability distribution on $\{0, \dots, d\}$ and $\lambda > 0$, the \textit{model selection} prior on $u \in \R^d$, denoted $MS_d(\nu, \lambda)$, is determined in the following hierarchical manner:
 	\begin{enumerate}
 		\item the sparsity $s \in \{0,1,\dots,d\}$  of $u$ is drawn according to $s\sim \nu$,
 		\item the active set $S$ given $|S|=s$ of $u$ is drawn uniformly from the ${d \choose s}$ subsets of $\{1,\dots,d\}$ of size $s$,
 		\item $u_i | S \overset{ind}{\sim} \begin{cases}
 			\,\text{Lap}(\lambda),& i \in S, \\
 			\,\delta_0,& i \notin S,
 		\end{cases}$ 
 	\end{enumerate}
 	 where $\delta_0$ denotes the Dirac mass at zero. We require that $\nu$ satisfies
   \begin{align}\label{assum::prior_1}
	A_1d^{-A_3} \nu(s-1) &\leq \nu(s) \leq A_2 d^{-A_4} \nu(s-1), \qquad \text{for }s=1,\dots, d, 
\end{align}
for some constants $A_1, A_2, A_3, A_4 > 0$. 
 \end{definition}
 For our practical implementation, we use the spike-and-slab prior \cite{SAS2005} with $\beta_i \sim^{iid} q \textrm{Lap}(\lambda) + (1-q)\delta_0$, where $q \in [0, 1]$ is the prior inclusion probability of each $\beta_i$ and $\textrm{Lap}(\lambda)$ denotes the Laplace distribution with density function $(\lambda/2) e^{-\lambda|x|}$ on $\R$. This fits within Definition \ref{def:sparse_prior} with $\nu \sim \textrm{Binomial}(p, q)$. Note that taking lighter than exponential tails can lead to overshrinkage and poor performance \cite{Castillo_2012}; other heavy tailed priors, such as Cauchy or non-local priors \cite{jr10}, could also be considered, although we focus here on Laplace slabs for simplicity. It remains to specify the conditional prior on $\beta_1|\beta_{-1}$.

 \begin{definition}\label{def_general_prior} For $\nu$ a probability distribution on $\{0, \dots, p-1\}$ , $\lambda>0$ and $g$ a positive density with respect to Lebesgue measure $\Lambda=\Lambda_1$, consider the following hierarchical prior distribution $\Pi$ on $\beta \in \R^p$:
 \begin{equation}\label{general_prior}
 	\begin{aligned}
 		\beta_{-1} &\sim MS_{p-1}(\nu, \lambda)\\
 		 \beta_1 | \beta_{-1} & \sim g\left(\cdot + \sum_{i = 2}^p \gamma_i \beta_i\right) d\Lambda
 	\end{aligned}
 \end{equation}
(the notation $Y\sim h(\cdot)d\Lambda$ means that $Y$ has density $h$ with respect to $\Lambda$).  
 \end{definition}
This prior can equivalently be written as taking $\beta_{1}^* \sim g$ and $\beta_{-1} \sim MS_{p-1}(\nu,\lambda)$ \textit{independent}. The above prior is similar to that considered in \cite{DY2019}, though with two main differences. First, Yang \cite{DY2019} places a different model selection prior on $\beta_{-1}$, notably the abstract prior introduced in \cite{GvdVZ_2020}. While this prior is amenable to very refined theoretical results, it is challenging to implement in practice. In contrast, we consider priors more widely used in practice, notably the spike and slab in our implementation.
     Second, we do not require the prior density $g$ of $\beta_1^*$ to be Gaussian, and will see below that choosing a distribution with heavier tails performs better numerically and theoretically. 
We consider three concrete examples of priors $g$ for $\beta_1|\beta_{-1}$ in \eqref{general_prior}, including an improper one, with varying tail decays. 

 \begin{example}[Laplace prior]\label{laplace_prior}
 	For $\sigma_n>0$, take $g=g_n$ the $\text{Lap}(1/\sigma_n)$ distribution. Then the prior reduces to $\beta_{-1} \sim MS_{p-1}(\nu, \lambda)$, $\beta_1 | \beta_{-1} \sim \text{Lap}(1/\sigma_n) -\sum_{i = 2}^p\gamma_i\beta_i$. 
 \end{example}
 \begin{example}[Gaussian Prior]\label{gaussian_prior}
 	For $\sigma_n^2>0$, take $g=g_n$ the $\mathcal{N}(0,\sigma_n^2)$ distribution. Then the prior reduces to $\beta_{-1} \sim MS_{p-1}(\nu, \lambda)$, $\beta_1 | \beta_{-1} \sim \mathcal{N}(-\sum_{i = 2}^p\gamma_i\beta_i, \sigma_n^2)$.
 \end{example}
 \begin{example}[Improper prior]\label{improper_prior} Consider the prior measure $\Lambda \otimes MS_{p-1}(\nu, \lambda)$, which can be seen as the prior \eqref{general_prior} with $g\propto 1$.
 \end{example}
 We find that the Laplace and improper priors work best in practice, having heavier tails, and would recommend using these by default. Note that the improper $g \propto 1$ does not require hyperparameter tuning for $\sigma_n$, which the Gaussian prior is especially sensitive to.

Since both the likelihood \eqref{eq:likelihood_decomp} and prior \eqref{general_prior} factorize in $(\beta_1^*,\beta_{-1})$, so too does the resulting posterior, which we next characterize. Since $H$ is the projection matrix onto $\textrm{span}(X_1)$, $I_n-H$ is the orthogonal projection onto $\textrm{span}(X_1)^\perp$, an $(n-1)-$dimensional subspace of $\R^n$. To avoid degeneracy issues, we wish to consider such a projection mapping into $\R^{n-1}$ rather than $\R^n$, and we thus define $P \in \R^{n \times (n-1)}$ as any matrix whose columns form an orthonormal basis of $\textrm{span}(X_1)^\perp$, for which $PP^T = (I_n-H)$ and $P^TP = I_{n-1}$. We then define
\begin{equation}\label{eq:preprocess}
\cW = P^TX_{-1} \in \R^{(n-1) \times (p-1)}, \qquad \cY=P^TY \in \R^{n-1}, \qquad \check{\eps}=P^T\eps,
\end{equation}
where premultiplying by $P^T$ is essentially premultiplying by $(I_n-H)$ but with the image of the map in $\R^{n-1}$ rather than an $(n-1)-$dimensional subspace of $\R^n$. 
The logarithm of the second term in the likelihood decomposition \eqref{eq:likelihood_decomp} can then be written in terms of $\cY$ and $\cW$ as
$$-\tfrac{1}{2} \|(I_n-H)Y - (I_n-H)X_{-1}\beta_{-1}\|_2^2 = -\tfrac{1}{2}\|P(\cY - \cW \beta_{-1})\|^2_2 = -\tfrac{1}{2}\|\check{Y} - \cW \beta_{-1}\|_2^2,$$
i.e. the log-likelihood of the linear model $\cY = \cW \beta_{-1} + \check{\eps}$ with $\check{\eps} \sim \mathcal{N}_{n-1}(0, P^TP) = \mathcal{N}_{n-1}(0, I_{n-1})$ based on the transformed response $\cY$ and features $\cW$. To compute the posterior for $\beta_{-1}$, we thus only need a preprocessing step consisting of premultiplying the data by $P^T$ before running Bayesian linear regression as usual.
 \begin{lemma}[Posterior distribution when $k=1$]\label{lem:posterior_form}
 	Let $\Pi$ be the prior \eqref{general_prior}. Then under the posterior distribution, $\beta_{-1}$ and $\beta_1^*$ are independent, and their distributions take the form 
 	\begin{align}
 		d\pi(\beta_{-1} | Y) & \propto {e^{-\frac{1}{2} \| \cY -\cW\beta_{-1}\|^2_2}} dMS_{p-1}(\nu, \lambda), \label{eq:density_beta_minus_1_1D} \\
 		d\pi(\beta_{1}^* |Y) & \propto e^{-\frac{1}{2}\|X_1\|_2^2\left( \beta_{1}^* - \frac{X_1^TY}{\|X_1\|_2^2} \right) ^2} g(\beta_1^* ) d\beta_1^*. \label{density_beta_star_1D}
 	\end{align} 
 	In particular, in Examples \ref{gaussian_prior} and \ref{improper_prior},  the density \eqref{density_beta_star_1D} takes the form
  $$\beta_1^* | Y \sim \mathcal{N}\left(\frac{\sigma_n^2}{\|X_1\|_2^2\sigma_n^2 +1}X_1^TY, \frac{\sigma_n^2}{\|X_1\|_2^2\sigma_n^2 +1} \right) \qquad \text{and} \qquad \beta_1^* | Y  \sim \mathcal{N}\left(\frac{1}{\|X_1\|_2^2}X_1^TY, \frac{1}{\|X_1\|_2^2} \right),$$
  respectively. 
  Finally, under the frequentist assumption $Y \sim \mathcal{N}_n(X\beta^0,I_n)$, we have $\cY \sim\mathcal{N}_{n-1}(\cW\beta^0_{-1} , I_{n-1})$.
 \end{lemma}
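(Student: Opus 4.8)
The plan is to start from the joint posterior $d\pi(\beta\mid Y)\propto \mathcal{L}_n(\beta,Y)\,d\Pi(\beta)$ and exploit that, after the bijective reparametrization $\beta=(\beta_1,\beta_{-1})\mapsto(\beta_1^*,\beta_{-1})$ given by \eqref{eq:beta_1*}, \emph{both} factors on the right-hand side split into a product of a function of $\beta_1^*$ alone and a function of $\beta_{-1}$ alone. Two observations drive this. First, the map $(\beta_1,\beta_{-1})\mapsto(\beta_1+\sum_{i\ge2}\gamma_i\beta_i,\beta_{-1})$ is linear and triangular with unit diagonal, hence has Jacobian $1$, so it preserves Lebesgue measure; consequently the prior \eqref{general_prior}, which by construction is ``$\beta_1^*\sim g$ independent of $\beta_{-1}\sim MS_{p-1}(\nu,\lambda)$'', becomes the product measure $g(\beta_1^*)\,d\beta_1^*\otimes dMS_{p-1}(\nu,\lambda)$ in the new coordinates. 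Second, the likelihood decomposition \eqref{eq:likelihood_decomp} already exhibits the matching factorization into a $\beta_1^*$-term and a $\beta_{-1}$-term. Multiplying, the joint posterior in $(\beta_1^*,\beta_{-1})$ is a product measure, which is precisely the asserted posterior independence.

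Next I would read off each marginal. For $\beta_{-1}$ the relevant factor is $\exp\{-\tfrac12\|(I_n-H)Y-(I_n-H)X_{-1}\beta_{-1}\|_2^2\}\,dMS_{p-1}(\nu,\lambda)$; using $I_n-H=PP^T$ and $P^TP=I_{n-1}$ one has $\|(I_n-H)v\|_2^2=(P^Tv)^T(P^TP)(P^Tv)=\|P^Tv\|_2^2$ for every $v\in\R^n$, so with $v=Y-X_{-1}\beta_{-1}$ this equals $\|\cY-\cW\beta_{-1}\|_2^2$, giving \eqref{eq:density_beta_minus_1_1D} — this is exactly the preprocessing identity displayed just before the lemma. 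For $\beta_1^*$ the relevant factor is $\exp\{-\tfrac12\|HY-X_1\beta_1^*\|_2^2\}\,g(\beta_1^*)\,d\beta_1^*$; since $HY=X_1\big(X_1^TY/\|X_1\|_2^2\big)$ we get $HY-X_1\beta_1^*=X_1\big(X_1^TY/\|X_1\|_2^2-\beta_1^*\big)$ and hence $\|HY-X_1\beta_1^*\|_2^2=\|X_1\|_2^2\big(\beta_1^*-X_1^TY/\|X_1\|_2^2\big)^2$, which is \eqref{density_beta_star_1D}.

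The two special cases are then routine algebra. For the Gaussian prior $g(x)\propto e^{-x^2/(2\sigma_n^2)}$ I multiply by the Gaussian likelihood factor of \eqref{density_beta_star_1D} and complete the square in $\beta_1^*$: the precision is $\|X_1\|_2^2+\sigma_n^{-2}$ and the precision-weighted mean is $X_1^TY$, yielding mean $\tfrac{\sigma_n^2}{\|X_1\|_2^2\sigma_n^2+1}X_1^TY$ and variance $\tfrac{\sigma_n^2}{\|X_1\|_2^2\sigma_n^2+1}$; taking $g\propto1$ (formally $\sigma_n^2\to\infty$) leaves just the Gaussian likelihood factor and gives $\mathcal N\big(X_1^TY/\|X_1\|_2^2,\,1/\|X_1\|_2^2\big)$. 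For the frequentist claim, $\cY=P^TY$ is an affine image of $Y\sim\mathcal N_n(X\beta^0,I_n)$, so $\cY\sim\mathcal N_{n-1}(P^TX\beta^0,P^TP)=\mathcal N_{n-1}(P^TX\beta^0,I_{n-1})$; since the columns of $P$ form an orthonormal basis of $\mathrm{span}(X_1)^\perp$ we have $P^TX_1=0$, whence $P^TX\beta^0=P^TX_{-1}\beta^0_{-1}=\cW\beta^0_{-1}$.

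There is no deep obstacle here: the content is bookkeeping around the reparametrization and the projection identities $PP^T=I_n-H$, $P^TP=I_{n-1}$, $P^TX_1=0$. The only point needing a line of care is that the posterior is genuinely well defined in the improper-prior case (Example~\ref{improper_prior}), i.e.\ that the unnormalized posterior has finite mass; this reduces to checking that \eqref{density_beta_star_1D} with $g\propto1$ integrates in $\beta_1^*$ (it does, being an unnormalized Gaussian) while the $MS_{p-1}(\nu,\lambda)$ factor is already a probability measure, so the product is finite.
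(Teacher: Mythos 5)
Your proposal is correct and follows essentially the same route as the paper: both arguments rest on the observation that the likelihood decomposition \eqref{eq:likelihood_decomp} and the prior \eqref{general_prior} each factorize in the coordinates $(\beta_1^*,\beta_{-1})$, so the posterior is a product measure whose marginals are read off directly (the paper verifies this by factorizing $E(f_1(\beta_1^*)f_2(\beta_{-1})\mid Y)$ for test functions, while you phrase it via the unit-Jacobian change of variables — the same computation). Your extra remark on properness in the improper case is a welcome, if minor, addition.
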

 Lemma \ref{lem:posterior_form} shows that the posterior computations of $\beta_{-1}$ and $\beta_1^*$ decouple, with that of $\beta_{-1}$ following the linear regression model $\cY = \cW \beta^0_{-1} + \check{\eps}$ with model selection prior $MS_{p-1}(\nu, \lambda)$, while $\beta_{1}^*$ follows from a one-dimensional regression model with prior $g$. The first could in principle be tackled using computational tools for sparse priors, such as modern MCMC methods \cite{Martin2017,Griffin2021,Bierkens2023}, while the second is a low-dimensional problem that can be solved for instance analytically or by MCMC.
Lemma \ref{lem:posterior_form} thus provides an efficient way to sample from the posterior of $\beta_1$: (i) sample $\beta_{-1}$ according to \eqref{eq:density_beta_minus_1_1D}, (ii) sample $\beta_1^*$ according to \eqref{density_beta_star_1D}, (iii) compute $\beta_1 = \beta_1^* -\sum_{i = 2}^p\gamma_i\beta_i$.

However, the discrete nature of the model selection prior can make step $(i)$ very expensive computationally, requiring $O(2^{p})$ integrations for each of the possible active sets of $\beta_{-1}$. The discrete and multimodal nature of the posterior also makes posterior sampling challenging in high-dimensions, with MCMC methods known to have problems mixing for typical problem sizes of interest \cite{Griffin2021}. This motivates us to use a variational approximation to the posterior distribution, which provides a computationally scalable alternative. 
 
\subsection{Variational approximation and sampling scheme for $k=1$}\label{sec:variational_approx_1d}
To avoid the computational difficulty of posterior sampling for $\beta_{-1}$, we instead consider a scalable mean-field variational Bayes (MF VB) approximation, under which the model parameters are independent and each follow a spike and slab distribution. Consider the following class of distributions on $\R^{p-k}$:
\begin{align}\label{eq:variational_class}
	\mathcal{Q}_{-k} = \left\{Q_{\mu, \tau, q} =  \bigotimes_{i=k+1}^p \, q_i \, \mathcal{N}(\mu_i, \tau_i^2) + (1-q_i)\delta_0   : q_i \in [0,1], \, \mu_i \in \mathbb{R}, \, \tau_i \in \mathbb{R}^+ \right\},
\end{align}
and define the corresponding variational approximation to $\beta_{-1}$ within the class $\mathcal{Q}_{-1}$ by
\begin{equation}\label{eq:optimization_k=1}
\hat{Q}_{-1}= \argmin_{Q_{-1} \in \mathcal{Q}_{-1}} \operatorname{KL}(Q_{-1} || \Pi_{-1}(\cdot |Y)),
\end{equation}
where $\Pi_{-1}(\cdot | Y)$ is the marginal distribution \eqref{eq:density_beta_minus_1_1D} of $\beta_{-1}$ under the posterior (see below for the general case $k>1$). Such sparsity-inducing MF variational families have been used in a variety of settings \cite{logsdon2010,carbonetto2012,ormerod2017,Ray_Szabo_2020,RaySzaboClara2020,komodromos2022} and the optimization problem \eqref{eq:optimization_k=1} can be solved using coordinate ascent variational inference (CAVI), see \cite{Ray_Szabo_2020}. In particular, \eqref{eq:density_beta_minus_1_1D} in Lemma \ref{lem:posterior_form} shows that the optimization \eqref{eq:optimization_k=1} reduces to the problem of finding the closest element of $\mathcal{Q}_{-1}$ to the posterior (on $\R^{p-1}$) coming from the model selection prior $MS_{p-1}(\nu, \lambda)$ and likelihood from the transformed linear regression model $\cY = \cW \beta^0_{-1} + \check{\eps}$ defined in \eqref{eq:preprocess}. One can thus use the standard CAVI implementation derived in \cite{Ray_Szabo_2020} for linear regression by simply replacing the inputs $(X,Y)$ with $(\cW,\cY)$. In particular, this allows one to directly use any existing software implementation (e.g. \cite{sparsevb}) for this problem without modification after the straightforward preprocessing step \eqref{eq:preprocess}.

We  complete the posterior approximation by using the \textit{exact} posterior distribution for $\beta_1^*| Y$, given in \eqref{density_beta_star_1D}, yielding the overall VB posterior:
\begin{equation}\label{intuitive_approximation}
	\begin{split}
		\beta_{-1} \sim \hat{Q}_{-1}, \qquad  \beta_{1}^* &\sim \pi( \beta_1^*| Y), \qquad \beta_{-1}, \beta_{1}^* \textrm{  independent},\\
            \beta = ( \beta_1, \beta_{-1} ) &= (\beta_{1}^* - \sum_{i = 2}^p \gamma_i \beta_i, \beta_{-1}).
	\end{split} 
\end{equation}
We denote by $\hat{Q}$ the distribution for $\beta$ given by \eqref{intuitive_approximation}, and $\hat{Q}_1$ the marginal distribution of the first coordinate $\beta_1$. Once one has computed $\hat{Q}_{-1}$, one can sample $\beta_1 \sim \hat{Q}_1$ in three steps:
\begin{itemize}
    \item[(i)] sample $\beta_{-1}\sim \hat{Q}_{-1}$,
    \item[(ii)] sample $\beta_1^*\sim\pi(\beta_1^* | Y)$ according to \eqref{density_beta_star_1D}, independently of $\beta_{-1}$,
    \item[(iii)] compute $\beta_1 = \beta_1^* - \sum_{i\geq 2} \gamma_i \beta_i$.
\end{itemize}
The full algorithm for fitting this method and sampling $\beta_1$ is given in Algorithm \ref{alg:sample}. A discussion of its computational cost can be found in Section \ref{sec:computational_cost}.

\begin{algorithm}
\caption{Generating $n_s$ samples of $\beta_1$ from the VB posterior $\hat{Q}_1$}
\label{alg:sample}
\begin{algorithmic}
\Require{$\mathcal{D}_n = (X, Y), \Pi = (MS_{p-1}
(\nu, \lambda), g), n_s$.}
\State Compute $\cY$ and $\cW$:
    \State \hspace{5mm} Compute $\{u_1, \dots, u_{n-1}\}$, an orthonormal basis of $\textrm{span}(X_1)^\perp$ \\
    \hspace{5mm} Define $P = (u_1, \dots, u_{n-1}) \in \R^{n \times (n-1)}$ with columns $u_1, \dots, u_{n-1}$
 \State \hspace{5mm} Compute $\cY = P^TY$, $\cW = P^TX_{-1}$
\State Fit $\hat{Q}_{-1}$ in \eqref{eq:optimization_k=1} by performing CAVI based on $(\cW,\cY)$ and prior $MS_{p-1}(\nu, \lambda)$ [e.g. as in \cite{Ray_Szabo_2020}]
\State Compute $\gamma_i = X_1^TX_i/\|X_1\|_2^2$ for $i = 2, \dots, p$.
\For{$j = 1, \dots, n_s$}
\State Sample $\beta_{-1}^j \sim \hat{Q}_{-1}$
\State Sample $(\beta_1^*)^{j}\sim \pi(\beta_1^* | Y)$ according to \eqref{density_beta_star_1D}, independently of  $\beta_{-1}^j$
\State Compute $\beta_1^j = (\beta_1^*)^{j} - \sum_{i\geq 2} \gamma_i \beta_i^j$
\EndFor
\State
\Return{$\{\beta^j_1\}_{j = 1}^{n_s}$}
\end{algorithmic}
\end{algorithm}

The idea behind our method \eqref{intuitive_approximation} is to combine a fast MF approximation for $\beta_{-1}$ with the exact posterior distribution for $\beta_1^*$, which can be computed using low-dimensional Bayesian computational techniques. Since the posteriors for $\beta_1^*$ and $\beta_{-1}$ decorrelate, and are in fact independent for this specific prior choice by Lemma \ref{lem:posterior_form} above, not only does computation decouple but we may use the exact marginal posterior distribution \eqref{density_beta_star_1D} for $\beta_1^*$ (though not $\beta_1$). 
We then undo the transformation to obtain our parameter of interest $\beta_1 = \beta_{1}^* - \sum_{i = 2}^p \gamma_i \beta_i$, correctly capturing the posterior variance component of $\beta_{1}$ coming from $\beta_{1}^*$. Even though the MF approximation on $\beta_{-1}$ can (and does) underestimate the posterior variance of the second component, this is typically of smaller order leading to a significant improvement over full MF VB.
Indeed, using full MF VB for $(\beta_1,\beta_{-1})$ underestimates the posterior variance for $\beta_1$ leading to poor uncertainty quantification in correlated settings, see Section \ref{sect:simulations} below.

\subsection{The general case $k \geq 1$}
The methodology described above extends naturally to the $k-$dimensional subparameter $\beta_{1:k} = (\beta_1,\dots,\beta_k)^T \in \R^k$ at the cost of slightly more complicated notation. We assume that the columns $X_1, \dots, X_k$ are linearly independent, else the problem is not identifiable. Denoting $\Sigma_k := (X_{1:k}^T X_{1:k})^{-1}$, we define the projection matrix $H_k = X_{1:k}\Sigma_k X_{1:k}^T$ onto $\textrm{span}(X_1, \dots, X_k)$, and a matrix $P_k \in \R^{n \times (n-k)}$, whose columns consist of an orthonormal basis of $\textrm{span}(X_1, \dots, X_k)^\perp$. The likelihood now decomposes as
\begin{equation*}
    \mathcal{L}_n(\beta, Y) \propto \exp\left\{-\frac{1}{2}\left\|H_kY - X_{1:k} \beta_{1:k}^*\right\|_2^2 \right\} \exp\left\{-\frac{1}{2}\|\cY_k - \cW_k \beta_{-k}\|_2^2 \right\}, \label{eq:likelihood_decomp_kD}
\end{equation*}
where $\beta^*_{1:k} := \beta_{1:k} + \Sigma_k X_{1:k}^T X_{-k} \beta_{-k}$, $\cY_k = P_k^T Y$ and $\cW_k = P_k^T X_{-k}$.

For $\nu$ a probability distribution on $\{0, \dots, p-k\}$ , $\lambda>0$ and $g$ a positive density with respect to the Lebesgue measure on $\R^k$, consider the following prior distribution for $\beta \in \R^p$:
 \begin{equation}\label{general_prior_kd}
 	\begin{aligned}
 		\beta_{-k} &\sim MS_{p-k}(\nu, \lambda)\\
 		\beta_{1:k} | \beta_{-k} &\sim g(\cdot + \Sigma_k X_{1:k}^T X_{-k} \beta_{-k}) d \Lambda.
 	\end{aligned}
 \end{equation}
We consider both the Laplace prior $g(x_1, \dots, x_k) \propto \exp \left\{-\frac{1}{\sigma_n}\sum_{i = 1}^k |x_i| \right\}$ and the improper prior $g\propto 1$, but not the multivariate Gaussian distribution for $g$ since it already performs less well in both theory and practice when $k = 1$. As in Lemma \ref{lem:posterior_form}, the prior \eqref{general_prior_kd} gives rise to a posterior under which $\beta_{1:k}^*$ and $\beta_{-k}$ are independent with distributions of the form
\begin{align}
    d\pi(\beta_{-k} | Y) &\propto {e^{-\frac{1}{2} \| \cW_k\beta_{-k} - \cY_k \|^2_2}} dMS_{p-k}(\nu, \lambda)(\beta_{-k}), \label{eq:beta_minus_k_posterior} \\
    d\pi(\beta_{1:k}^* |Y) &\propto e^{-\frac{1}{2} (\beta_{1:k}^* - \Sigma_k X_{1:k}^TY )^T \Sigma_k^{-1} (\beta_{1:k}^* - \Sigma_k X_{1:k}^TY)} g(\beta_{1:k}^*)d\Lambda(\beta_{1:k}^*). \label{eq:beta_k_star_posterior}
\end{align}
Similar to above, we approximate the distribution for $\beta_{-k}$ with the MF family $\mathcal{Q}_{-k}$ given in \eqref{eq:variational_class}:
$$\hat{Q}_{-k}= \argmin_{Q_{-k} \in \mathcal{Q}_{-k}} \operatorname{KL}(Q_{-k} || \Pi_{-k}(\cdot |Y)),$$
for $\Pi_{-k}(\cdot | Y)$ the posterior distribution of $\beta_{-k}$ defined by \eqref{eq:beta_minus_k_posterior}. We then consider the full variational approximation  for $\beta \in \R^p$:
\begin{equation*}\label{intuitive_approximation_k}
	\begin{aligned}
		\beta_{-k} \sim \hat{Q}_{-k}, \qquad \beta_{1:k}^* &\sim \pi(\beta_{1:k}^* | Y), \qquad \beta_{-k}, \beta_{1:k}^* \textrm{  independent},\\
           ( \beta_{1:k}, \beta_{-k} ) &= (\beta_{1:k}^* - \Sigma_k X_{1:k}^T X_{-k} \beta_{-k}, \beta_{-k}),
	\end{aligned} 
\end{equation*}
where $\pi(\beta_{1:k}^* | Y)$ is defined in \eqref{eq:beta_k_star_posterior}. In the following, we denote by $\hat{Q}$ the distribution on $\beta$ given by the last display and $\hat{Q}_{1:k}$ the marginal distribution of $\beta_{1:k}$. Computation of $\hat{Q}_{-k}$ and sampling from $\hat{Q}_{1:k}$ from this variational approximation follows analogously to the discussion for the one dimensional case $k=1$ above.

\section{Simulated and real data studies}\label{sect:simulations}
We assess the numerical performance of our proposed VB method with the improper prior choice $g \propto 1$ (Example \ref{improper_prior}), which we refer to as I-SVB (I for improper), comparing its performance with $(i)$ two standard frequentist methods from Zhang and Zhang \cite{Zhang2014} and Javanmard and Montanari \cite{Javanmard2014} (referred to as ZZ and JM, respectively), and $(ii)$ full mean-field variational inference with a spike-and-slab variational class \cite{Ray_Szabo_2020} (referred to as MF). We found that the Laplace prior for $g$ performs similarly, but we focus on the improper case for simplicity, since this gives a particularly simple expression for the posterior of $\beta_1^* | Y$ (see Lemma \ref{lem:posterior_form}) and does not require hyperparameter tuning.
While one can achieve similar results using a Gaussian prior with large variance, the results are sensitive to this hyperparameter choice and are less robust. For a detailed comparison of the different prior choices, see Section \ref{sec:g_prior}. All of the code to produce these results can be found in our GitHub repo at \url{https://github.com/lukemmtravis/Debiased-SVB/}. We test our method in the following settings:
\begin{enumerate}
\item Random design $X$ with i.i.d. rows $X_{i\cdot}\sim^{iid} \mathcal{N}_p(0,\Sigma)$ where, for some $\rho\in[0,1)$,
\begin{enumerate}
\item $\Sigma=I_p$ with uncorrelated features;
\item $\Sigma=\Sigma_\rho$ the matrix with $(\Sigma_\rho)_{jk}=\rho$ if $j\neq k$ and $1$ otherwise;
\item $\Sigma=\Sigma_\rho^{AR}$  the autoregressive matrix with $(\Sigma_\rho^{AR})_{jk}=\rho^{|j-k|}$ for $1\leq j , k\leq p$.
\end{enumerate}
\item Real covariate data $X$ coming from the {\texttt riboflavin} dataset \cite{riboflavinData} with synthetically generated responses.
\end{enumerate}
For Gaussian random designs, the case $\Sigma=\Sigma_\rho$ has the strongest correlations and is studied in Sections \ref{sec:sims1} and \ref{sec:no_randomization}. To illustrate inference for multi-dimensional parameters, in Section \ref{sec:sims2} we focus on $\Sigma=\Sigma_\rho^{AR}$ for which `oracle' confidence sets are not parallel to the coordinate axes, and where improvement over standard mean-field VB is therefore particularly marked. Section \ref{sec:realdata} considers the real covariate data.

\subsection{One-dimensional inference $k=1$} \label{sec:sims1}
We present here 6 simulation settings representing a variety of scenarios, with additional simulations in Section \ref{sec:additional_simulations}. In each case, we generate the true underlying data from a linear regression model $Y = X\beta^0 + \sigma\eps$ as in \eqref{eq:linear_regression}. We estimate the noise level $\sigma^2$ with the simple estimator $\hat{\sigma}^2 = \|Y - X\hat{\beta}\|_2^2/(n - \hat{s} - 1)$, with $\hat{\beta}$ the cross-validated LASSO and $\hat{s}$ the sparsity of the estimator $\hat{\beta}$, and consider the rescaled data $\tilde{Y} = Y/\hat{\sigma}$ and $\tilde{X} = X/\hat{\sigma}$ as described at the start of Section \ref{sec:methodology}. Note that any suitable method for estimating $\sigma^2$ can be used instead, for instance the R package \texttt{selectiveInference} \cite{reid2016}. We then implement our VB method according to Algorithm \ref{alg:sample}, where the function \texttt{svb.fit} from the package \texttt{sparsevb} \cite{sparsevb} is used to fit $\hat{Q}_{-1}$ using CAVI after the preprocessing step \eqref{eq:preprocess}, and where we compute $\pi(\beta_1^*|Y)$ by conjugacy since $g \propto 1$.

Turning to the other methods, for the full mean-field (MF) VB approach we approximate the posterior arising from placing the model selection $MS_p$ prior on $(\beta_1,\dots,\beta_p)^T$ by the variational family $\hat{Q}_{-0}$ given by \eqref{eq:variational_class}, again using the CAVI implementation in the \texttt{sparsevb} R-package \cite{sparsevb}. We implement the ZZ method by first performing cross-validated LASSO regression of $Y$ on $X$ with the \texttt{glmnet} package to obtain an initial estimator $\hat{\beta}^{(init)}$, then regress $X_1 = X_{-1}\gamma_{-1} + \eta$ to obtain $\hat{\gamma}_{-1}$ and the residual $z_1 = X_1 - X_{-1}\hat{\gamma}_{-1}$, and finally perform the bias correction by computing $\hat{\beta}_1 = \hat{\beta}^{(init)}_1 + z_1^T(Y - X\hat{\beta}^{(init)})/z_1^T x_1$. The JM method is implemented with the code attached to the supplementary material of \cite{Javanmard2014}, though we adapt it so that it only computes an estimate and confidence interval for one coordinate at a time so that the timing is a fair comparison.

We parameterise each scenario by the tuple $(n, p, s_0, \beta_1^0, \beta_j^0, \rho, \sigma^2)$, where $n$ is the number of observations, $p$ is the covariate dimension, $s_0$ is the true sparsity of $\beta^0$, $\beta_1^0$ is the first coordinate of $\beta^0$ (our target for inference), $\beta_0^j$ are the values of the other non-zero entries of $\beta^0$ and $\sigma^2$ is the unknown noise variance. We take each row of $X$ to be multivariate Gaussian $X_{i\cdot} \sim^{iid} \mathcal{N}_p(0,\Sigma_\rho)$ with diagonal entries $(\Sigma_\rho)_{jj}=1$ and off-diagonal entries $(\Sigma_\rho)_{jk} = \rho$ for $j\neq k$, so that $\rho$ represents the correlation between any pair of features. 
For each scenario, we simulate 500 datasets and for each set of observations compute a $95\%-$credible or confidence interval for every method. For the variational methods, we take 1000 samples from the variational posteriors and compute credible intervals based on the empirical quantiles from these samples. The confidence intervals for the frequentist methods are computed as specified in their respective papers. For each method we report: $(i)$ the coverage (the proportion of the intervals containing the true value); $(ii)$ the mean absolute error (MAE) of the centering of the intervals from the truth; $(iii)$ the length of the intervals; and $(iv)$ the mean computation time in seconds. Results are found in Table \ref{Tab:experiments_1D}, including the standard deviations of these indicators where relevant.

We see that I-SVB and MF perform very similarly when there is no correlation between features (scenario $(i)$), with both methods delivering approximately 95\% coverage and smaller MAE than the other methods. Furthermore, both methods deliver this coverage while giving intervals which are smaller than their frequentist counterparts. These methods thus outperform their frequentist counterparts for both estimation and uncertainty quantification in this scenario.

However, when we add correlation to the features (scenarios $(ii)$-$(vi)$), the behaviour of I-SVB diverges from that of MF, with I-SVB maintaining higher coverage than the other methods in these scenarios. The coverage of I-SVB is sometimes higher than the intended 95\% level, hinting that the credible sets could be too large, but we see that the length of the I-SVB credible interval are generally comparable to those of ZZ (excluding $(iv)$), which is likely a result of smaller bias of I-SVB as exhibited by its smaller MAE. As expected, full MF ignores the correlation and thus provides much smaller credible sets in correlated settings, underestimating the posterior variance and leading to under-coverage. Indeed, the MF credible sets are the same length when the experiments are performed with zero correlation (see the additional simulations in Section \ref{sec:additional_simulations}). Aside from MF, all other methods adapt to the added difficulty of the problem under correlation by increasing the length of their intervals.


 We also see that the credible sets from both VB methods can be computed in comparable time to ZZ, and much faster than JM, which requires a relatively expensive optimization routine to estimate $\Sigma^{-1}$. While we adapted the code for JM to compute confidence intervals for a single coordinate at a time to speed up computation, note that its \texttt{R} implementation could perhaps be optimised to be faster. 

We provide additional similar scenarios in Section \ref{sec:additional_simulations} in the appendix, which confirm the above conclusions. These results already demonstrate for $k=1$ how our approach improves upon naive MF inference by providing reliable statistical inference while keeping the computational scalability of MF VB. The overall performance of I-SVB is also competitive or better than its frequentist counterparts.

\begin{table}
\centering
\begin{tabular}{r|r|cccc}
\toprule
Scenario $(n, p, s_0, \beta_1^0, \beta_j^0, \rho, \sigma^2)$      & Method & Cov.  & MAE                        & Length                     & Time\\
\hline
$(i)$                                                             & I-SVB  & 0.952 & \textbf{0.082 $\pm$ 0.059} & 0.403 $\pm$ 0.034          & 0.270 $\pm$ 0.099\\
$(100, 1000, 3, \log n, \log n, 0, 1)$                            & MF     & 0.954 & \textbf{0.082 $\pm$ 0.059} & \textbf{0.396 $\pm$ 0.028} & 0.162 $\pm$ 0.069\\
                                                                  & ZZ     & 0.886 & 0.112 $\pm$ 0.081          & 0.473 $\pm$ 0.223          & 0.271 $\pm$ 0.100\\
                                                                  & JM     & 0.976 & 0.163 ± 0.103              & 0.850 ± 0.077              & 1.899 ± 0.252\\
\hline
$(ii)$                                                          & I-SVB  & 0.940 & \textbf{0.437 ± 0.343}     & \textbf{2.241 ± 0.329}              & 0.391 ± 0.132\\
$(100, 1000, 3, \log n, \log n, 0.5, 16)$                         & MF     & 0.710 & 0.516 ± 0.512              & 1.316 ± 0.299     & 0.320 ± 0.187\\
                                                                  & ZZ     & 0.836 & 0.647 ± 0.473              & 2.824 ± 2.277              & 0.399 ± 0.128\\
                                                                  & JM     & 0.834 & 0.926 ± 0.602              & 3.066 ± 0.731              & 1.480 ± 0.249\\
\hline
$(iii)$                                                          & I-SVB  & 0.992 & \textbf{0.051 $\pm$ 0.038} & \textbf{0.331 $\pm$ 0.024}          & 1.384 $\pm$ 0.238\\
$(400, 1500, 32, \mathcal{N}(0, 1), \mathcal{N}(0, 1), 0.25, 1)$  & MF     & 0.752 & 0.072 $\pm$ 0.064          & 0.196 $\pm$ 0.007 & 0.913 $\pm$ 0.159\\
                                                                  & ZZ     & 0.846 & 0.069 $\pm$ 0.051          & 0.254 $\pm$ 0.017          & 1.396 $\pm$ 0.232\\
                                                                  & JM     & 0.636 & 0.289 ± 0.217              & 0.655 ± 0.13               & 33.673 ± 15.227\\
\hline
$(iv)$                                                           & I-SVB  & 1.000 & \textbf{0.182 $\pm$ 0.131} & 1.872 $\pm$ 0.089          & 0.707 $\pm$ 0.159\\
$(200, 800, 10, \log n, \log n, 0.9, 1)$                          & MF     & 0.014 & 3.629 $\pm$ 2.252          & 0.276 $\pm$ 0.015 & 1.062 $\pm$ 0.356\\
                                                                  & ZZ     & 0.936 & 0.217 $\pm$ 0.171          & \textbf{1.055 $\pm$ 0.082}          & 0.626 $\pm$ 0.128\\
                                                                  & JM     & 0.260 & 1.693 ± 1.337              & 1.441 ± 0.123              & 9.927 ± 0.697\\
\hline
$(v)$  & ISVB & 0.96 & 0.094 ± 0.074 & 0.482 ± 0.046 & 0.378 ± 0.053\\
$(200,1000,5,0,0.5,0.5,1)$ & MF & 1.00 & \textbf{0.000 ± 0.000} & \textbf{0.272 ± 0.013} & 0.289 ± 0.054\\
 & ZZ & 1.00 & 0.072 ± 0.053 & 0.476 ± 0.057 & 0.468 ± 0.061\\
 & JM & 1.00 & 0.031 ± 0.025 & 0.668 ± 0.059 & 3.693 ± 1.120\\
\hline
$(vi)$ & ISVB & 0.994 & \textbf{0.050 ± 0.038} & 0.340 ± 0.027 & 1.214 ± 0.144\\
$(500,1000,10, \mathcal{U}(-1, 1), \mathcal{U}(-1, 1),0.5,1)$ & MF & 0.708 & 0.077 ± 0.071 & 0.175 ± 0.006 & 0.685 ± 0.153\\
 & ZZ & 0.958 & 0.051 ± 0.040 & \textbf{0.273 ± 0.011} & 1.292 ± 0.157\\
 & JM & 0.672 & 0.238 ± 0.155 & 0.622 ± 0.203 & 38.844 ± 17.190\\
\bottomrule\end{tabular}
\caption{Estimation and uncertainty quantification performance for $\beta_1$ using 4 methods. Highlighted in bold are the smallest interval length, subject to coverage being larger than 0.85, and the smallest MAE.}
\label{Tab:experiments_1D}
\end{table}

\subsection{Multi-dimensional inference $k \geq 1$} \label{sec:sims2}

We now consider inference for $\beta_{1:k}$ for general $k\geq 1$. We generate $n_s$ VB posterior samples $\{\beta_{1:k}^i\}_{i = 1}^{n_s}$ of $\beta_{1:k}$, using which we estimate the posterior mean and covariance by
\begin{align*}
    \bar{\beta}_{1:k} := \frac{1}{n_s} \sum_{i = 1}^{n_s} \beta_{1:k}^i, \qquad 
    \hat{\Theta}_{1:k} := \frac{1}{n_s} \sum_{i = 1}^{n_s} (\beta_{1:k}^i - \bar{\beta}_{1:k})(\beta_{1:k}^i - \bar{\beta}_{1:k})^T.
\end{align*}
We then compute an approximate $\gamma-$credible set
\begin{equation}\label{eq:cred_set_multi_d}
C_\gamma := \left\{v \in \mathbb{R}^k : (v_k - \bar{\beta}_{1:k})^T\hat{\Theta}_{1:k}^{-1}(v_k - \bar{\beta}_{1:k}) \leq \chi_{\gamma,k}\right\},    
\end{equation}
where $\chi_{\gamma, k}$ is defined by $P(\chi_k^2 \leq \chi_{\gamma, k}) = \gamma$ for $\chi_k^2$ a chi-squared distribution with $k$ degrees of freedom. The method provided in Javanmard and Montanari \cite{Javanmard2014} also extends naturally to the multi-dimensional setting $k\geq 1$ and results in a set of the form \eqref{eq:cred_set_multi_d} above, but with $\bar{\beta}_{1:k}$ replaced by their estimator and $\hat{\Theta}_{1:k}$ replaced by $\frac{\hat{\sigma^2}}{n} (M\hat{\Sigma} M^T)_{1:k, 1:k}$, with $M$ the relevant estimate of $\Sigma^{-1}$.

\textbf{Two-dimensional visualization.} To understand the behaviour of the methods, it is helpful to visualise these sets for I-SVB, MF and JM in a $2$-dimensional example. Figure \ref{fig:credible_regions_example} plots a realisation of the credible/confidence regions and estimates from the three methods in three scenarios. We take $n = 200$, $p = 400$, and define a parameter $\beta^0$ with sparsity $s_0 = 10$ and non-zero coordinates given by $\beta_i^0 = 5$. We set the first two coordinates of $\beta^0$ to be non-zero, and distribute the remaining 8 non-zero coordinates uniformly throughout $\beta^0$. The rows of the design are distributed as $X_{i\cdot} \sim^{iid} \mathcal{N}_p(0, \Sigma_\rho^{AR})$, where the elements of the (autoregressive) covariance matrix are given by $[\Sigma_\rho^{AR}]_{ij} = \rho^{|i-j|}$. We vary the correlation between the scenarios, with each facet title giving the value of $\rho$.
For comparison, we also plot {\it Oracle} confidence sets corresponding to the low-dimensional linear regression model $Y = X_{S_0} \beta_{S_0} + \eps$, where $S_0$ is the (unknown) support of the true $\beta^0$. The oracle estimate of the full parameter $\beta$ is given by $\hat{\beta}^o_{S_0} := (X_{S_0}^T X_{S_0})^{-1} X_{S_0}^T Y$ on $S_0$, and is a point estimate $\hat{\beta}^o_{S_0^C} = (0,\dots, 0)$ outside of $S_0$. On $S_0$ the oracle covariance matrix is $\Theta_{{S_0},{S_0}}^o = (X_{S_0}^T X_{S_0})^{-1}$, while outside of $S_0$ the covariance matrix is 0.
The oracle sets are then of the form \eqref{eq:cred_set_multi_d} centered at $\hat{\beta}^o_{1:k}$ and with covariance matrix $\Sigma^o_{1:k, 1:k}$. Note that this is \textit{not} a valid statistical method since the oracle can not be computed in practice as $S_0$ is unknown, but it provides a useful benchmark since one essentially cannot do better than this.

In the first scenario with no correlation (left, $\rho = 0$), all the sets have a similar circular shape and contain the truth (pink point). Moreover, MF and I-SVB provide very similar estimates and sets as expected, since there is no underlying correlation. We see there is also little difference between these credible sets and the Oracle confidence region, suggesting they are performing very well in this scenario. JM produces a slightly larger confidence set due to the presence of an inflation factor in the application of their method \cite{Javanmard2014}. 

However, as we add correlation between the features, the behaviour of the methods diverge. In correlated settings, we see the well-known issue \cite{Bishop2006,Blei2017} that MF VB both underestimates the marginal variances and fails to capture the correlation, leading to credible sets that are smaller and poorly oriented compared to the Oracle. In the second scenario (middle, $\rho=0.5$) the set still covers the truth, but in the third scenario (right, $\rho = 0.95$) it does a poor job of quantifying the uncertainty. In these scenarios, JM increases the size of the confidence sets to account for the added difficulty of the correlation, but it does not seem to pick up the correct covariance structure, leading to much larger confidence regions (though at least covering the truth). In contrast, the I-SVB credible sets seem to capture the correlation structure much more effectively, mirroring the oracle, resulting in sets that are smaller while also covering the truth. Figure \ref{fig:credible_regions_example} demonstrates visually that unlike MF VB, our method I-SVB can correctly pick up (it is close to the oracle) the posterior correlation structure of $\beta_{1:k}$ and provide good estimates of the posterior uncertainty, leading to much improved uncertainty quantification in correlated settings. In Section \ref{sec:heuristics}, we provide heuristics to explain why I-SVB appears to perform fairly similarly to the oracle, even in this strongly correlated scenario. Not only is the coverage close to that of the oracle, but  the shape of the credible set also nearly matches the oracle shape for AR covariances, as can be predicted from the heuristics.

To better understand why I-SVB seems to approach the Oracle covariance, we  plot the components of the covariance of $\beta_{1:2}$ under the variational posterior. Recall that in 2 dimensions, the I-SVB posterior is characterised by $\beta_{1:2} = \beta_{1:2}^* - (X_{1:2}^T X_{1:2})^{-1}X_{1:2}^T X_{-2} \beta_{-2}$, where the distributions of $\beta_{1:2}^*$ and $\beta_{-2}$ are independent. In Figure \ref{fig:cov_structure_AR}, we plot the ellipses $\{v\in \R^2 : v^T M^{-1} v = \chi_{0.95, 2} \}$ with $M$ given by $(i)$ the I-SVB covariance of $\beta_{1:2}$, $(ii)$ the I-SVB covariance of $\beta_{1:2}^*$, $(iii)$ the I-SVB covariance of $(X_{1:2}^T X_{1:2})^{-1}X_{1:2}^T X_{-2} \beta_{-2}$, and $(iv)$ the Oracle covariance of $\beta_{1:2}$. We see that the covariance contribution from the `debiasing' quantity $(iii)$ alters the covariance of $\beta_{1:2}^*$, which is too small, to give a covariance structure for $\beta_{1:2}$ which is closer to the oracle covariance structure. Note this is not a simple inflation factor, since this debiasing occurs by different amounts in different directions, and becomes more pronounced as the feature correlation increases.
 Similar plots with the covariance between features given by the matrix $\Sigma_\rho$ are given in Figures \ref{fig:credible_regions_full_corr}--\ref{fig:cov_structure_full_corr} in the supplementary material.
Some heuristics about the size of the `debiasing' term can be found in Section \ref{sec:heuristics}.

\begin{figure}
    \centering
    {\bf Realisations of 2D credible regions}
    \includegraphics[width=0.9\linewidth]{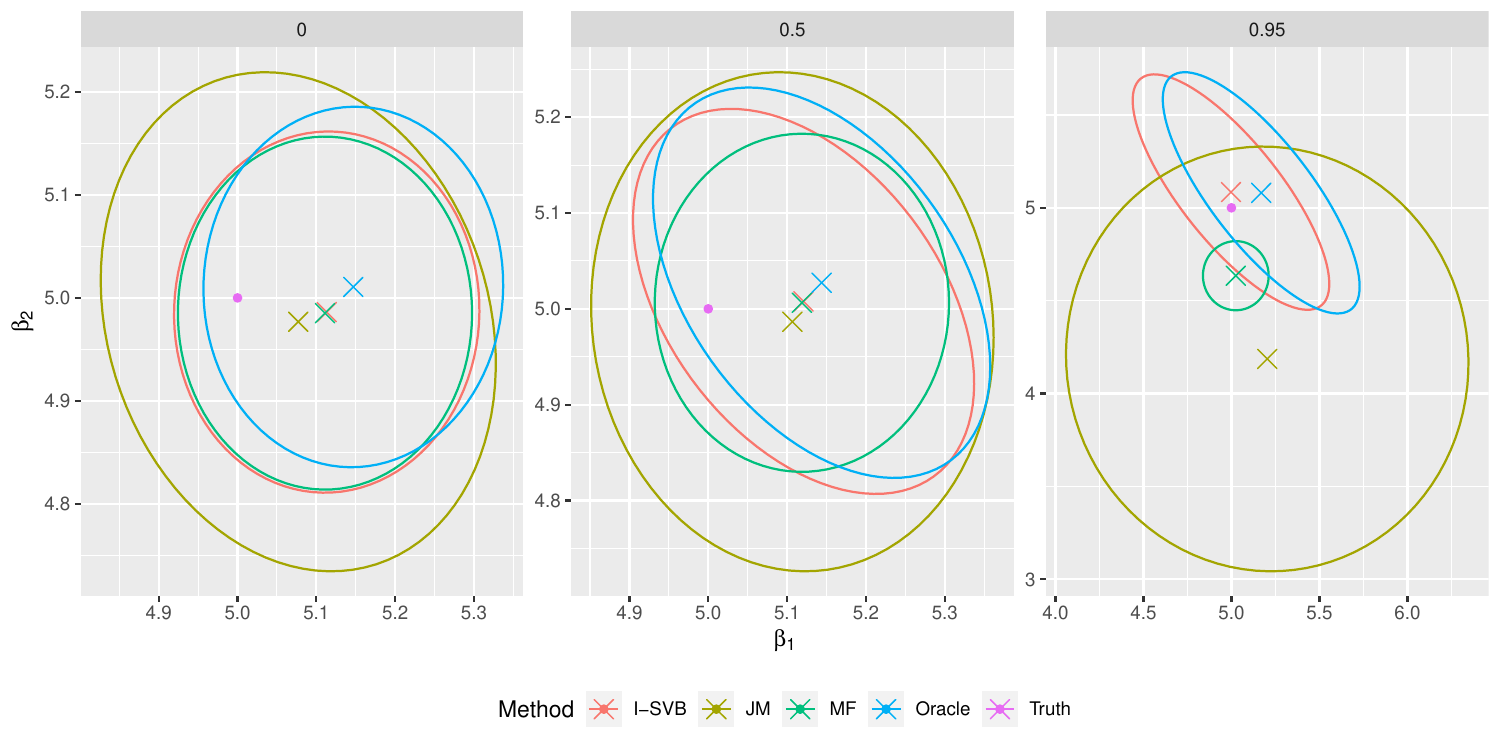}
    \caption{95\%-credible or confidence regions in three scenarios with design matrix rows $X_{i\cdot}\sim^{iid} \mathcal{N}_p(0, \Sigma_\rho^{AR})$ for increasing values of $\rho$ given in the title of each facet. The interior of the ellipses represent the credible or confidence regions, the crosses mark their centering and the pink point is the true value of $(\beta_1, \beta_2)$. Each scenario has $n=200, p=400, k=2, s_0=10$ and $\beta^0_i=5$.}
    \label{fig:credible_regions_example}
\end{figure}

\begin{figure}
    \centering
    {\bf Covariance structure of the components of $\beta_{1:2}$}
    \includegraphics[width=0.9\linewidth]{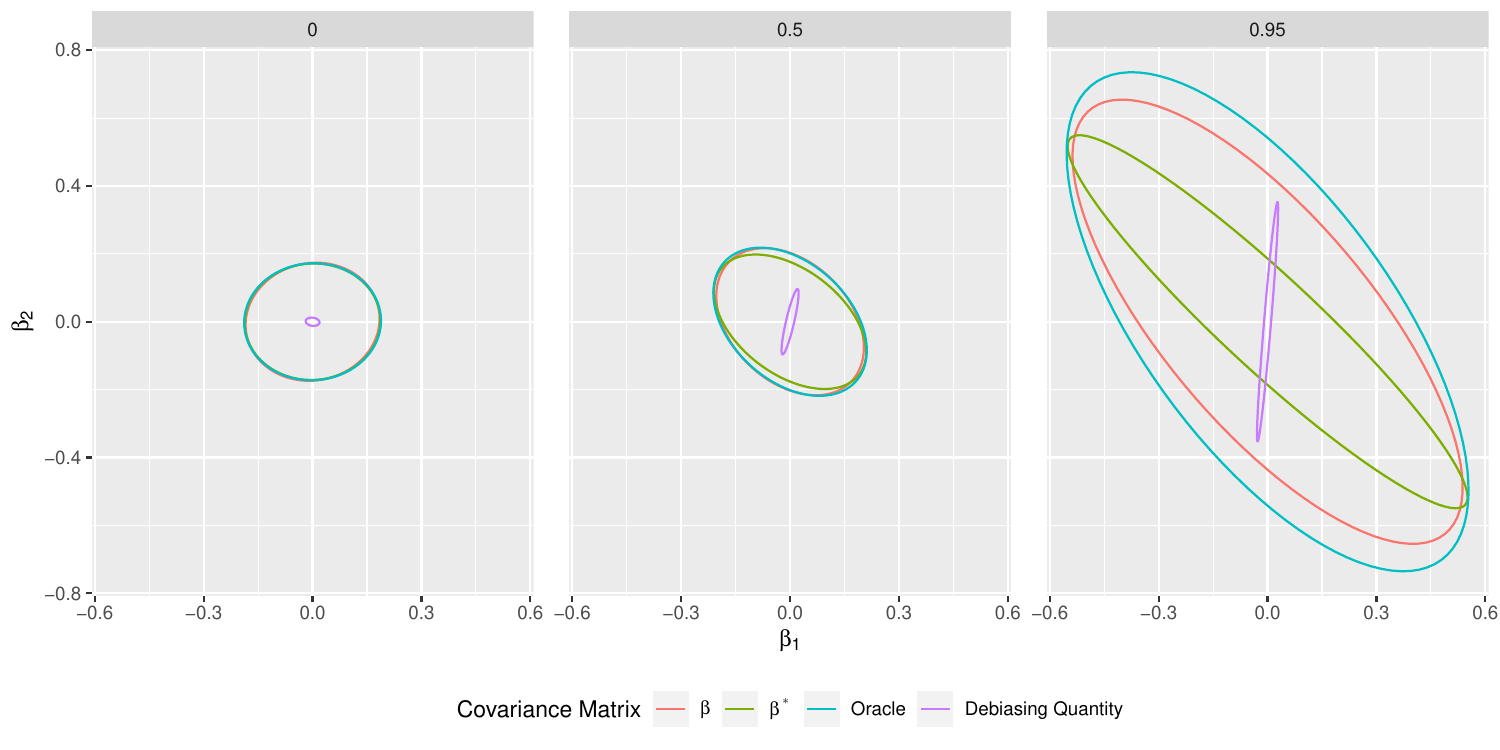}
    \caption{Realisations of the sets $\{v\in \R^2 : v^T M^{-1} v = \chi_{0.95, 2} \}$ with $M$ given by the covariance matrices of $\beta_{1:2}$, $\beta_{1:2}^*$, the Oracle, and the debiasing quantity ($(X_{1:2}^TX_{1:2})^{-1}X_{1:2}^TX_{-2}\beta_{-2}$). The design matrix rows are distributed as $X_{i\cdot}\sim^{iid} \mathcal{N}_p(0, \Sigma_\rho^{AR})$ for increasing values of $\rho$ given in the title of each facet. Each scenario has $n=200, p=400, k=2, s_0=10$ and $\beta^0_i=5$.}
    \label{fig:cov_structure_AR}
\end{figure}

{\bf Numerical comparison in the multi-dimensional case $k\geq 2$.} We now numerically compare these methods in a more systemic fashion, still in the case $\Sigma=\Sigma_\rho^{AR}$. We compute the coverage as before, and measure the size of the sets via their $k-$dimensional volume, which is proportional to $\sqrt{\prod_{j = 1}^k \lambda_j}$  where $\{\lambda_i\}_{i=1}^k$ are the eigenvalues of the covariance matrix defining the regions (e.g. $\hat{\Theta}_{1:k}$ in \eqref{eq:cred_set_multi_d}). To facilitate comparison, when the Oracle gives a non-zero volume we report the relative volume of each method compared to the Oracle (so that in these cases the Oracle has relative volume 1). In some scenarios, where the true parameter of interest contains a 0, the Oracle set is a lower dimensional subspace of $\R^{k}$, and so has volume 0; in these cases we report the relative volume of each method to I-SVB. We also report the average $L_2-$error of the estimators (i.e. region centers) over the Monte Carlo simulations.

Each scenario is parametrized by the tuple $(n, p, k, s_0, \beta^0_j, \rho)$, where all parameters apart from $k$ have the same meaning as before, but we are now interested in $k-$dimensional credible/confidence regions for $\beta_{1:k}$. We also specify the true value of $\beta_{1:k}^0$ in each scenario, and distribute the remaining non-zero coordinates uniformly throughout the nuisance parameter $\beta_{-k}^0$, where the values of the non-zero coordinates are specified by $\beta_j^0$.
We again simulate 500 realisations of a dataset and compute the credible or confidence sets for each method. Table \ref{Tab:results_multi_d} contains the results.

With no correlation (scenario $(i)$), MF and I-SVB have similar performance to the Oracle --- in each case we observe similar coverage, $L_2-$error and relative volume close to 1. The JM method likewise achieves good coverage in these scenarios, but with a much larger relative volume to compensate for a larger $L_2-$error in the estimate. The JM method also takes longer to run due to the expensive estimation of $\Sigma^{-1}$. 

It is when we add correlation between the features (scenarios $(ii) -(vi)$) that we observe a difference in the behaviour of the non-Oracle methods, similar to what we expect in light of Figure \ref{fig:credible_regions_example}. In these scenarios, I-SVB maintains approximately the target coverage, while the coverage achieved by MF and JM drops significantly. Remarkably, we see that the volume of the I-SVB sets is at most 50\% larger than the Oracle, and in the higher dimensional scenarios with larger $n$ only about 30\% larger. The JM method struggles due to a larger bias in the estimate --- it tries to compensate with a much larger relative volume, but this is not enough to increase the coverage. For full MF, the problem is once again that the credible regions are too small, with the relative volume of credible regions plummeting as we increase $k$ and $\rho$, leading to significant underestimates of the uncertainty. We also see that in scenarios $(iv)$ and $(vi)$, where some of the coordinates of $\beta_{1:k}^0$ are zero, I-SVB maintains good coverage while providing a volume which is significantly smaller than JM, suggesting that I-SVB is concentrating on a lower dimensional subspace of $\R^k$ as the Oracle does. 
In summary, I-SVB appears to provide good (scalable) estimation and coverage for $\beta_{1:k}$, while also accurately capturing the underlying correlation, leading to smaller and more informative credible sets.

\begin{table}
\centering
\resizebox{\columnwidth}{!}{
\begin{tabular}{r|r|cccc}
\toprule
Scenario   $(n, p, k, s_0, \beta^0_j, \rho)$  & Method & Cov.  & $L_2-$error            & Rel. Volume            & Time\\
\hline
$(i)$                                         & I-SVB  & 0.960 & \textbf{0.091 ± 0.048} & 1.007 ± 0.081          & 0.301 ± 0.107\\
$(200, 400, 2, 10, \log n, 0)$                & MF     & 0.946 & \textbf{0.091 ± 0.048} & \textbf{0.951 ± 0.071} & 0.162 ± 0.080\\
$\beta^0_{1:2} = (\log n, \log n)$            & JM     & 0.952 & 0.112 ± 0.065          & 1.843 ± 0.314          & 1.012 ± 0.163\\
                                              & Oracle & 0.948 & 0.096 ± 0.065          & 1.000 ± 0.073          & - \\
                                   \hline
$(ii)$                                        & I-SVB  & 0.966 & 0.127 ± 0.065          & \textbf{1.517 ± 0.118} & 0.206 ± 0.040\\
$(200, 400, 2, 10, \log n, 0.5)$              & MF     & 0.790 & \textbf{0.125 ± 0.064} & 0.532 ± 0.043          & 0.309 ± 0.071\\
$\beta^0_{1:2} = (\log n, \log n)$            & JM     & 0.742 & 0.342 ± 0.527          & 2.975 ± 0.402          & 6.526 ± 0.902\\
                                              & Oracle & 0.950 & 0.129 ± 0.068          & 1.000 ± 0.069          & -\\
                                   \hline
$(iii)$                                       & I-SVB  & 0.976 & 0.130 ± 0.046          & \textbf{1.397 ± 0.122} & 0.651 ± 0.106\\
$(400, 800, 4, 10, \log n, 0.5)$              & MF     & 0.734 & \textbf{0.129 ± 0.046} & 0.305 ± 0.029          & 1.015 ± 1.652\\
$\beta^0_{1:4} = (\log n, \dots, \log n)$     & JM     & 0.430 & 0.308 ± 0.328          & 4.194 ± 0.979          & 32.808 ± 7.427\\
                                              & Oracle & 0.972 & 0.123 ± 0.047          & 1.000 ± 0.080          & -\\
                                   \hline
$(iv)$                                        & I-SVB  & 0.970 & \textbf{0.110 ± 0.040} & \textbf{1.000 ± 0.091} & 0.982 ± 0.119\\
$(400, 800, 4, 10, \log n, 0.25)$             & MF     & 0.842 & 0.342 ± 1.115          & 0.394 ± 0.028          & 2.244 ± 0.423\\
$\beta^0_{1:4} = (0, 0, \log n, \log n)$      & JM     & 0.500 & 0.498 ± 0.535          & 8.727 ± 9.590          & 31.551 ± 2.089\\
                                              & Oracle & 0.990 & 0.072 ± 0.038          & 0.000 ± 0.000          & -\\
                                   \hline
$(v)$                                         & I-SVB  & 0.966 & \textbf{0.163 ± 0.052} & \textbf{1.325 ± 0.177} & 1.591 ± 17.545\\
$(400, 1000, 6, 12, \log n, 0.5)$             & MF     & 0.616 & 0.246 ± 0.859          & 0.157 ± 0.016          & 7.984 ± 63.167\\
$\beta^0_{1:6} = (\log n, \dots, \log n)$     & JM     & 0.144 & 0.474 ± 0.551          & 7.304 ± 1.964          & 32.330 ± 2.805\\
                                              & Oracle & 0.936 & 0.166 ± 0.054          & 1.000 ± 0.088          & -\\
                                    \hline
$(vi)$                                        & I-SVB  & 0.950 & 0.366 ± 0.111          & \textbf{1.000 ± 0.157}          & 0.868 ± 0.148\\
$(400, 1000, 6, 12, \mathcal{U}(-5, 5), 0.9)$ & MF     & 0.576 & 0.175 ± 0.136          & 0.001 ± 0.000          & 0.531 ± 0.196\\
$\beta^0_{1:6} = (0, \dots, 0)$               & JM     & 0.180 & \textbf{0.148 ± 0.090}          & 265.69 ± 5538.8        & 29.079 ± 11.735\\
                                              & Oracle & - & -          & -          & -\\
\bottomrule
\end{tabular}}
\caption{Estimation and uncertainty quantification performance for $\beta_{1:k}$ using 3 methods and the Oracle approach. Highlighted in bold are the smallest relative volume (compared to the Oracle) of the sets, subject to coverage being larger than 0.85, and the smallest mean $L_2$-error. We do not highlight the Oracle since it is \textit{not} a valid method as it depends on the unknown truth, while in (iv) it reduces to the point estimate at 0, hence we omit it.}
\label{Tab:results_multi_d}
\end{table}

\subsection{Real covariate example: riboflavin dataset} \label{sec:realdata}
We conclude by using real covariates from the {\texttt riboflavin} dataset \cite{riboflavinData} with simulated responses. The dimension of the covariate data $X$ is $n = 71$ and $p = 4088$, and the pairwise correlation between features ranges from close to -1 to close to +1. We sample 500 realisations of the dataset $Y = X\beta^0 + \epsilon$ (holding $X$ and $\beta^0$ fixed) computing the corresponding $95\%$-credible/confidence regions for each method. We take $k=1$ and consider as target parameter $\beta_j$, i.e. we do not necessarily take $j=1$. Since the design is fixed, the different covariate columns of $X$ have very different structures and hence the performance of each method depends on this choice. To provide a fair comparison, we sample a random subset of the coordinates of size 100, compute estimates of the coverage for each method on each coordinate using 500 realisations of the data, and report the mean performance over the 100 coordinates. Results are given in Table \ref{Tab:real_data}, with the Oracle included for reference.

In the first example, we set both $\beta_j^0 = \log n$ and the other active coordinates $\beta_k^0 = \log n$. In this case, we see that I-SVB and ZZ both offer good coverage, with the I-SVB coverage slightly below the intended level. Of the non-oracle methods, I-SVB offers the smallest MAE, and the credible intervals produced by this method are nearly the same size as the Oracle intervals, showing good uncertainty quantification. The mean-field method struggles in this scenario, as the complex correlation structure between features throws off the modelling of the nuisance parameter and the credible intervals are too small. The ZZ method offers high coverage, but the confidence sets produced by this method are very wide, and the estimate itself is more biased than for I-SVB. The JM method struggles due to a large bias and the confidence intervals have low coverage despite being quite large.

In the second example, we set the coordinate of interest $\beta_j^0 = 0$, with the other active coordinates given by $\beta_k^0 = \log n$. In this case, the I-SVB coverage drops, though not too dramatically. The MF method performs very well in this scenario, which is understandable since  the prior contains a spike at 0 and it can often detect that the truth is zero; the MAE is very small and the credible intervals are the smallest of the non-oracle methods. The ZZ method once again yields very high coverage, but it is again conservative due to the very long intervals. Likewise, the JM method offers high coverage, but with long credible intervals.

\begin{table}
\centering
\begin{tabular}{l|r|cccc}
\toprule
Scenario  $(n, p, s_0 , \beta_j^0 , \beta_k^0)$ & Method & Cov.  & MAE           & Length        & Time\\
\midrule
{\texttt riboflavin} $(i)$              & I-SVB  & 0.905 & \textbf{0.159 ± 0.127} & \textbf{0.614 ± 0.021} & 2.129 ± 0.275\\
$(71 , 4088 , 5 , \log n , \log n )$ & MF     & 0.262 & 2.078 ± 0.931 & 0.463 ± 0.002 & 1.828 ± 0.422\\
                                        & ZZ     & 0.971 & 0.422 ± 0.250 & 4.691 ± 1.465 & 0.656 ± 0.162\\
                                        & JM & 0.300 & 2.430 ± 0.135 & 1.907 ± 0.103 & 22.573 ± 3.304\\
                                        & Oracle & 0.944 & 0.125 ± 0.093 & 0.613 ± 0.000 & -\\
\hline
{\texttt riboflavin} $(ii)$             & I-SVB  & 0.857 & 0.196 ± 0.154 & 0.646 ± 0.022 & 3.209 ± 0.511\\
$(71 , 4088 , 5, 0, \log n)$         & MF     & 0.976 & \textbf{0.042 ± 0.032} & \textbf{0.450 ± 0.005} & 4.884 ± 7.862\\
                                        & ZZ     & 1.000 & 0.174 ± 0.131 & 4.664 ± 1.380 & 2.650 ± 4.094\\
                                        & JM & 0.992 & 0.078 ± 0.033 & 1.408 ± 0.15 & 14.177 ± 2.483\\
                                        & Oracle & 1.000 & 0.000 ± 0.000 & 0.000 ± 0.000 & -\\
\bottomrule
\end{tabular}
\caption{Estimates of the coverage, MAE and length of the credible intervals produced using simulated data with the covariates from the {\texttt riboflavin} dataset. Highlighted in bold are the smallest interval length, subject to coverage being larger than 0.85, and the smallest MAE.}
\label{Tab:real_data}
\end{table}

\subsection{Summary of data studies}

Our proposed I-SVB method provides uncertainty quantification when estimating a given number of coordinates in high-dimensional regression. It does so in a reliable way, significantly improving upon standard mean-field Variational Bayes. Notably, the I-SVB method 1) avoids the typical undercoverage generally reported for the standard mean-field method; 2) is available for multi-dimensional parameters, where in two dimensions it even comes close to the oracle optimal ellipsoidal `shape' expected when the covariance matrix is of AR type.  I-SVB is competitive with several well-established methods: in our study the best other method is ZZ for $1$ dimensional parameter of interest, and JM for $k\ge 2$ dimensional targets, compared to which I-SVB performs as follows: 
\begin{itemize}
\item for one dimensional target parameter $(k=1)$ and a $\Sigma_\rho$ correlation matrix, the I-SVB method is overall best for moderate or large signals, while slightly conservative compared to ZZ for low signals with slightly larger credible sets; 
\item for $k\ge 2$, the I-SVB method is overall best, for  $\Sigma=\Sigma_{AR}$ as well as $\Sigma=\Sigma_\rho$ (not reported above, see Section \ref{sec:additional_simulations}, Figures \ref{fig:credible_regions_full_corr}--\ref{fig:cov_structure_full_corr} in the appendix), with in the case of covariance $\Sigma_{AR}$ a shape of the confidence set nearly matching the oracle.  
\end{itemize}
In the next section, we provide theoretical support for I-SVB when the features are not too correlated; for strong correlation, we explain in Section \ref{sec:heuristics} that the phenomena for the shape of I-SVB confidence sets  described above can in fact also be quite precisely predicted through heuristic calculations.

\section{Theoretical guarantees}\label{sec:BvM_VB_1d}

We now establish asymptotic theoretical guarantees and optimality of our method when the features $X_i$ are not too correlated, in a sense made precise in \eqref{assum::design_matrix} below. 
We work in the frequentist framework, where we assume there is a true parameter $\beta^0$ generating the data according to model \eqref{eq:linear_regression}, and derive a Bernstein-von Mises (BvM) theorem for our VB posterior for $\beta_{1:k}$ in the asymptotic regime $n, p \rightarrow \infty$.  We take $\sigma=1$ and again first describe the case $k=1$ for clarity.

\subsection{Bernstein-von Mises for $k=1$}

Write $\mathcal{L}_{\hat{Q}}(\|X_1\|_2(\beta_{1} - \hat{\beta}_1))$ for the marginal variational posterior of $\|X_1\|_2(\beta_{1} - \hat{\beta}_1)$,  where $\hat{\beta}_1$ is any random sequence satisfying
\begin{equation}\label{eq:efficient_centering}
\hat{\beta}_1 = \beta^0_1 + \frac{X_1^T \eps}{\|X_1\|_2^2} + o_{P_0}\left(\frac{1}{\|X_1\|_2}\right).
\end{equation}
We say the \textit{semiparametric Bernstein-von Mises (BvM)} holds for $\hat{Q}_1$ if
\begin{align}\label{def:semiparam_BvM_1d}
	d_{BL}\left( \mathcal{L}_{\hat{Q}}(\|X_1\|_2(\beta_{1} - \hat{\beta}_1)), \mathcal{N}(0,1) \right) \xrightarrow{P_0} 0,
\end{align}
as $n\to\infty$, where $d_{BL}$ is the bounded Lipschitz metric between probability distributions on $\R$ (see Chapter 11 of \cite{Dudley2002} for a precise definition). Several frequentist estimators have been shown to satisfy the expansion \eqref{eq:efficient_centering} \cite{Zhang2014,JM2018,DY2019}, which mirrors the classic notion of semiparametric efficiency in settings with low correlation among columns of $X$. For instance, for random $X$ with i.i.d. $\mathcal{N}(0,1)$ entries, $\|X_1\|_2/\sqrt{n}\to 1$ with high probability and it is shown in \cite{jvdg18} that $1/\sqrt{n}$ is in this setting the `best possible' (in either Cramer-Rao-type or Le Cam sense) variance for estimating $\beta_1$. In settings featuring {stronger correlation in $X$, still in the random design case with $X_{i\cdot}\sim\mathcal{N}(0,\Sigma)$ i.i.d.,  it can be shown (\cite{jvdg18}, Section 8) that under some conditions the `best possible' variance is more generally $(\Sigma^{-1})_{11}$. Proving theory for Bayesian methods in more correlated settings is a challenging open problem that is beyond the scope of this paper, but is an interesting avenue for future work.

Equation \eqref{def:semiparam_BvM_1d} states that the marginal VB posterior for $\beta_1$ is asymptotically Gaussian, is centered at an estimator verifying \eqref{eq:efficient_centering} and has variance equal to the frequentist variance of $\hat{\beta}_1$ (in contrast to full MF VB). An especially important implication of such a result is that standard (e.g. quantile-based) credible intervals for $\beta_{1}$ computed from the VB posterior are asymptotic confidence intervals of the right level. This provides a theoretical guarantee that both estimation and uncertainty quantification for $\beta_{1}$ based on our VB approach is not only computationally scalable, but also (asymptotically) reliable. 


As discussed above, for the sparse nuisance term $\beta_{-1}$, computation is achieved by considering the transformed linear regression model $\cY = \cW \beta^0_{-1} + \check{\eps}$ defined in \eqref{eq:preprocess}. For our theoretical results, we must therefore translate the conditions required for sparse (VB) posteriors to behave well from the usual regression setting $(X,Y)$ to $(\cW,\cY)$. We first require a mild condition on the hyperparameter $\lambda$ in the slab distribution of the model selection prior in Definition \ref{def:sparse_prior}:
\begin{align}\label{assum::prior_2}
	\frac{\| \cW \|}{p-1} \leq &\lambda \leq 2 \bar{\lambda}, \qquad \bar{\lambda}=2\|\cW\|\sqrt{\log (p-1)}.
\end{align}
This assumption is standard \cite{CS-HV2015,Ray_Szabo_2020} and rules out large values of $\lambda$ which may shrink many coordinates $\beta_i$ in the slab towards zero, which we want to avoid since our prior instead induces sparsity via $\nu$. In practice, one would usually to take $\lambda$ fixed or $\lambda \to 0$.

The sparse parameter $\beta$ in \eqref{eq:linear_regression} is not identifiable without assumptions on the design matrix $X$. We make analogous assumptions on $\cW$, and discuss how these relate to conditions on the original design $X$ in Section \ref{sec::Additional_result}. The following assumptions are standard in the high-dimensional statistics literature \cite{Buhlmann2011} and their exact formulation are taken from \cite{CS-HV2015}, see Section 2.2 of \cite{CS-HV2015} for further discussion.

\begin{definition}[Compatibility]\label{def:compatibility_number}
	For a design matrix $X \in \R^{n \times p}$, a model $S \subset \{1, \dots, p\}$ has compatibility number
	$$
	\phi^X(S) := \inf \left\{\frac{\|X\beta\|_2 |S|^{1/2}}{\|X\|\|\beta_S\|_1} : \beta \in \R^p, \|\beta_{S^c}\|_1 \leq 7 \|\beta_S \|_1, \beta_S \neq 0  \right\}.
	$$
\end{definition}

\begin{definition}
    The smallest sparse scaled singular value of dimension $s$ is
    $$
    \tilde{\phi}^X(s) := \inf \left\{\frac{\|X\beta\|_2}{\|X\|\|\beta\|_2} : \beta \in \R^p, 0 \neq |S_\beta| \leq s \right\}.
    $$
\end{definition}
The above definitions imply a kind of invertibility of $X$ over sparse or approximately sparse vectors. The number 7 in Definition \ref{def:compatibility_number} is not important and is taken in Definition 2.1 of \cite{CS-HV2015} to provide a specific numerical value; since we use one of their results, we employ the same convention. Such compatibility type constants are bounded away from zero for many standard designs, such as diagonal matrices, orthogonal designs, i.i.d. (including Gaussian) random matrices and matrices satisfying the `strong irrepresentability condition', see \cite{CS-HV2015,Ray_Szabo_2020}. We shall require these quantities to be bounded away from zero for $X$ replaced by $\cW$ and $s$ a multiple of the true model size. For $M > 0$, define
\begin{align*}
    \tilde{\psi}^X_M(S) &:= \tilde{\phi}\left(\left(2 + \frac{4M}{A_4} \left( 1 + \frac{16}{\phi^X(S)^2} \frac{\lambda}{\bar{\lambda}} \right) \right) |S| \right),
\end{align*}
where $\lambda,\bar{\lambda}$ satisfy \eqref{assum::prior_2} and $A_4$ is the prior constant in \eqref{assum::prior_1}. In the typical case $\lambda \ll \bar\lambda$, the constant in the last display will be asymptotically bounded from below by $\tilde{\phi}((2+4M/A_4)|S|)$. We will require that for some $\rho_n\rightarrow \infty$ and $c_0 > 0$, the $s_0$-sparse (sequence of) vectors $\beta^0 \in \R^p$ satisfy
\begin{align}\label{assum::beta_0}
\phi^{\check{W}}(S_{\beta^0_{-1}}) \geq c_0, \qquad \tilde{\psi}^{\check{W}}_{\rho_n}(S_{\beta^0_{-1}}) \geq c_0, \qquad s_0=o(n)
\end{align}
as $n,p\to\infty$. Such conditions are typically implied by the same conditions on $X$ instead of $\cW$, see Lemma \ref{lem:compatibility_number}. We now state our main theoretical result.


\begin{theorem}\label{thm:asymptotic_normality_variational_1D} Let $\Pi$ be the prior \eqref{general_prior} satisfying \eqref{assum::prior_1}  and\eqref{assum::prior_2}. Suppose that $\beta^0 \in \R^p$ satisfies \eqref{assum::beta_0}  for some $\rho_n\to \infty$, that $\lambda = O(\|\cW\|\sqrt{\log  (p-1)}/s_0)$  and $\log g$ is $c$-Lipschitz with $c=o(\|X_1\|_2)$. If the design matrix satisfies
\begin{align}\label{assum::design_matrix}
	\frac{ \|X_1\|_2 \max_{i = 2, \dots, p} |\gamma_i|}{\max_{i=2, \dots, p}\|(I-H)X_{i}\|_2} \rho_n s_0 \sqrt{\log p} \rightarrow 0,
\end{align}
then the semiparametric BvM holds for the VB posterior $\hat{Q}_{1}$.	
\end{theorem}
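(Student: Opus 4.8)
The plan is to exploit the structure of the VB posterior $\hat Q$ given by \eqref{intuitive_approximation}: under $\hat Q$, $\beta_1 = \beta_1^* - \sum_{i\geq 2}\gamma_i\beta_i$ where $\beta_1^*\sim\pi(\beta_1^*\mid Y)$ and $\beta_{-1}\sim\hat Q_{-1}$ are independent. Rescaling, $\|X_1\|_2(\beta_1 - \hat\beta_1) = \|X_1\|_2(\beta_1^* - b^*) - \|X_1\|_2\sum_{i\geq 2}\gamma_i(\beta_i - b_i) + \|X_1\|_2(b^* - \sum_i \gamma_i b_i - \hat\beta_1)$ for suitable centerings $b^*, b_i$. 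First I would handle the $\beta_1^*$ term: by \eqref{density_beta_star_1D}, $\pi(\beta_1^*\mid Y)$ is a one-dimensional distribution with Gaussian leading factor $\mathcal{N}(X_1^TY/\|X_1\|_2^2, 1/\|X_1\|_2^2)$ tilted by $g$. Since $\log g$ is $c$-Lipschitz with $c = o(\|X_1\|_2)$, a standard Laplace-type / change-of-variables argument shows that $\|X_1\|_2(\beta_1^* - X_1^TY/\|X_1\|_2^2)$ converges (in $d_{BL}$, in $P_0$-probability) to $\mathcal{N}(0,1)$; note $X_1^TY/\|X_1\|_2^2 = \beta_1^0 + \sum_{i\geq 2}\gamma_i\beta_i^0 + X_1^T\eps/\|X_1\|_2^2$ under $P_0$, which supplies exactly the efficient centering \eqref{eq:efficient_centering} modulo the nuisance bias $\sum_i\gamma_i\beta_i^0$.

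The crux is then to show the "nuisance correction" term $\|X_1\|_2\sum_{i\geq 2}\gamma_i(\beta_i - \beta_i^0)$ is $o_{\hat Q\times P_0}(1)$, i.e. negligible in probability jointly under the draw $\beta_{-1}\sim\hat Q_{-1}$ and the data. Bounding $|\sum_i\gamma_i(\beta_i-\beta_i^0)| \leq (\max_i|\gamma_i|)\,\|\beta_{-1} - \beta_{-1}^0\|_1$, it suffices to control the first-order $\ell_1$-contraction of the MF VB posterior $\hat Q_{-1}$ around $\beta_{-1}^0$ in the transformed regression $\cY = \cW\beta_{-1}^0 + \check\eps$. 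Here I would invoke the existing VB contraction theory for sparse linear regression with model selection / spike-and-slab priors under compatibility conditions — precisely the results of \cite{Ray_Szabo_2020}, whose hypotheses are met because \eqref{assum::beta_0} transfers the compatibility and sparse-singular-value conditions to $\cW$, \eqref{assum::prior_1}–\eqref{assum::prior_2} control the prior, and $\lambda = O(\|\cW\|\sqrt{\log(p-1)}/s_0)$ controls the slab. This yields $\hat Q_{-1}(\|\beta_{-1} - \beta_{-1}^0\|_1 \gtrsim \rho_n s_0 \sqrt{\log p}/(\text{some scale}))\to 0$; combining with the normalization $\max_i\|(I-H)X_i\|_2$ hidden inside $\|\cW\|$ (since $\cW = P^TX_{-1}$ and $\|P^TX_i\|_2 = \|(I-H)X_i\|_2$), the design condition \eqref{assum::design_matrix} is exactly what forces $\|X_1\|_2(\max_i|\gamma_i|)\cdot(\rho_n s_0\sqrt{\log p}/\max_i\|(I-H)X_i\|_2) \to 0$, killing this term. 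The residual deterministic term is then absorbed into the $o_{P_0}(\|X_1\|_2^{-1})$ slack allowed in \eqref{eq:efficient_centering}.

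Finally I would assemble the pieces: independence of $\beta_1^*$ and $\beta_{-1}$ under $\hat Q$ lets me treat the two stochastic contributions separately, the $\beta_1^*$-part contributing the $\mathcal{N}(0,1)$ limit and the $\beta_{-1}$-part contributing a vanishing perturbation; a Slutsky-type argument in the $d_{BL}$ metric (using that $d_{BL}$ is dominated by convergence in probability of an additive perturbation plus weak convergence of the main term) then delivers \eqref{def:semiparam_BvM_1d}. The main obstacle I anticipate is the second step — obtaining a sharp enough $\ell_1$-rate for the \emph{variational} posterior $\hat Q_{-1}$ (as opposed to the true posterior) in the transformed design, and verifying carefully that the conditions \eqref{assum::beta_0} plus the scaling $\max_i\|(I-H)X_i\|_2 \asymp \|\cW\|$ make the available VB theorems applicable with constants that feed correctly into \eqref{assum::design_matrix}; a secondary technical point is making the one-dimensional Laplace-tilting argument for $\beta_1^*$ fully rigorous for possibly improper $g$ (Example \ref{improper_prior}), where one checks the posterior \eqref{density_beta_star_1D} is still proper and the Lipschitz bound suffices.
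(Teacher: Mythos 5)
Your proposal matches the paper's proof in all essentials: the same decomposition $\beta_1=\beta_1^*-\sum_{i\ge2}\gamma_i\beta_i$ with independence under $\hat Q$, the same $\ell_1$-contraction of $\hat Q_{-1}$ via Theorem 1 of \cite{Ray_Szabo_2020} applied to the transformed model $(\cW,\cY)$ with rate $\rho_n s_0\sqrt{\log p}/\|\cW\|$, the same identification $\|\cW\|=\max_i\|(I-H)X_i\|_2$ feeding into \eqref{assum::design_matrix}, and the same Lipschitz-tilting treatment of $\pi(\beta_1^*\mid Y)$. The only (immaterial) difference is that the paper combines the two pieces by showing convergence of the restricted Laplace transform $E^{\hat Q}(e^{t\|X_1\|_2(\beta_1-\hat\beta_1)}1_{A_n})\to e^{t^2/2}$ rather than your Slutsky-type argument in $d_{BL}$.
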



\begin{corollary}\label{cor:laplace_improper} The Laplace prior (Example \ref{laplace_prior}) with $1/\sigma_n = o(\|X_1\|_2)$ and the improper prior (Example \ref{improper_prior}) satisfy the condition on the density $g$ in Theorem \ref{thm:asymptotic_normality_variational_1D}. Thus, under the other conditions of Theorem  \ref{thm:asymptotic_normality_variational_1D}, the semiparametric BvM holds for $\hat{Q}_{1}$.
\end{corollary}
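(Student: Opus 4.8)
The plan is short: verify the Lipschitz hypothesis on $\log g$ for each of the two priors and then quote Theorem~\ref{thm:asymptotic_normality_variational_1D} directly. The whole content of the corollary is in checking Lipschitz constants, so there is essentially no new technical work.

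First, for the Laplace prior of Example~\ref{laplace_prior}, $g$ is the density $x \mapsto (2\sigma_n)^{-1}e^{-|x|/\sigma_n}$, so $\log g(x) = -\log(2\sigma_n) - |x|/\sigma_n$. Since $\big|\,|x|-|y|\,\big| \le |x-y|$ by the triangle inequality, $\log g$ is Lipschitz with constant $c = 1/\sigma_n$. The standing assumption $1/\sigma_n = o(\|X_1\|_2)$ then gives exactly $c = o(\|X_1\|_2)$, which is the hypothesis on $g$ required by Theorem~\ref{thm:asymptotic_normality_variational_1D}.

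Second, for the improper prior of Example~\ref{improper_prior}, $g\propto 1$ so $\log g$ is constant and hence Lipschitz with constant $c = 0 = o(\|X_1\|_2)$ trivially. The one point worth a remark is that $g$ is improper here; however, the only object built from $g$ that enters the proof of Theorem~\ref{thm:asymptotic_normality_variational_1D} is the conditional posterior law \eqref{density_beta_star_1D} of $\beta_1^*$, which remains a well-defined proper Gaussian in this case (see Lemma~\ref{lem:posterior_form}), so the argument applies verbatim with $c = 0$.

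Finally, since all the remaining hypotheses of Theorem~\ref{thm:asymptotic_normality_variational_1D} — namely \eqref{assum::prior_1}, \eqref{assum::prior_2}, \eqref{assum::beta_0}, the rate bound on $\lambda$, and the design condition \eqref{assum::design_matrix} — are assumed to hold by the statement, the theorem immediately yields the semiparametric BvM \eqref{def:semiparam_BvM_1d} for $\hat{Q}_1$. There is no real obstacle in this corollary; the only mildly delicate book-keeping is confirming that the improper choice of $g$ does not disrupt the proof, which it does not, for the reason just noted.
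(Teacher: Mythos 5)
Your proposal is correct and matches the paper's (implicit) argument: the corollary is treated as an immediate consequence of Theorem \ref{thm:asymptotic_normality_variational_1D} once one notes that $\log g$ is $1/\sigma_n$-Lipschitz for the Laplace prior and $0$-Lipschitz for the improper prior. Your additional remark that the improper $g$ still yields a proper Gaussian law for $\beta_1^*$ via Lemma \ref{lem:posterior_form} is exactly the right point to check and is consistent with the paper.
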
 

Recalling that $\gamma_i=X_1^TX_i/\|X_1\|_2^2$, condition \eqref{assum::design_matrix} essentially requires that the correlations between columns $X_1^TX_i$ are not too large. The role of the  sequence $\rho_n$ is purely technical (it arises for the same technical reasons as in \cite{Ray_Szabo_2020}) and it can be taken to be any slowly diverging sequence, e.g. $\rho_n=\log{n}$. Condition \eqref{assum::design_matrix} is verified in Section \ref{sec::Additional_result} for $X_{ij}\sim^{iid} \mathcal{N}(0,1)$ and could similarly be verified in the other random design settings of Section \ref{sect:simulations}, provided the correlation parameter vanishes fast enough.

While our theoretical guarantees are established in the lower correlation regime, the above simulations show that the method continues to perform well in more strongly correlated settings, see Section \ref{sect:simulations}. Under stronger correlation, the VB posterior performs both a significant bias and variance correction, which is needed for accurate statistical inference and uncertainty quantification. In Section \ref{sec:heuristics} below, we provide heuristic computations explaining the behaviour of our method in the correlated settings with random Gaussian design considered numerically in Section \ref{sect:simulations}. Rigourously extending these results to general correlated settings will require new proof ideas and is beyond the scope of this work.

For Gaussian priors, $\log g$ is not Lipschitz and thus has lighter tails, requiring additional assumptions.

\begin{lemma}\label{lem:asymptotic_normality_gaussian_variational}
Let $\Pi$ be the prior \eqref{general_prior} with $g\sim \mathcal{N}(0,\sigma_n^2)$ (Example \ref{gaussian_prior}) satisfying \eqref{assum::prior_1}  and\eqref{assum::prior_2}.
	Suppose that $\beta^0 \in \R^p$ satisfies \eqref{assum::beta_0},  $\lambda = O(\|\cW\|\sqrt{\log  (p-1)}/s_0)$ and the design matrix satisfies \eqref{assum::design_matrix}. If in addition
 \begin{align}\label{assum:asymp_nor_gauss}
     \sigma_n\|X_1\|_2 \rightarrow \infty, \qquad \frac{|\beta^0_1|}{\sigma_n^2\|X_1\|_2} \rightarrow 0 \qquad \text{and} \quad \frac{\max_{i=2,\dots,p} |\gamma_i|||\beta^0_{-1}||_1}{\sigma_n^2\|X_1\|_2} \rightarrow 0,
 \end{align}
	then the semiparametric BvM holds for $\hat{Q}_{1}$.
\end{lemma}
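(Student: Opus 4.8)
The plan is to adapt the proof of Theorem~\ref{thm:asymptotic_normality_variational_1D} to the Gaussian prior $g\sim\mathcal{N}(0,\sigma_n^2)$, where the key difficulty is that $\log g(x) = -x^2/(2\sigma_n^2) + \text{const}$ is not globally Lipschitz, so the hypothesis ``$\log g$ is $c$-Lipschitz with $c=o(\|X_1\|_2)$'' fails and we cannot invoke the corresponding estimate directly. The idea is to exploit instead the explicit form of the posterior for $\beta_1^*$ available from Lemma~\ref{lem:posterior_form}, namely $\beta_1^*\mid Y\sim\mathcal{N}\left(\frac{\sigma_n^2}{\|X_1\|_2^2\sigma_n^2+1}X_1^TY,\ \frac{\sigma_n^2}{\|X_1\|_2^2\sigma_n^2+1}\right)$, and show that under \eqref{assum:asymp_nor_gauss} this Gaussian is, after rescaling by $\|X_1\|_2$ and recentering at $\hat\beta_1$, asymptotically $\mathcal{N}(0,1)$.

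First I would compute the bias and variance distortion induced by the ridge-type shrinkage. Writing $\kappa_n = \frac{\sigma_n^2}{\|X_1\|_2^2\sigma_n^2+1} = \frac{1}{\|X_1\|_2^2}\cdot\frac{1}{1+1/(\sigma_n^2\|X_1\|_2^2)}$, the first condition $\sigma_n\|X_1\|_2\to\infty$ gives $\|X_1\|_2^2\kappa_n\to 1$, so the posterior variance of $\|X_1\|_2\beta_1^*$ converges to $1$. For the centering, $\kappa_n X_1^TY = \kappa_n\|X_1\|_2^2(\beta_1^0+\sum_{i\geq2}\gamma_i\beta_i^0) + \kappa_n X_1^T\eps$; the noise term equals $\frac{X_1^T\eps}{\|X_1\|_2^2}(1+o(1))$ with the error of order $o_{P_0}(1/\|X_1\|_2)$ again by $\sigma_n\|X_1\|_2\to\infty$, while the shrinkage of the signal part contributes a deterministic bias $(\kappa_n\|X_1\|_2^2-1)\beta_1^0 = -\frac{\beta_1^0}{\sigma_n^2\|X_1\|_2^2}(1+o(1))$ to $\beta_1^*$, which after multiplication by $\|X_1\|_2$ is $O(|\beta_1^0|/(\sigma_n^2\|X_1\|_2))\to0$ by the second condition in \eqref{assum:asymp_nor_gauss}. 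Hence $\|X_1\|_2(\beta_1^*-\hat\beta_1-\sum_{i\geq2}\gamma_i\beta_i^0)\rightsquigarrow\mathcal{N}(0,1)$ under the VB posterior, where $\hat\beta_1$ is the efficient centering \eqref{eq:efficient_centering}; absorbing the shrinkage factor's effect on the $X_1^T\eps$ term into the $o_{P_0}(1/\|X_1\|_2)$ remainder of \eqref{eq:efficient_centering} is legitimate precisely because of $\sigma_n\|X_1\|_2\to\infty$.

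Next I would control the nuisance contribution $\sum_{i\geq2}\gamma_i(\beta_i-\beta_i^0)$ under $\hat Q_{-1}$ exactly as in Theorem~\ref{thm:asymptotic_normality_variational_1D}: this part of the argument does not see $g$ at all, since the marginal posterior \eqref{eq:density_beta_minus_1_1D} and hence its MF approximation $\hat Q_{-1}$ are identical across Examples~\ref{laplace_prior}--\ref{improper_prior}. Thus the $\ell_1$-contraction of $\hat Q_{-1}$ from \cite{Ray_Szabo_2020}, valid under \eqref{assum::beta_0} and $\lambda=O(\|\cW\|\sqrt{\log(p-1)}/s_0)$, combined with \eqref{assum::design_matrix}, gives that $\|X_1\|_2|\sum_{i\geq2}\gamma_i(\beta_i-\beta_i^0)|\to0$ in $\hat Q_{-1}$-probability, in $P_0$-probability. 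Finally, by the independence of $\beta_1^*$ and $\beta_{-1}$ under $\hat Q$ and $\beta_1=\beta_1^*-\sum_{i\geq2}\gamma_i\beta_i$, a standard Slutsky/$d_{BL}$-continuity argument (testing against bounded Lipschitz functions and splitting on the event where the nuisance term is small) combines the two pieces to yield \eqref{def:semiparam_BvM_1d} for $\hat Q_1$.

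The main obstacle is the bookkeeping around the ridge shrinkage: I need to verify that \emph{all three} distortions --- the variance factor $\|X_1\|_2^2\kappa_n-1$, the deterministic signal bias through $\beta_1^0$, and the less obvious bias term $\sum_{i\geq2}\gamma_i\beta_i^0$ which is shrunk by the same factor and contributes $(\kappa_n\|X_1\|_2^2-1)\sum_{i\geq2}\gamma_i\beta_i^0$, of order $\max_i|\gamma_i|\,\|\beta^0_{-1}\|_1/(\sigma_n^2\|X_1\|_2)$ after rescaling --- are simultaneously negligible, and this is exactly why the three conditions in \eqref{assum:asymp_nor_gauss} appear. The third condition is the subtle one: it is the price of shrinking $\beta_1^*$, whose ``true'' value is $\beta_1^0+\sum_{i\geq2}\gamma_i\beta_i^0$ rather than $\beta_1^0$, toward zero with a Gaussian slab; this is the structural reason the Gaussian prior performs worse than the Laplace or improper priors, and it is worth a remark that the heavier-tailed choices avoid this bias entirely.
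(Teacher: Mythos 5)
Your proposal is correct and follows essentially the same route as the paper: keep Step 1 and the nuisance/contraction argument of Theorem \ref{thm:asymptotic_normality_variational_1D} unchanged, and replace the Lipschitz estimate for $\log g$ by the explicit Gaussian form of $\pi(\beta_1^*\mid Y)$, checking that the variance factor $\|X_1\|_2^2\sigma_n^2/(\|X_1\|_2^2\sigma_n^2+1)$, the shrinkage bias through $\beta_1^0$, and the shrinkage bias through $\sum_{i\geq 2}\gamma_i\beta_i^0$ are each killed by the three conditions in \eqref{assum:asymp_nor_gauss}. The paper phrases the final convergence via the Laplace transform $I_n$ rather than a Slutsky/$d_{BL}$ argument, but the computations and the matching of conditions to error terms are identical to yours.
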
 

Condition \eqref{assum:asymp_nor_gauss} requires that $\sigma_n$ be large enough depending on the size of the unknown true $\beta^0$. A similar condition is required in \cite{DY2019} since she also uses a Gaussian prior. In practice, the VB method with Gaussian $g$ can perform poorly if this condition is not satisfied, which requires careful hyperparameter tuning to work, unlike the Laplace and improper case. This illustrates the fact that Gaussian priors are known to do too much shrinkage unless the variance is large enough (see e.g. \cite{Castillo_2012}). This is important in practice since this assumption depends on the unknown truth.

 \subsection{Bernstein-von Mises for $k\geq 1$}
 
We now provide a BvM type result for the variational distribution $\hat{Q}_{1:k}$ for $\beta_{1:k}\in\R^k$, showing that it is asymptotically Gaussian centered at $\hat{\beta}_{k} := \beta_{1:k}^0 + \Sigma_k X_{1:k}^T \eps$ with covariance matrix $\Sigma_k := (X_{1:k}^T X_{1:k})^{-1}$. More formally, for $L_k = \Sigma_k^{1/2}$, we say the \textit{semiparametric Bernstein-von Mises (BvM)} holds for $\hat{Q}_k$ if
\begin{align*}
	d_{BL}\left(\mathcal{L}_{\hat{Q}}(L_k^{-1}(\beta_{1:k} - \hat{\beta}_{k})), \mathcal{N}_k(0,I_k) \right) \xrightarrow{P_0} 0
\end{align*}
as $n\to\infty$, where $d_{BL}$ is the bounded Lipschitz metric between probability distributions on $\R^k$. We require very similar assumptions to those formulated previously for $k=1$. We assume 
\begin{align}\label{assum::prior_2_kD}
	\frac{\| \cW_k \|}{p-k} \leq \lambda \leq 2 \bar{\lambda}, \qquad \bar{\lambda}=2\|\cW_k\|\sqrt{\log (p-k)}
\end{align}
and for some $\rho_n\rightarrow \infty$ and $c_0 > 0$, the $s_0$-sparse vectors $\beta\in \R^p$ satisfy
\begin{align}\label{assum::beta_0_k}
    \phi^{\check{W}}(S_{\beta^0_{-k}}) \geq c_0, \qquad \tilde{\psi}^{\check{W}}_{\rho_n}(S_{\beta^0_{-k}}) \geq c_0,  \qquad s_0=o(n).
\end{align}

\begin{theorem}\label{thm:asymptotic_normality_variational_kD} Let $\Pi$ be the prior \eqref{general_prior_kd} satisfying \eqref{assum::prior_1} and \eqref{assum::prior_2_kD}. Suppose that $\beta^0\in \R^p$ satisfies \eqref{assum::beta_0_k},  $\lambda = O(\|\cW_k\|\sqrt{\log (p-k)}/s_0)$ and $\log g $ is $c$-Lipschitz with $c=o(\|L_k\|^{-1})$. If the design matrix satisfies
 \begin{align}\label{assum::design_matrix_kD}
		\frac{ \max_{i=k+1, \dots, p}\|HX_{i}\|_2 }{\max_{i=k+1, \dots, p}\|(I-H)X_{i}\|_2} \rho_n s_0\sqrt{\log p} \rightarrow 0,
	\end{align}
 then the semiparametric BvM holds for $\hat{Q}_{1:k}$.	
\end{theorem}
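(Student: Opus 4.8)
The plan is to follow the same strategy as in the case $k=1$ (Theorem~\ref{thm:asymptotic_normality_variational_1D}), now in vector notation. Under the VB distribution $\hat Q$, $\beta_{1:k}^*$ and $\beta_{-k}$ are independent, with $\beta_{-k}\sim\hat Q_{-k}$, $\beta_{1:k}^*\sim\pi(\beta_{1:k}^*|Y)$ as in \eqref{eq:beta_k_star_posterior}, and $\beta_{1:k}=\beta_{1:k}^*-\Sigma_k X_{1:k}^TX_{-k}\beta_{-k}$. Writing $m:=\Sigma_k X_{1:k}^TY$ for the Gaussian centering in \eqref{eq:beta_k_star_posterior} and using the identity $\Sigma_k X_{1:k}^TY=\beta_{1:k}^0+\Sigma_k X_{1:k}^TX_{-k}\beta_{-k}^0+\Sigma_k X_{1:k}^T\eps$ together with $\hat\beta_k=\beta_{1:k}^0+\Sigma_k X_{1:k}^T\eps$, one obtains the exact decomposition, valid under $\hat Q$,
\begin{equation*}
L_k^{-1}(\beta_{1:k}-\hat\beta_k)=L_k^{-1}(\beta_{1:k}^*-m)+L_k^{-1}\Sigma_k X_{1:k}^TX_{-k}(\beta_{-k}^0-\beta_{-k})=:U+R ,
\end{equation*}
with $U$ and $R$ independent under $\hat Q$. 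It then suffices to prove (i) $\mathcal{L}_{\pi(\cdot|Y)}(U)\to\mathcal{N}_k(0,I_k)$ in $d_{BL}$, and (ii) $\hat Q_{-k}(\|R\|_2>\eta)\to0$ in $P_0$-probability for every $\eta>0$; a Slutsky-type argument exploiting the independence of $U$ and $R$ then gives the claimed BvM.

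For (i), since $L_k=\Sigma_k^{1/2}$, the change of variables $\beta_{1:k}^*=m+L_k u$ in \eqref{eq:beta_k_star_posterior} yields a density for $U$ proportional to $e^{-\|u\|_2^2/2}\,g(m+L_k u)$. As $\log g$ is $c$-Lipschitz with $c=o(\|L_k\|^{-1})$, we have $|\log g(m+L_k u)-\log g(m)|\le c\|L_k\|\,\|u\|_2=o(\|u\|_2)$, so the tilt of the standard Gaussian by $g$ is uniformly negligible on bounded sets and contributes vanishing mass in the tails; bounding the normalising constant exactly as in the $k=1$ proof shows $\mathcal{L}(U)\to\mathcal{N}_k(0,I_k)$ in total variation, hence in $d_{BL}$. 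For the improper prior $g\propto1$ the law of $U$ is exactly $\mathcal{N}_k(0,I_k)$, and for the Laplace $g$ the Lipschitz constant is $\sqrt{k}/\sigma_n$, which is $o(\|L_k\|^{-1})$ under the analogue of the scaling in Corollary~\ref{cor:laplace_improper}.

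For (ii), by the $k$-dimensional analogue \eqref{eq:beta_minus_k_posterior} of Lemma~\ref{lem:posterior_form}, $\hat Q_{-k}$ is the mean-field VB approximation, over the family $\mathcal{Q}_{-k}$ in \eqref{eq:variational_class}, of the posterior in the sparse linear model $\cY_k=\cW_k\beta_{-k}^0+\check\eps_k$ with $\check\eps_k=P_k^T\eps\sim\mathcal{N}_{n-k}(0,I_{n-k})$ (using $P_k^TX_{1:k}=0$), under the model selection prior $MS_{p-k}(\nu,\lambda)$. The prior conditions \eqref{assum::prior_1}, \eqref{assum::prior_2_kD} and $\lambda=O(\|\cW_k\|\sqrt{\log(p-k)}/s_0)$, together with the compatibility conditions \eqref{assum::beta_0_k} on $\cW_k$, are precisely those under which the $\ell_1$-contraction results of \cite{Ray_Szabo_2020} apply (with $X,p$ replaced by $\cW_k,p-k$), giving, with $\hat Q_{-k}$-probability tending to $1$ in $P_0$-probability,
\begin{equation*}
\|\beta_{-k}-\beta_{-k}^0\|_1\lesssim \frac{\rho_n\, s_0\sqrt{\log(p-k)}}{\|\cW_k\|} .
\end{equation*}
Using $L_k^{-1}\Sigma_k=L_k$ and bounding $\|R\|_2\le\bigl(\max_{i>k}\|L_k X_{1:k}^TX_i\|_2\bigr)\,\|\beta_{-k}-\beta_{-k}^0\|_1$, the two identities $\max_{i>k}\|L_kX_{1:k}^TX_i\|_2=\max_{i>k}\|HX_i\|_2$ (since $X_i^TX_{1:k}\Sigma_k X_{1:k}^TX_i=\|HX_i\|_2^2$) and $\|\cW_k\|=\max_{i>k}\|P_k^TX_i\|_2=\max_{i>k}\|(I-H)X_i\|_2$ give
\begin{equation*}
\|R\|_2\lesssim\frac{\max_{i>k}\|HX_i\|_2}{\max_{i>k}\|(I-H)X_i\|_2}\,\rho_n\, s_0\sqrt{\log p}\longrightarrow0
\end{equation*}
by the design condition \eqref{assum::design_matrix_kD}, which establishes (ii).

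To conclude, let $Z\sim\mathcal{N}_k(0,I_k)$. By independence of $U$ and $R$ under $\hat Q$, for any bounded $1$-Lipschitz $f$ we have $|\e_{\hat Q}f(U+R)-\e f(Z)|\le \e_{\hat Q}\!\bigl[\min(2\|f\|_\infty,\|R\|_2)\bigr]+d_{BL}\bigl(\mathcal{L}(U),\mathcal{N}_k(0,I_k)\bigr)$; by (i) and (ii) both terms tend to $0$ in $P_0$-probability, which is the semiparametric BvM for $\hat Q_{1:k}$. The main obstacle is step (ii): one must carefully transcribe the mean-field $\ell_1$-contraction rate of \cite{Ray_Szabo_2020} to the preprocessed design $\cW_k$, tracking the normalisation $\|\cW_k\|=\max_{i>k}\|(I-H)X_i\|_2$ and the technical factor $\rho_n$, so that the resulting bound on $\|R\|_2$ matches \eqref{assum::design_matrix_kD} exactly through the projection identity $\max_{i>k}\|HX_i\|_2=\max_{i>k}\|L_kX_{1:k}^TX_i\|_2$; the $g$-tilt step (i) is routine once the $k=1$ version is available.
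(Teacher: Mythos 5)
Your proposal is correct and rests on exactly the same skeleton as the paper's proof: the same decomposition $L_k^{-1}(\beta_{1:k}-\hat\beta_k)=L_k^{-1}(\beta_{1:k}^*-\Sigma_kX_{1:k}^TY)+L_k^{-1}\Sigma_kX_{1:k}^TX_{-k}(\beta^0_{-k}-\beta_{-k})$, the same appeal to Theorem 1 of \cite{Ray_Szabo_2020} applied to the preprocessed model $\cY_k=\cW_k\beta_{-k}^0+\check\eps$ to get the $\ell_1$-contraction of $\hat{Q}_{-k}$, and the same projection identities $\|L_kX_{1:k}^TX_i\|_2=\|HX_i\|_2$ and $\|\cW_k\|=\max_{i>k}\|(I-H)X_i\|_2$ that turn the remainder bound into condition \eqref{assum::design_matrix_kD}. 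The only genuine difference is the device used to conclude weak convergence: the paper computes the restricted Laplace transform $E^{\hat{Q}}(e^{t^TL_k^{-1}(\beta_{1:k}-\hat\beta_k)}1_{A_n})$ on the concentration event and invokes the Laplace-transform-to-weak-convergence results of \cite{Rivoirard_Rousseau_12, CR2015}, whereas you split into $U+R$, show $\mathcal{L}(U)\to\mathcal{N}_k(0,I_k)$ by a direct density (Scheff\'e-type) argument and $\|R\|_2\to0$ in $\hat{Q}$-probability, and finish with an elementary bounded-Lipschitz estimate. Your route is slightly more self-contained (it avoids citing the Laplace-transform lemma, and in fact does not even need the independence of $U$ and $R$ for the final Slutsky step), while the paper's route handles the $g$-tilt and the remainder in one exponential-moment computation; both are valid, and the substantive content --- the contraction step and the matching of the bias bound to \eqref{assum::design_matrix_kD} --- is identical. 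One minor point to tidy: in the Lipschitz bound $|\log g(m+L_ku)-\log g(m)|\le c\,\|L_k\|\,\|u\|_2$ you should note, as the paper does, that $\|L_k\|$ here denotes the maximal column norm, so a fixed constant $C=C(k)$ relating it to the operator norm is needed; since $k$ is fixed this is harmless.
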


\begin{corollary}\label{cor:laplace_improper_multidim}
    The Laplace prior with $1/\sigma_n = o(\|L_k\|^{-1})$ and the improper prior satisfy the condition on the density $g$ in Theorem \ref{thm:asymptotic_normality_variational_kD}. Thus, under the other conditions of Theorem  \ref{thm:asymptotic_normality_variational_kD}, the semiparametric BvM holds for $\hat{Q}_{1:k}$.
\end{corollary}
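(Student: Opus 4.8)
The plan is to observe that this is the direct $k\ge 1$ analogue of Corollary~\ref{cor:laplace_improper}: all hypotheses of Theorem~\ref{thm:asymptotic_normality_variational_kD} except the Lipschitz condition on $\log g$ are carried over verbatim (``under the other conditions of Theorem~\ref{thm:asymptotic_normality_variational_kD}''), so it suffices to verify for each of the two priors that $\log g$ is $c$-Lipschitz on $\R^k$ with $c=o(\|L_k\|^{-1})$, and then invoke Theorem~\ref{thm:asymptotic_normality_variational_kD}. No separate machinery is needed.

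For the improper prior (the $k$-dimensional version of Example~\ref{improper_prior}, with $g\propto 1$ in \eqref{general_prior_kd}), $\log g$ is constant on $\R^k$, hence $c$-Lipschitz with $c=0$. Since $X_{1:k}$ has full column rank by assumption, $\Sigma_k=(X_{1:k}^TX_{1:k})^{-1}$ is positive definite and $\|L_k\|^{-1}>0$ for every $n$, so trivially $c=0=o(\|L_k\|^{-1})$. One should also note, as in the case $k=1$, that there is no integrability obstruction: by \eqref{eq:beta_k_star_posterior} the marginal posterior of $\beta_{1:k}^*$ is the proper Gaussian $\mathcal{N}_k(\Sigma_kX_{1:k}^TY,\Sigma_k)$.

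For the multivariate Laplace prior $g(x_1,\dots,x_k)\propto\exp\{-\sigma_n^{-1}\sum_{i=1}^k|x_i|\}$ we have $\log g(x)=-\sigma_n^{-1}\|x\|_1+\textrm{const}$, and for any $x,y\in\R^k$ the reverse triangle inequality applied coordinatewise gives
\begin{align*}
|\log g(x)-\log g(y)|=\tfrac{1}{\sigma_n}\big|\,\|x\|_1-\|y\|_1\,\big|\le \tfrac{1}{\sigma_n}\sum_{i=1}^k\big|\,|x_i|-|y_i|\,\big|\le \tfrac{1}{\sigma_n}\sum_{i=1}^k|x_i-y_i|=\tfrac{\|x-y\|_1}{\sigma_n}\le \tfrac{\sqrt{k}}{\sigma_n}\|x-y\|_2,
\end{align*}
using the norm equivalence $\|\cdot\|_1\le\sqrt{k}\,\|\cdot\|_2$ on $\R^k$, valid because $k$ is a fixed integer. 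Hence $\log g$ is $c$-Lipschitz with $c=\sqrt{k}/\sigma_n=O(\sigma_n^{-1})$, and the hypothesis $1/\sigma_n=o(\|L_k\|^{-1})$ gives $c=o(\|L_k\|^{-1})$.

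Having verified the condition on $g$ in both cases, the semiparametric BvM for $\hat{Q}_{1:k}$ follows immediately from Theorem~\ref{thm:asymptotic_normality_variational_kD}. There is essentially no obstacle here; the only point to keep straight is that the Lipschitz constant must be measured in the Euclidean norm implicit in Theorem~\ref{thm:asymptotic_normality_variational_kD}, which is the source of the dimensional factor $\sqrt{k}$. Since $k$ is fixed this factor is harmless and does not affect the required rate $o(\|L_k\|^{-1})$.
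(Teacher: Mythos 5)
Your proposal is correct and follows exactly the route the paper intends (the paper treats this corollary as immediate and gives no separate proof): verify that $\log g$ is constant for the improper prior and that $\log g(x)=-\sigma_n^{-1}\|x\|_1+\mathrm{const}$ is $(\sqrt{k}/\sigma_n)$-Lipschitz in the Euclidean norm for the Laplace prior, then invoke Theorem \ref{thm:asymptotic_normality_variational_kD}. Your added remarks on the propriety of the posterior for $\beta_{1:k}^*$ under $g\propto 1$ and on the harmless dimensional factor $\sqrt{k}$ are accurate and consistent with how the Lipschitz constant is used in the theorem's proof.
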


   
\section{Discussion}\label{sec:discussion}

In summary, we propose here a scalable variational Bayes approach for inference of a low-dimensional parameter in sparse high-dimensional linear regression. Our approach combines a simple preprocessing step with a variational class tailored to this problem, allowing one to plug-in existing computational tools for fast and simple computation. We demonstrate that our I-SVB method significantly improves on standard mean-field variational inference for this task, and performs favourably compared to two frequentist counterparts in terms of both estimation accuracy and uncertainty quantification. The I-SVB method demonstrates robustness to correlated features, various forms of signal, and extends naturally to multidimensional uncertainty quantification. Furthermore, I-SVB produces multidimensional credible regions which
are very close to optimal `oracle'-type confidence sets across a variety of scenarios. Our empirical results are supported by theoretical frequentist guarantees in the form of a Bernstein-von Mises theorem.

\textbf{Acknowledgements.} We wish to thank the Imperial College London-CNRS PhD Joint Programme for funding to support this collaboration, including funding ALH's PhD position. IC's work is partly supported by ANR grant project ANR-23-CE40-0018-01 (BACKUP) and an Institut Universitaire de France fellowship.


\section{Heuristics}  \label{sec:heuristics}  

We provide some heuristic computations to explain the excellent behaviour of the I-SVB method in the simulations above in settings with possibly significant correlation. Making these heuristics rigorous is an interesting topic for future work. As above, suppose $X_{i\cdot}\sim \mathcal{N}(0,\Sigma)$ with  $\Sigma=\Sigma^{AR}_\rho$ or $\Sigma=\Sigma_\rho$ for a fixed $\rho\in[0,1)$. Recall that the cases where either $\rho=0$ or  $\rho=\rho_n=o(1)$ vanishes sufficiently fast are covered by our theoretical results from Section \ref{sec:BvM_VB_1d}.
 
Let us focus on the one-dimensional case and compare the variance of
the oracle `estimator' $\hat\beta^o_{S_0} := (X_{S_0}^T X_{S_0})^{-1} X_{S_0}^T Y$ with the expected variance of the I-SVB variational distribution. 
 We assume for simplicity that the coordinate of interest $\beta_1$ belongs to $S_0$. 
 We have
 \[ \text{Var}(\hat\beta^o_{S_0} | X) = (X_{S_0}^T X_{S_0})^{-1} \sim (\Sigma_{S_0})^{-1}/n, \]
 where $\Sigma_{S_0}$ is the submatrix of $\Sigma$ obtained by restricting  (both) indices to $S_0$.
If $v^o_1$ denotes the (unconditional) variance of the oracle estimate on the first coordinate, we also have
 \begin{equation}
 v^o_1 \sim [(\Sigma_{S_0})^{-1}]_{11}/n. \label{vor}
 \end{equation}
Recall on the other hand that I-SVB models $\be_1^*$ and $\be_{-1}$ independently by $\be_1^*\sim \mathcal{N}(X_1^TY/\|X_1\|_2^2,\|X_1\|_2^{-2})$ and with $\be_{-1}$ following a mean-field approximation of the posterior of $\be_{-1} | Y$. Since we use the model $\cY = \cW \beta_{-1} + \check{\eps}$ to fit the variational posterior on $\beta_{-1}$,
in these heuristics we assume that the posterior correctly recovers the true support $S_0^- := S_0\backslash\{1\}$ of $\be_{-1}^0$ and that, on $S_0^-$, it mimics the behaviour of the oracle estimate $\hat\beta^o_{S_0^-} = (\cW^T_{S_0^-} \cW_{S_0^-})^{-1}\cW_{S_0^-} \cY$ in the projected model. In practice, there may be some uncertainty in terms of the support, which will then generate  extra variance for I-SVB. As before, 
  $\hat\beta^o_{S_0^-}\approx \mathcal{N}(\be^0_{S_0^-}, (\cW_{S_0^-}^T\cW_{S_0^-})^{-1})$, and the mean-field VB posterior should be close to the best mean-field normal approximation of this Gaussian distribution, which is the normal distribution with diagonal {\it precision} matrix matching the diagonal of the matrix $\cW_{S_0^-}^T\cW_{S_0^-}$, that is the precision corresponding to the oracle variance $(\cW_{S_0^-}^T\cW_{S_0^-})^{-1}$ of $\hat\beta^o_{S_0^-}$ (see e.g. \cite{WangBlei2019}, Lemma 8).
One thus expects the covariance of the mean-field VB distribution $\hat{Q}_{-1}$ on $\be_{-1}$ to be close to the diagonal matrix $\text{Diag}(1/[\cW_{S_0^-}^T\cW_{S_0^-}]_{ii}])$, where we have $\cW_{S_0^-}^T\cW_{S_0^-} = X_{S_0^-}^T(I_n-H)X_{S_0^-} \approx n \Sigma_{S_0^-} - \|X_1\|_2^2 \gamma_{-1} \gamma_{-1}^T$, so that $\text{Diag}(1/[\cW_{S_0^-}^T\cW_{S_0^-}]_{ii}]) \approx \frac{1}{n} \cdot \textrm{Diag}(1/([\Sigma_{S_0^-}]_{ii} - \gamma_i^2))$. 
Since $\be_1=\be_1^*-\sum_{i=2}^p \gamma_i\be_i$ by definition, the overall expected variance of I-SVB on $\be_1$ is, with $\gamma_{-1}^T:=(\gamma_i)_{i\in S_0^-}$,
 \begin{align} 
  v^{\operatorname{\it{I-SVB}}}_1  \approx \frac{1}{\|X_1\|_2^2}+\text{Var}_{\hat{Q}_1}\Big(\sum_{i\in S_0^-} \gamma_i\be_i\Big) &= \frac{1}{\|X_1\|_2^2}+\gamma_{-1}^T\text{Var}_{\hat{Q}_1}(\be_{-1}) \gamma_{-1} \nonumber \\
  & \approx \frac1n \left\{1/\Sigma_{11} +
 \sum_{i\in S_0^-} \gamma_i^2/ ([\Sigma_{S_0^-}]_{ii} - \gamma_i^2)   \right\}, \label{visvb}
 \end{align}  
where we have used the mean-field approximation by taking the above diagonal variance for $\be_{-1}$ under $\hat{Q}_1$.

By the same reasoning based on the (diagonal) precision of the MF VB--approximation matching the diagonal precision of the oracle, the first term $1/(n\Sigma_{11})$ in \eqref{visvb} corresponds to the expected variance of the full MF VB approximation based on the spike-and-slab prior (again assuming the posterior recovers the true support). In particular, while the MF variational posterior has an asymptotic variance which is too small in correlated settings, and therefore corresponding credible intervals typically {\it undercover}, we see in \eqref{visvb} that since the terms of the sum are positive, the variance of I-SVB is always {\it larger}, and hence this results in I-SVB credible intervals having typically {\it better coverage} than their MF counterparts. We now further investigate quantitatively how much variance is gained in \eqref{visvb}  for I-SVB for 
$\Sigma= \Sigma^{AR}_\rho$ and $\Sigma=\Sigma_\rho$.

{\it Case $\Sigma=\Sigma^{AR}_\rho$.} 
Suppose as assumed in the simulations above that we are in a `least-favourable' situation that maximises correlation between features within $S_0$, which corresponds to $S_0=\{1,2,\ldots,s_0\}$ (in case $S_0$ does not have coordinates amongs the first integers, for instance if $S_0$ is drawn uniformly at random in $\{1,\ldots,p\}$, the corresponding covariance of $(X_i)_{i\in S_0}$ will typically be close to the identity as $\rho^q$ is small for large $q$, and so this is closer in spirit to the setting with low correlation already studied  in Section \ref{sec:BvM_VB_1d}). The matrix $\Sigma=\Sigma^{AR}_\rho$ can easily be seen to have a tridiagonal inverse
 and $v^o_1=[\Sigma_{S_0}^{-1}]_{11}/n=n^{-1}/(1-\rho^2)$. On the other hand, by the law of large numbers $\ga_j \approx E(X_{11}X_{1j})=\rho^{j-1}$,  so for $s_0 \geq 2$
 \[ v^{\operatorname{\it{I-SVB}}}_1 \approx 
 \frac1n \left\{ \frac{1}{\Sigma_{11}} +
 \sum_{i\in S_0^-} \frac{\gamma_i^2}{ ([\Sigma_{S_0^-}]_{ii} - \gamma_i^2)}  \right\}= \frac{1}{n}\left\{\frac{1}{1-\rho^2} + \sum_{j = 2}^{s_0-1} \frac{\rho^{2j}}{1-\rho^{2j}}\right\},
 \] 
 where the first term in brackets corresponds to the oracle variance, and the second part of the sum is always greater than 0 and small unless $\rho$ is very close to $1$.
 If $s_0 = 2$ (and the sum is empty) we even have $v^{\operatorname{\it{I-SVB}}}_1\approx v_1^o$. The last display shows that the variance of I-SVB nearly matches that of the oracle, as seen empirically in the simulations above, while the MF method has $v^{MF}_1\approx 1/n$ leading to undercoverage. The same heuristics also apply in two dimensions ($k=2$). In that case, it can be checked by a similar reasoning as above that, denoting by $v_{1:2}^o$ and $v_{1:2}^{\operatorname{\it{I-SVB}}}$ 
 respectively  the variance of the oracle and the expected variance of I-SVB, one has, with $c_\rho=1/(1-\rho^2)$, 
\[ v_{1:2}^o
= (c_\rho/n) \begin{bmatrix}
1 & -\rho \\
-\rho & 1+\rho^2
\end{bmatrix},\qquad 
v_{1:2}^{\operatorname{\it{I-SVB}}}
\approx v_{1:2}^0 + (c_\rho/n) \left[ \begin{matrix} 0& 0\\0&   \sum_{j=2}^{s_0 - 1}\frac{\rho^{2j}}{1-\rho^{2j}}\end{matrix} \right],
  \] 
  so that once again the variance of I-SVB equals the oracle variance plus a term which will be small for moderate values of $\rho$.
These findings are in line with the empirical observations in Section \ref{sec:sims2} above: I-SVB credible ellipses nearly match the oracle confidence ellipses associated to the oracle estimator, as seen in Figure \ref{fig:credible_regions_example}. Also, in this setting, the standard MF approximation, which has a diagonal variance, is necessarily far off the oracle variance as in the above display.
We note also that  the asymptotic variance of $\be_{1:2}^*$ is
\[ \text{Var}(\be_{1:2}^*) = (c_\rho/n) \begin{bmatrix}
1 & -\rho \\
-\rho & 1
 \end{bmatrix}, \]
which is different from $v_{1:2}^o\approx v_{1:2}^{\operatorname{\it{I-SVB}}} $. This confirms that in this case `undoing the transformation' from $\be_{1:2}^*$ to $\be_{1:2}$ has a non-negligible effect in terms of variance, 
as can be seen empirically in Figure \ref{fig:cov_structure_AR}.

{\it Case $\Sigma=\Sigma_\rho$.} By arguing similarly as in the AR case, one first computes explicitly the precision matrix $(\Sigma_{S_0})^{-1}
$ which again is available in explicit form and which leads (we omit the details) to $n\cdot v_1^o\sim [(\Sigma_{S_0})^{-1}]_{11}=e_\rho/(1-\rho)$, with $e_\rho=(1-2\rho+\rho s_0)/(1-\rho+\rho s_0)$ and $e_\rho\approx 1$ for `large' $s_0$. On the other hand, $v_1^{MF} \approx (\Sigma_{11})^{-1}/n=1/n$ and $v_1^{\operatorname{\it{I-SVB}}}\approx (1+(s_0-1)\rho^2/(1-\rho^2))/n$. In this setting with strong correlations, we still have that $v_1^{\operatorname{\it{I-SVB}}}$ is typically significantly larger than $v_1^{MF}$, thus compensating for the undercoverage of classical mean-field  VB. This time, the expected variance of I-SVB does not match in general that of the oracle. One can note though that $v_1^{\operatorname{\it{I-SVB}}}$ scales  (if $\rho$ is bounded away from $0$ so that $\rho^2/(1+\rho)>0$ is a constant) as $s_0(1-\rho)^{-1}/n$, 
which is $s_0$ times the oracle  variance, so in an (idealised) $s_0\to \infty$ limit (and assuming the centering of I-SVB is roughly of correct order) we have {\it over-coverage}; this is less critical compared to the {\it under-coverage} of MF-VB, as then the method is simply somewhat conservative (so credible sets may be slightly too large) but is then expected at least to cover the true parameter, which once again is in line with the empirical results observed in Table \ref{Tab:experiments_1D}.

\section{Proofs}\label{sec:proofs}
\subsection{Proofs for the one dimensional case $k=1$}
\begin{proof}[Proof of Lemma \ref{lem:posterior_form}]
Recall that we have the following decomposition for the likelihood
\begin{align*}
		\mathcal{L}_n(\beta, Y) \propto  e^{-\frac{1}{2}\|X_1\|^2_2(\beta_1^* - \frac{X_1^TY}{\|X_1\|_2^2})^2} e^{-\frac{1}{2}\|\cW\beta_{-1} - \cY\|_2^2},
	\end{align*}
 where $\cW,\cY$ are defined in \eqref{eq:preprocess}. Using this decomposition and the form of the prior \eqref{general_prior}, we deduce that for any $f_1: \R \rightarrow \R^+$ and $f_2 : \R^{p-1} \rightarrow \R^+$ measurable, we have 
 \begin{align*}
     E(f_1(\beta_1^*)f_2(\beta_{-1}) | Y ) &\propto \int_{\R^p} f_1(\beta_1^*)f_2(\beta_{-1})  e^{-\frac{1}{2}\|X_1\|^2_2(\beta_1^* - \frac{X_1^TY}{\|X_1\|_2^2})^2} e^{-\frac{1}{2}\|\cW\beta_{-1} - \cY\|_2^2} g(\beta_1^*)d\beta_1 dMS_{p-1}(\nu, \lambda)(\beta_{-1}) \\
     &\propto \int_{\R} f_1(u) e^{-\frac{1}{2}\|X_1\|^2_2(u - \frac{X_1^TY}{\|X_1\|_2^2})^2} g(u)du \int_{\R^{p-1}} f_2(\beta_{-1})  e^{-\frac{1}{2}\|\cW\beta_{-1} - \cY\|_2^2}  dMS_{p-1}(\nu, \lambda)(\beta_{-1}).
 \end{align*}
This implies that, under the posterior distribution, $\beta_{-1}$ and $\beta_1^*$ are independent and their distributions are given by \eqref{eq:density_beta_minus_1_1D} and \eqref{density_beta_star_1D} respectively.
The assertion regarding the specific Examples \ref{gaussian_prior} and \ref{improper_prior} can be easily deduced from \eqref{density_beta_star_1D} when $g$ has the specific form chosen in Examples \ref{gaussian_prior} and \ref{improper_prior}.

For the frequentist distribution of $\cY$ under $P_0$, note that $\cY= P^TY = P^T(X\beta^0 +\eps) = P^T(X_{-1}\beta^0_{-1} +\eps) = \cW\beta^0_{-1} + P^T\eps$ and thus $\cY \sim \mathcal{N}(\cW\beta^0_{-1}, P^T P) = \mathcal{N}(\cW\beta^0_{-1}, I_{n-1})$.
\end{proof}

\begin{proof}[Proof of Theorem \ref{thm:asymptotic_normality_variational_1D}]
{\bf Step 1: Convergence of $\beta_{-1}$ to $\beta^0_{-1}$.}  Recall that the VB posterior $\hat{Q}_{-1}$ for $\beta_{-1}$ is the mean-field approximation of $\beta_{-1}$ based on the variational family $\mathcal{Q}_{-1}$ in \eqref{eq:variational_class}. By Lemma \ref{lem:posterior_form}, the posterior for $\beta_{-1}$ equals the posterior distribution in the linear regression model $\cY = \cW\beta_{-1} + \check{\eps}$ with model selection prior $MS_{p-1}(\nu, \lambda)$. Thus we are exactly in the setting studied in Ray and Szab\'o \cite{Ray_Szabo_2020}, applied to this transformed linear regression model. Since $\beta^0$ satisfies the compatibility conditions \eqref{assum::beta_0} and the prior satisfies \eqref{assum::prior_1}, \eqref{assum::prior_2} and $\lambda = O(\|\cW\|\sqrt{\log (p-1)}/s_0)$, applying Theorem 1 in \cite{Ray_Szabo_2020} gives that there exists $M>0$ large enough depending only on the prior such that, with $\eps_n= M \frac{\rho_ns_0\sqrt{\log p}}{\|\cW\|c_0^3}$,
	\begin{align}\label{eq:convergence_beta_minus_1}
		\hat{Q}_{-1}\left(\{\beta_{-1} \in \R^{p-1} , ||\beta_{-1}- \beta_{-1}^0||_1 \leq \eps_n\}\right) = 1+o_P(1).
	\end{align}
Using that $\|\cW\| = \|(I-H)X_{-1}\| $ and \eqref{assum::design_matrix}, the rate $\eps_n$ satisfies
    \begin{align}\label{eq:convergence_beta_minus_k_1}
		\eps_n \|X_1\|_2 \max_{i = 2, \dots, p} |\gamma_i| \rightarrow 0.
	\end{align} 

{\bf Step 2: Showing asymptotic normality.} Denote $B_n=\{\beta_{-1}\in \R^{p-1}: \|\beta_{-1}-\beta^0_{-1}\|_1 \leq \eps_n \}$, $A_n=\R \times B_n$ and let $E^{\hat{Q}}$ be the expectation under the variational posterior $\hat{Q}$. By Lemma \ref{lem:posterior_form} and \eqref{eq:convergence_beta_minus_1}, we have $\hat{Q}(A_n)=\hat{Q}_{-1}(B_n)=1+o_P(1)$. To prove a convergence result as in \eqref{def:semiparam_BvM_1d}, we employ the technique consisting in showing that for all $t\in \R$, the VB posterior Laplace transform $I_n : = E^{\hat{Q}}(e^{t\|X_1\|_2(\beta_1 - \hat{\beta}_1)}1_{A_n})$, restricted to the set $A_n$ on which $\hat{Q}$ concentrates with $P_0$-probability one, converges to that of the limiting Gaussian, which then implies the result (see \cite{Rivoirard_Rousseau_12} and \cite{CR2015}). First,
\begin{align*}
I_n &= E^{\hat{Q}}(e^{t\|X_1\|_2(\beta_1^* - \hat{\beta}_1 - \sum_{i=2}^{p}\gamma_i\beta_i)}1_{A_n}) = E^{\hat{Q}}(e^{t\|X_1\|_2(\beta_1^*- \hat{\beta}_1 - \sum_{i=2}^{p}\gamma_i\beta_i^0 - \sum_{i=2}^{p}\gamma_i(\beta_i - \beta_i^0) )}1_{A_n}).
\end{align*}
Since $\hat{\beta}_1 + \sum_{i=2}^{p}\gamma_i\beta_i^0 = \frac{X_1^T Y}{\|X_1\|_2^2}$ and using that $\beta_1^*$ and $\beta_{-1}$ are independent under $\hat{Q}$, we deduce that
\begin{align}\label{eq1}
	 I_n &= E^{\hat{Q}}(e^{t\|X_1\|_2(\beta_1^*-\frac{X_1^T Y}{\|X_1\|_2^2} - \sum_{i=2}^{p}\gamma_i(\beta_i - \beta_i^0) )}1_{\{\beta_{-1} \in B_n\}}) \nonumber \\
	 &= E^{\hat{Q}}(e^{t\|X_1\|_2(\beta_1^*-\frac{X_1^T Y}{\|X_1\|_2^2})}) E^{\hat{Q}}(e^{-t\|X_1\|_2 \sum_{i=2}^{p}\gamma_i(\beta_i - \beta_i^0) )}1_{\{\beta_{-1} \in B_n\}} ).
\end{align}
For the first term, recall that $\beta_1^* \sim \pi(\beta_1^*|Y)$ has the same distribution as the posterior under $\hat{Q}$ by \eqref{intuitive_approximation}. Thus by Lemma \ref{lem:posterior_form} and making the change of variable $u \rightarrow \frac{u}{\|X_1\|_2} + \frac{X_1^T Y}{\|X_1\|_2^2}$, we have 
\begin{align*}
	 E^{\hat{Q}} \left[ e^{t\|X_1\|_2(\beta_1^*-\frac{X_1^T Y}{\|X_1\|_2^2})} \right] &= \frac{\int_{\R} e^{t\|X_1\|_2\left(u-\frac{X_1^T Y}{\|X_1\|_2^2}\right)} e^{-\frac{\|X_1\|_2^2}{2}\left(u-\frac{X_1^T Y}{\|X_1\|_2^2}\right)^2} g(u)du}{\int_{\R} e^{-\frac{\|X_1\|_2^2}{2}\left(u-\frac{X_1^T Y}{\|X_1\|_2^2}\right)^2} g(u)du} \\
  & = \frac{\int_{\R} e^{tu} e^{-\frac{u^2}{2}} g\left(\frac{u}{\|X_1\|_2} + \frac{X_1^T Y}{\|X_1\|_2^2}\right)du}{\int_{\R} e^{-\frac{u^2}{2}} g\left(\frac{u}{\|X_1\|_2} + \frac{X_1^T Y}{\|X_1\|_2^2}\right)du}.
\end{align*}
Now using the fact that 
\begin{align*}
	g\left(\frac{u}{\|X_1\|_2} + \frac{X_1^T Y}{\|X_1\|_2^2} \right) = g\left(\frac{X_1^T Y}{\|X_1\|_2^2}\right) e^{\log g \left( \frac{u}{\|X_1\|_2} + \frac{X_1^T Y}{\|X_1\|_2^2}\right) - \log g \left( \frac{X_1^T Y}{\|X_1\|_2^2} \right)}, 
\end{align*}
that $\log g $ is $c$-Lipschitz with $c=o(\|X_1\|_2)$ and the dominated convergence theorem, we have
\begin{align}\label{eq2}
	  E^{\hat{Q}} \left[ e^{t\|X_1\|_2\left(\beta_1^*-\frac{X_1^T Y}{\|X_1\|_2^2}\right)} \right]
	\leq \frac{\int_{\R}  e^{tu} e^{-\frac{1}{2}u^2} e^{c\frac{|u|}{\|X_1\|_2}} du}{\int_{\R} e^{-\frac{1}{2}u^2} e^{- c\frac{|u|}{\|X_1\|_2}} du} = e^{\frac{t^2}{2}}(1 + o(1))
\end{align}
as $\|X_1\|_2\to \infty$. Combining \eqref{eq2} with \eqref{eq1} yields  
\begin{align*}
	I_n &\leq  e^{\frac{t^2}{2}} E^{\hat{Q}}(e^{|t| \|X_1\|_2 \max_{i=2, \dots, p} |\gamma_i| \|\beta_{-1} - \beta_{-1}^0\|_1}1_{\{\beta_{-1} \in B_n\}} ) (1 + o(1)) \nonumber\\
	&\leq e^{\frac{t^2}{2}} e^{|t| \|X_1\|_2 \max_{i=2, \dots, p}| \gamma_i| \eps_n} \hat{Q}(A_n)(1+o(1)) = e^{\frac{t^2}{2}}(1+o_P(1))
\end{align*}
where we used \eqref{eq:convergence_beta_minus_k_1} in the last equality. The lower bound $I_n \geq e^{\frac{t^2}{2}} (1+ o_P(1))$ follows similarly. Thus we deduce that $I_n = e^{\frac{t^2}{2}} (1+ o_P(1))$. Since convergence of the Laplace transform $I_n$ for all $t\in \R$ implies weak convergence (cf  \cite{Rivoirard_Rousseau_12} or \cite{CR2015}), this yields the result.
\end{proof}

\begin{proof}[Proof of Lemma \ref{lem:asymptotic_normality_gaussian_variational}]
The Gaussian prior $g\sim \mathcal{N}(0,\sigma_n^2)$ we consider here satisfies all the conditions of Theorem \ref{thm:asymptotic_normality_variational_1D}, except that $\log g$ is not Lipschitz. Thus to prove Lemma \ref{lem:asymptotic_normality_gaussian_variational}, we need just make some small adjustments to the proof of Theorem \ref{thm:asymptotic_normality_variational_1D}. {\bf Step 1} holds as before, while {\bf Step 2} holds until \eqref{eq1}. The only difference is then the calculations for the first term in \eqref{eq1}. Since $ \beta_1^*|Y  \sim \mathcal{N}\left(\frac{\sigma_n^2 X_1^TY}{\|X_1\|_2^2\sigma_n^2 +1}, \frac{\sigma_n^2}{\|X_1\|_2^2\sigma_n^2 +1} \right)$ under the posterior, and hence also under $\hat{Q}$ by \eqref{intuitive_approximation}, and $X_1^T Y = X_1^T X\beta^0 + X_1^T \eps = \|X_1\|_2^2 \beta_1^0 + \|X_1\|_2^2 \sum_{i=2}^p \gamma_i \beta_i^0 + X_1^T \eps $ under $P_0$,
\begin{align*}
    E^{\hat{Q}} \left[ e^{t\|X_1\|_2(\beta_1^*-\frac{X_1^T Y}{\|X_1\|_2^2})} \right] &= \exp \left(  t\|X_1\|_2 \left[ \frac{\sigma_n^2 X_1^TY}{\|X_1\|_2^2\sigma_n^2 +1} -\frac{X_1^T Y}{\|X_1\|_2^2} \right]  + \frac{t^2}{2} \frac{\|X_1\|_2^2 \sigma_n^2}{\|X_1\|_2^2 \sigma_n^2 +1 }  \right) \\
    & = \exp \left( -t \frac{X_1^TY}{(\|X_1\|_2^2\sigma_n^2 +1) \|X_1\|_2}  + \frac{t^2}{2} \frac{\|X_1\|_2^2 \sigma_n^2}{\|X_1\|_2^2 \sigma_n^2 +1 } \right) \\
    &= \exp \left( -t\|X_1\|_2 \frac{\beta_1^0 + \sum_{i=2}^{p}\gamma_i\beta_i^0}{\|X_1\|_2^2\sigma_n^2 + 1} -t\frac{X_1^T\eps}{(\|X_1\|_2^2\sigma_n^2 + 1)\|X_1\|_2}  + \frac{t^2}{2} \frac{\|X_1\|_2^2 \sigma_n^2}{\|X_1\|_2^2 \sigma_n^2 +1 }.\right).
\end{align*}
Using the assumptions \eqref{assum:asymp_nor_gauss} and the fact that $\frac{X_1^T\eps}{(\|X_1\|_2^2\sigma_n^2 + 1)\|X_1\|_2} \sim \mathcal{N}(0, (\|X_1\|_2^2\sigma_n^2 + 1)^{-2}) = o_P(1)$, one deduces that $E^{\hat{Q}}(e^{t\|X_1\|_2(\beta_1^*-\frac{X_1^T Y}{\|X_1\|_2^2})}) = e^{\frac{t^2}{2}}(1+o_P(1))$. 
Following the rest of the proof of Theorem \ref{thm:asymptotic_normality_variational_1D} yields the result. 
\end{proof}

\subsection{Proofs for the k-dimensional cases}\label{sec:proof_kd}
To prove Theorem \ref{thm:asymptotic_normality_variational_kD}, we first need the following Lemma, which is a $k-$dimensional version of Lemma \ref{lem:posterior_form}.
\begin{lemma}[Posterior distribution for $k\geq 1$]\label{lem:posterior_form_kD}
 	Let $\Pi$ be the prior \eqref{general_prior_kd}. Then under the posterior distribution, $\beta_{-k}$ and $\beta_{1:k}^*$ are independent, and their distributions takes the form
 	\begin{align}\label{eq:density_beta_minus_k_kD}
 		d\pi(\beta_{-k} | Y) &\propto {e^{-\frac{1}{2} \| \cW_k\beta_{-k} - \cY_k \|^2_2}} dMS_{p-k}(\nu, \lambda)(\beta_{-k}),
 	\end{align}	
 	\begin{align}\label{density_beta_star_kD}
 		d\pi(\beta_{1:k}^* |Y) &\propto e^{-\frac{1}{2} (\beta_{1:k}^* - \Sigma_k X_{1:k}^TY )^T \Sigma_k^{-1} (\beta_{1:k}^* - \Sigma_k X_{1:k}^TY)} g(\beta_{1:k}^*)d\beta_{1:k}^*.
 	\end{align}
  Under the frequentist assumption $Y \sim \mathcal{N}_n( X\beta^0,I_n)$, we have $\cY_k \sim\mathcal{N}(\cW_k\beta^0_{-k} , I_{n-k})$.
 \end{lemma}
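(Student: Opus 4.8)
The plan is to follow the proof of Lemma~\ref{lem:posterior_form} almost verbatim, replacing the rank-one projection $H$ onto $\Span(X_1)$ by the rank-$k$ projection $H_k = X_{1:k}\Sigma_k X_{1:k}^T$ onto $\Span(X_1,\dots,X_k)$. First I would record the likelihood decomposition already stated in the main text: writing $I_n = H_k + (I_n-H_k)$ and using $H_kX_{1:k}=X_{1:k}$, $(I_n-H_k)X_{1:k}=0$, $I_n-H_k=P_kP_k^T$ and $P_k^TP_k=I_{n-k}$, one obtains
\[
\mathcal{L}_n(\beta,Y) \propto \exp\Big\{-\tfrac12\big\|H_kY - X_{1:k}\beta_{1:k}^*\big\|_2^2\Big\}\exp\Big\{-\tfrac12\big\|\cY_k - \cW_k\beta_{-k}\big\|_2^2\Big\},
\]
with $\beta_{1:k}^* = \beta_{1:k} + \Sigma_k X_{1:k}^T X_{-k}\beta_{-k}$. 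The one genuinely new computation compared with $k=1$ is to rewrite the first exponent as the Gaussian quadratic form in \eqref{density_beta_star_kD}: since $X_{1:k}^TX_{1:k}=\Sigma_k^{-1}$,
\[
\big\|H_kY - X_{1:k}\beta_{1:k}^*\big\|_2^2 = \big\|X_{1:k}\big(\Sigma_k X_{1:k}^TY - \beta_{1:k}^*\big)\big\|_2^2 = \big(\beta_{1:k}^* - \Sigma_k X_{1:k}^TY\big)^T\Sigma_k^{-1}\big(\beta_{1:k}^* - \Sigma_k X_{1:k}^TY\big).
\]

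Next I would note that, exactly as for $k=1$, the prior \eqref{general_prior_kd} is precisely the law of $(\beta_{1:k}^*,\beta_{-k})$ under which $\beta_{1:k}^*\sim g\,d\Lambda$ and $\beta_{-k}\sim MS_{p-k}(\nu,\lambda)$ are independent, because for fixed $\beta_{-k}$ the map $\beta_{1:k}\mapsto\beta_{1:k}^*$ is a translation and hence preserves Lebesgue measure on $\R^k$. Then, for measurable $f_1:\R^k\to\R^+$ and $f_2:\R^{p-k}\to\R^+$, I would compute $E\big(f_1(\beta_{1:k}^*)f_2(\beta_{-k})\mid Y\big)$ by changing variables to $(u,\beta_{-k})=(\beta_{1:k}^*,\beta_{-k})$; since both the likelihood (by the two displays above) and the prior factorize as a product of a function of $u$ and a function of $\beta_{-k}$, the posterior expectation splits as the product of an integral over $\R^k$ and an integral over $\R^{p-k}$. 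This yields posterior independence of $\beta_{1:k}^*$ and $\beta_{-k}$ and identifies the marginals \eqref{eq:density_beta_minus_k_kD} and \eqref{density_beta_star_kD}.

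Finally, for the frequentist statement I would use that the columns of $P_k$ form an orthonormal basis of $\Span(X_1,\dots,X_k)^\perp$, so $P_k^TX_{1:k}=0$ and hence $\cY_k = P_k^TY = P_k^T\big(X_{1:k}\beta^0_{1:k} + X_{-k}\beta^0_{-k} + \eps\big) = \cW_k\beta^0_{-k} + P_k^T\eps$, with $P_k^T\eps\sim\mathcal{N}_{n-k}(0,P_k^TP_k)=\mathcal{N}_{n-k}(0,I_{n-k})$.

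There is no substantial obstacle here: the argument is purely algebraic and parallels Lemma~\ref{lem:posterior_form}. The only place warranting care is the matrix identity in the second display, i.e.\ recognizing $\|H_kY - X_{1:k}\beta_{1:k}^*\|_2^2$ as a quadratic form with precision $\Sigma_k^{-1}=X_{1:k}^TX_{1:k}$, together with the (easy but necessary) observation that the reparametrization $\beta\mapsto(\beta_{1:k}^*,\beta_{-k})$ is a bijection with unit Jacobian, so that the conditional prior $\beta_{1:k}\mid\beta_{-k}$ in \eqref{general_prior_kd} corresponds exactly to $\beta_{1:k}^*\sim g$.
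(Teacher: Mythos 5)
Your proposal is correct and follows essentially the same route as the paper's proof: decompose the likelihood via $I_n = H_k + (I_n-H_k)$, factorize the posterior expectation of $f_1(\beta_{1:k}^*)f_2(\beta_{-k})$ into a product of integrals to get independence and the marginals, and compute $\cY_k = \cW_k\beta^0_{-k} + P_k^T\eps$ for the frequentist claim. Your explicit verification that $\|H_kY - X_{1:k}\beta_{1:k}^*\|_2^2$ equals the quadratic form with precision $\Sigma_k^{-1}$ is a detail the paper leaves implicit, but it is the same argument.
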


\begin{proof}[Proof of Lemma \ref{lem:posterior_form_kD}]
Recall that we have the following decomposition for the likelihood
\begin{align*}
		\mathcal{L}_n(\beta, Y) \propto \exp\left\{-\frac{1}{2}\left\|H_kY - X_{1:k} \beta_{1:k}^*\right\|_2^2 \right\} \cdot \exp\left\{-\frac{1}{2}\|\cY_k - \cW_k \beta_{-k}\|_2^2 \right\}.	\end{align*}
 Using this decomposition and the form of the prior, we deduce that for any $f_1: \R^k \rightarrow \R^+$ and $f_2 : \R^{p-k} \rightarrow \R^+$ mesurable, we have 
 \begin{align*}
     E(f_1(\beta_{1:k}^*)f_2(\beta_{-k}) | Y ) &\propto \int_{\R^p} f_1(\beta_{1:k}^*)f_2(\beta_{-k})  e^{-\frac{1}{2}\left\|H_kY - X_{1:k} \beta_{1:k}^*\right\|_2^2} e^{-\frac{1}{2}\|\cY_k - \cW_k \beta_{-k}\|_2^2}	g(\beta_{1:k}^*) d\beta_{1:k}^* dMS_{p-k}(\nu, \lambda)(\beta_{-k}) \\
     &\propto \int_{\R^k} f_1(\beta_{1:k}^*) e^{-\frac{1}{2}\left\|H_kY - X_{1:k} \beta_{1:k}^*\right\|_2^2} g(\beta_{1:k}^*)d\beta_{1:k}^* \\ &\quad \times \int_{\R^{p-k}} f_2(\beta_{-k})  e^{-\frac{1}{2}\|\cY_k - \cW_k \beta_{-k}\|_2^2} dMS_{p-k}(\nu, \lambda)(\beta_{-k}).
 \end{align*}
This implies that, under the posterior distribution, $\beta_{-k}$ and $\beta_{1:k}^*$ are independent and their distributions are given by \eqref{eq:density_beta_minus_k_kD} and \eqref{density_beta_star_kD}, respectively.

For the frequentist distribution of $\cY$ under $P_0$, note that $\cY_k= P^T_kY = P^T_k(X_{-k}\beta^0_{-k} +\eps) = \cW_k\beta^0_{-k} + P^T_k\eps$ and thus $\cY_k \sim \mathcal{N}(\cW_k\beta^0_{-k}, P^T_k P_k) = \mathcal{N}(\cW_k\beta^0_{-k}, I_{n-k})$ under the frequentist assumption.
\end{proof}

\begin{proof}[Proof of Theorem \ref{thm:asymptotic_normality_variational_kD}] The proof is very similar to the proof of Theorem \ref{thm:asymptotic_normality_variational_1D}.

{\bf Step 1: Convergence of $\beta_{-k}$ to $\beta^0_{-k}$.}  Recall that the variational posterior of $\beta_{-k}$, $\hat{Q}_{-k}$, is the mean-field approximation of the posterior distribution of $\beta_{-k}$ which is distributed according to the posterior distribution in a linear regression model induced by the model selection prior $MS_{p-k}(\nu, \lambda)$. Since $\beta^0$ satisfies \eqref{assum::beta_0_k}, the prior satisfies \eqref{assum::prior_1}, \eqref{assum::prior_2_kD} and $\lambda = O(\|\cW\|\sqrt{\log (p-k)}/s_0)$, applying Theorem 1 in \cite{Ray_Szabo_2020} gives that there exists $M$ large enough such that, with $\eps_n= M \frac{\rho_ns_0\sqrt{\log (p-k)}}{\|\cW_k\|c_0^3}$, we have that
	\begin{align*}
		\hat{Q}_{-k}\left(\{\beta_{-k} \in \R^{p-k} , ||\beta_{-k}- \beta_{-k}^0||_1 \leq \eps_n \}\right) = 1+o_P(1).
	\end{align*}
	Using that $\|\cW_k\| = \|(I-H)X_{-k}\| $ and \eqref{assum::design_matrix_kD}, the rate $\eps_n$ satisfies
    \begin{align}\label{eq:convergence_beta_minus_k_1_kD}
		\eps_n \max_{i=k+1, \dots, p}\|HX_{i}\|_2  \rightarrow 0.
	\end{align}
 
{\bf Step 2: Showing asymptotic normality. }
 Denote $B_n=\{\beta_{-k}\in \R^{p-k}: \|\beta_{-k}-\beta^0_{-k}\|_1 \leq \eps_n \}$ and$A_n=\R^k \times B_n$. By Lemma \ref{lem:posterior_form_kD} and \eqref{eq:convergence_beta_minus_k_1_kD}, we have $\hat{Q}(A_n)=\hat{Q}_{-k}(B_n)=1+o_P(1)$. As in the proof of Theorem \ref{thm:asymptotic_normality_variational_1D}, we now want to study, for all $t\in \R^k$, the multidimensional VB posterior Laplace transform $I_n : = E^{\hat{Q}}(e^{t^TL_k^{-1}(\beta_{1:k} - \hat{\beta}_k)}1_{A_n})$. First, we have 
\begin{align*}
I_n &= E^{\hat{Q}}(e^{t^TL_k^{-1}(\beta_{1:k}^* - \hat{\beta}_k - \Sigma_k X_{1:k}^T X_{-k}\beta_{-k})}1_{A_n} ) =  E^{\hat{Q}}(e^{t^TL_k^{-1}(\beta_{1:k}^* - \hat{\beta}_k - \Sigma_k X_{1:k}^T X_{-k}\beta_{-k}^0 - \Sigma_k X_{1:k}^T X_{-k}(\beta_{-k}-\beta_{-k}^0))}1_{A_n}).
\end{align*}
Since $\hat{\beta}_k + \Sigma_k X_{1:k}^T X_{-k}\beta_{-k}^0 = \Sigma_k X_{1:k}^T Y $ and again using the independence of $\beta_{1:k}^*$ and $\beta_{-k}$ under $\hat{Q}$,
\begin{align}\label{eq1_kd}
	 I_n &= E^{\hat{Q}}(e^{t^TL_k^{-1}(\beta_{1:k}^* - \Sigma_k X_{1:k}^T Y - \Sigma_k X_{1:k}^T X_{-k}(\beta_{-k}-\beta_{-k}^0))}1_{\{\beta_{-k} \in B_n\}}) \nonumber \\
	 &= E^{\hat{Q}}(e^{t^TL_k^{-1}(\beta_{1:k}^* - \Sigma_k X_{1:k}^T Y)}) E^{\hat{Q}}(e^{-t^TL_k^{-1} \Sigma_k X_{1:k}^T X_{-k}(\beta_{-k}-\beta_{-k}^0))}1_{\{\beta_{-k} \in B_n\}}).
\end{align}
For the first term, using Lemma \ref{lem:posterior_form_kD}, and the change of variable $ U \rightarrow L_k U  + \Sigma_k X_{1:k}^T Y$,
\begin{align*}
	E^{\hat{Q}}(e^{t^TL_k^{-1}(\beta_{1:k}^* - \Sigma_k X_{1:k}^T Y)})  = \frac{\int_{\R^k} e^{t^TU} e^{-\frac{\|U\|_2^2}{2}} g(L_k U  + \Sigma_k X_{1:k}^T Y)d U}{\int_{\R^k} e^{-\frac{\|U\|_2^2}{2}} g(L_k U  + \Sigma_k X_{1:k}^T Y)d U}.
\end{align*}
Now using the fact that 
	\begin{align*}
		g(L_k U + \Sigma_k X_{1:k}^T Y) = g(\Sigma_k X_{1:k}^T Y) e^{\log g(L_k U + \Sigma_k X_{1:k}^T Y ) - \log g( \Sigma_k X_{1:k}^T Y )},
	\end{align*}
	and that $\log g $ is $c$-Lipschitz by assumption with $c=o(\|L_k\|^{-1})$, the inequality $c\|L_k U\|_2 \leq c C \|L_k\|\|U \|_2$, and the dominated convergence theorem, we have the following upper bound
	\begin{align}\label{eq2_kD}
		E^{\hat{Q}}(e^{t^TL_k^{-1}(\beta_{1:k}^* - \Sigma_k X_{1:k}^T Y)})
		\leq \frac{\int_{\R^k}  e^{t^TU} e^{-\frac{1}{2}\|U\|_2^2} e^{c C \|L_k\| \|U\|_2} dU}{\int_{\R^k} e^{-\frac{1}{2}\|U\|_2^2} e^{- c C \|L_k\| \|U\|_2} dU} = e^{\frac{\|t\|_2^2}{2}}(1+o(1)).
	\end{align}
Let us now bound the second term in \eqref{eq1_kd}. Since \begin{align*}
		\|L_k^{-1}\Sigma_k X_{1:k}^TX_{-k}(\beta^0_{-k}-\beta_{-k})\|^2_2 &= (\Sigma_k X_{1:k}^TX_{-k}(\beta^0_{-k}-\beta_{-k}))^T (L_k^{-1})^TL_k^{-1}  (\Sigma_k X_{1:k}^TX_{-k}(\beta^0_{-k}-\beta_{-k})) \\
		&= (\Sigma_k X_{1:k}^TX_{-k}(\beta^0_{-k}-\beta_{-k}))^T \Sigma_k^{-1}  (\Sigma_k X_{1:k}^TX_{-k}(\beta^0_{-k}-\beta_{-k}))\\
		&= \|X_{1:k}\Sigma_k X_{1:k}^TX_{-k}(\beta^0_{-k}-\beta_{-k})\|_2^2 
		=
		\|HX_{-k}(\beta^0_{-k}-\beta_{-k})\|_2^2,
	\end{align*} we have 
	\begin{align}\label{majo2}
		\|L_k^{-1}\Sigma_k X_{1:k}^TX_{-k}(\beta^0_{-k}-\beta_{-k})\|_2  \leq \max_{i=k+1,\dots, p} \|HX_{i}\|_2 \| \beta^0_{-k}-\beta_{-k}\|_1.
	\end{align}
Combining \eqref{eq1_kd}, \eqref{eq2_kD} and \eqref{majo2}, we deduce
\begin{align*}
    I_n &\leq e^{\frac{\|t\|_2^2}{2}} E^{\hat{Q}}(e^{\|t\|_2 \max_{i=k+1,\dots, p} \|HX_{i}\|_2 \| \beta^0_{-k}-\beta_{-k}\|_1}1_{\{\beta_{-k} \in B_n\}}) (1+o(1)) \\
    &\leq  e^{\frac{\|t\|_2^2}{2}}
    e^{\|t\|_2 \max_{i=k+1,\dots, p} \|HX_{i}\|_2 \eps_n}\hat{Q}( A_n )(1+o(1)) = e^{\frac{\|t\|_2^2}{2}}(1+o_P(1)).
\end{align*}
The lower bound $I_n \geq e^{\frac{\|t\|_2^2}{2}} (1+ o_P(1))$ follows similarly. Thus we deduce $I_n = e^{\frac{\|t\|_2^2}{2}} (1+ o_P(1))$. Since the convergence of $I_n$ implies weak convergence, this yields the result.
\end{proof}

\section{Computational cost}\label{sec:computational_cost}

We discuss the computational cost of Algorithm \ref{alg:sample} for sampling from the VB posterior when using Coordinate Ascent Variational Inference (CAVI) (e.g. using the R-package \texttt{sparsevb} \cite{sparsevb}) and an improper prior slab $g \propto 1$. We show that one can generate $n_s$ VB posterior samples at cost $O(n^2 p + cost_{CAVI} + n_s p)$, where $cost_{CAVI}$ is the cost of running the CAVI algorithm to solve \eqref{eq:optimization_k=1}. Since the optimization problem \eqref{eq:optimization_k=1} is high-dimensional and non-convex, we discuss its computational cost in more detail below.

For the pre-computations, one can compute the orthonormal basis $\{u_1,\dots,u_{n-1}\}$ of $\textrm{span}(X_1)^\perp$, and hence $P$, in $O(n^2)$ time using Householder reflection. One can then compute $\cY = P^TY$ and $\cW = P^TX_{-1}$ in $O(n^2 p)$ time using standard matrix multiplications, which dominates the cost of computing $P$. Since one can similarly compute $(\gamma_i)_{i=2}^p$ with cost $O(n^2 p)$, the overall cost of the pre-computations is $O(n^2 p)$.

Turning to $cost_{CAVI}$, each CAVI iteration cycles through the $O(p)$ parameters $(\mu_i,\tau_i^2,q_i)$ in the variational family \eqref{eq:variational_class}, updating one parameter at a time keeping all other paramters fixed using the updating equations (16)-(17) in \cite{Ray_Szabo_2020}. These equations require upfront computation of the Gram-matrix $X^T X$ as well as further $O(np)$ computations. Computing $X^TX$ exactly costs $O(np^2)$, but efficient low-approximations exist that can reduce this cost, such as Nystr\"om approximations \cite{Drineas2005}. Each $q_i$ update can then be performed exactly and requires $O(p)$ computations, while each $\mu_i$ and $\tau_i^2$ update can be performed using any one-dimensional optimization algorithm (e.g. the limited memory Broyden-Fletcher-Goldfarb-Shanno (BFGS) algorithm is used in \cite{sparsevb}). The cost per iteration of CAVI is thus $O(p^2 + p\cdot cost_{1D})$ with an upfront cost of $O(np^2)$, where $cost_{1D}$ is the cost of the one-dimensional optimizer used. Note that \cite{Ray_Szabo_2020} recommends using a data-driven initialization and updating order, which requires pre-computation of an estimator such as the LASSO.

Lastly, each VB posterior sample $\beta_1 = (\beta_1^*) - \sum_{i\geq 2} \gamma_i \beta_i$ requires $O(1)$ to sample $\beta_1^*$ (Lemma \ref{lem:posterior_form} after an upfront cost of $O(n^2)$), $O(p)$ to sample $\beta_{-1} \sim \hat{Q}_{-1}$ (see \eqref{eq:variational_class}) and $O(p)$ to compute, leading to an overall cost of $O(n^2 + n_s p)$. Summing these gives the overall cost $O(n^2 p + cost_{CAVI} + n_s p)$ stated above. 

Note that in general, Algorithm \ref{alg:sample} requires only a pre-processing step of cost $O(n^2p)$ to yield two separate optimization problems for the computation of $\beta_1^*$ and $\beta_{-1}$, which are just transformed linear regression problems which can be solved separately and in parallel using whichever algorithm is desired. Let $cost_{\beta_1^*,n_s}$ be the cost of generating $n_s$ samples of $\beta_1^*$ (e.g. using conjugacy if $g\propto 1$ or MCMC if $g$ is a Laplace or other distribution) and $cost_{\beta_{-1},n_s}$ be the cost of generating $n_s$ samples of $\beta_{-1}$ from the variational posterior $\hat{Q}_{-1}$ (e.g. using CAVI or another sampling algorithm). Then the overall cost of the method is $O(n^2 p + cost_{\beta_1^*,n_s} + cost_{\beta_{-1},n_s} + n_s p)$.

\section{Additional Simulations} \label{sec:additional_simulations}
We present here additional simulations, including a comparison of the different choices for the prior $g$ and what happens if one does not randomize the nuisance parameter $\beta_{-1}$.

\subsection{Comparing the priors on $g$}\label{sec:g_prior}

We compare the performance of our method \eqref{general_prior} using three different priors for $g$, namely the Laplace (Example \ref{laplace_prior}), Gaussian (Example \ref{gaussian_prior}) and improper (Example \ref{improper_prior}). We denote these three methods by L-SVB-$\sigma$, G-SVB-$\sigma$ and I-SVB, respectively, where $\sigma$ represents the prior standard deviation. For both the Laplace and Gaussian priors, we use one prior with a relatively light tail ($\sigma = 1$) and one with a heavier tail ($\sigma = 4$).
Again, we parameterise each scenario by the tuple $(n, p, s_0, \beta_1^0, \beta_j^0, \rho, \sigma^2)$ with $n$ observations, $\beta^0 \in \R^p$ having sparsity $s_0$, and where the true value of interest is given by $\beta_1^0$, while the other non-zero entries of $\beta^0$ are given by $\beta^0_j$. We take the rows $X_{i,\cdot}\sim^{iid} \mathcal{N}_p(0,\Sigma_\rho)$, where the diagonal entries of $\Sigma_\rho$ are 1 and the off-diagonal entries are given by $\rho$ (so that $\rho$ represents the correlation between any pair of features). 

For each scenario, we simulate 500 sets of observations and for each set of observations compute a $95\%-$credible interval for each method: we take 1000 samples from the variational posterior and use the empirical quantiles. For each method we assess: $(i)$ the coverage (proportion of intervals containing the true value); $(ii)$ the mean absolute error of the centering of the intervals as an estimator for the truth; $(iii)$ the mean length of the intervals; and $(iv)$ the mean computation time. Results are found in Table \ref{Tab:experiments_svb_comp}.

The main observation is that most of the calibrations perform similarly to one another. Indeed, the heavy tailed methods (every method apart from G-SVB-1) exhibit similar performance in every scenario. Indeed, it is only for very large signals (scenarios $(ii)$, $(iv)$ and $(v)$) that G-SVB-1 drops in performance due to an increase in bias. Otherwise, the length and bias of the credible intervals are consistent within each scenario for every calibration, and they all take a similar time to fit. This shows that our method is not sensitive to the choice of $g$, as long as sufficently heavy tails are used. Note that using an improper $g \propto 1$ avoids the need to choose a variance hyperparameter $\sigma$.

\begin{table}
\centering
\begin{tabular}{r|r|cccc}
\toprule
Scenario $(n, p, s_0, \beta_1^0, \beta_k^0, \rho, \sigma^2)$ & Method & Cov.  & MAE           & Length        & Time\\
\hline
$(i)$                                                        & I-SVB   & 0.934 & 0.078 ± 0.064 & 0.401 ± 0.033 & 0.257 ± 0.065\\
$(100, 1000, 3, \log n, \log n, 0, 1)$                       & G-SVB-1 & 0.926 & 0.086 ± 0.069 & 0.400 ± 0.032 & 0.224 ± 0.053\\
                                                             & G-SVB-4 & 0.940 & 0.078 ± 0.064 & 0.402 ± 0.033 & 0.235 ± 0.067\\
                                                             & L-SVB-1 & 0.936 & 0.078 ± 0.065 & 0.402 ± 0.032 & 0.230 ± 0.058\\
                                                             & L-SVB-4 & 0.938 & 0.078 ± 0.064 & 0.401 ± 0.033 & 0.228 ± 0.057\\
\hline
$(ii)$                                                       & I-SVB   & 0.942 & 0.083 ± 0.064 & 0.400 ± 0.032 & 0.223 ± 0.072\\
$(100, 1000, 3, 2\log n, 2\log n, 0, 1)$                     & G-SVB-1 & 0.844 & 0.112 ± 0.081 & 0.398 ± 0.031 & 0.203 ± 0.048\\
                                                             & G-SVB-4 & 0.948 & 0.083 ± 0.063 & 0.401 ± 0.032 & 0.201 ± 0.046\\
                                                             & L-SVB-1 & 0.942 & 0.083 ± 0.064 & 0.400 ± 0.032 & 0.206 ± 0.056\\
                                                             & L-SVB-4 & 0.942 & 0.083 ± 0.063 & 0.400 ± 0.031 & 0.205 ± 0.052\\
\hline
$(iii)$                                                      & I-SVB   & 0.978 & 0.074 ± 0.054 & 0.414 ± 0.031 & 0.340 ± 0.059\\
$(200, 800, 3, \log n, \log n, 0.5, 1)$                      & G-SVB-1 & 0.964 & 0.082 ± 0.060 & 0.414 ± 0.032 & 0.338 ± 0.063\\
                                                             & G-SVB-4 & 0.978 & 0.074 ± 0.054 & 0.414 ± 0.031 & 0.336 ± 0.061\\
                                                             & L-SVB-1 & 0.980 & 0.074 ± 0.054 & 0.413 ± 0.033 & 0.330 ± 0.054\\
                                                             & L-SVB-4 & 0.982 & 0.073 ± 0.054 & 0.414 ± 0.032 & 0.337 ± 0.064\\
\hline
$(iv)$                                                       & I-SVB   & 0.980 & 0.071 ± 0.053 & 0.414 ± 0.031 & 0.340 ± 0.058\\
$(200, 800, 3, 2\log n, 2\log n, 0.5, 1)$                    & G-SVB-1 & 0.908 & 0.101 ± 0.069 & 0.413 ± 0.030 & 0.336 ± 0.061\\
                                                             & G-SVB-4 & 0.978 & 0.070 ± 0.053 & 0.415 ± 0.031 & 0.339 ± 0.067\\
                                                             & L-SVB-1 & 0.986 & 0.071 ± 0.053 & 0.414 ± 0.030 & 0.338 ± 0.064\\
                                                             & L-SVB-4 & 0.982 & 0.071 ± 0.053 & 0.413 ± 0.031 & 0.338 ± 0.072\\
\hline
$(v)$                                                        & I-SVB & 0.974 & 0.075 ± 0.056 & 0.412 ± 0.028 & 0.352 ± 0.094\\
$(200, 800, 10, \log n^2, \log n^2, 0.5, 1)$                 & G-SVB-1 & 0.232 & 0.275 ± 0.095 & 0.413 ± 0.029 & 0.347 ± 0.108\\
                                                             & G-SVB-4 & 0.974 & 0.074 ± 0.057 & 0.413 ± 0.028 & 0.339 ± 0.086\\
                                                             & L-SVB-1 & 0.974 & 0.074 ± 0.056 & 0.412 ± 0.029 & 0.338 ± 0.090\\
                                                             & L-SVB-4 & 0.972 & 0.074 ± 0.056 & 0.411 ± 0.029 & 0.340 ± 0.095\\
\hline
$(vi)$                                                       & I-SVB & 0.954 & 0.059 ± 0.047 & 0.298 ± 0.018 & 0.417 ± 0.142\\
$(200, 800, 10, 2\log n, 2\log n, 0.9, 1)$                   & G-SVB-1 & 0.938 & 0.063 ± 0.046 & 0.298 ± 0.019 & 0.410 ± 0.140\\
                                                             & G-SVB-4 & 0.952 & 0.059 ± 0.046 & 0.298 ± 0.018 & 0.398 ± 0.130\\
                                                             & L-SVB-1 & 0.948 & 0.060 ± 0.046 & 0.298 ± 0.019 & 0.400 ± 0.142\\
                                                             & L-SVB-4 & 0.954 & 0.059 ± 0.046 & 0.298 ± 0.019 & 0.397 ± 0.141\\
\bottomrule
\end{tabular}
\caption{Assessing the performance of the uncertainty quantification provided by each debiased SVB method in 6 different scenarios.}
\label{Tab:experiments_svb_comp}
\end{table}

\subsection{The role of randomizing $\beta_{-k}$}\label{sec:no_randomization}

The asymptotic theory suggests that in \textit{low-correlation} settings, the variance of $\beta_{-1}$ is of smaller order than that of $\beta_1^*$ and hence one may not need to randomize $\beta_{-1}$. We show here that even in such low-correlation settings, it is still beneficial to include the VB posterior variance of $\beta_{-1}$ since this leads to better finite-sample performance. In \textit{moderate} or \textit{high} correlation settings, the covariance of the debiasing term coming from $\beta_{-k}$ plays a crucial role in capturing the correct covariance structure of the low-dimensional parameter, and thus should not be omitted, see Figure \ref{fig:cov_structure_full_corr} below. We therefore always recommend to randomize $\beta_{-k}$ according to the variational family $\hat{Q}_{-k}$.

Turning to the case $k=1$, recall that we sample $\beta_{-1} \sim \hat{Q}_{-1}$ and $\beta_1^* \sim \pi(\beta_1^* | Y)$ independently, and then compute $\beta_1 = \beta_1^* - \sum_{i \geq 2} \gamma_i \beta_i$. Here we explore what happens when instead of sampling $\beta_{-1} \sim \hat{Q}_{-1}$, we instead just use the variational posterior mean of $\beta_{-1}$ given by $\bar{\beta}_i = q_i \mu_i$. For $(n, p, s_0, \beta_1^0, \beta_j^0, \rho) = (100, 200, 10, \log n, \log n, 0)$, Figure \ref{fig:using_vb_mean} plots a histogram of 100,000 VB posterior samples both with $\beta_{-1} \sim \hat{Q}_{-1}$ and with $\beta_{-1}$ set to be the VB posterior mean. As expected, adding the (independent) variance component from $\beta_{-1}\sim \hat{Q}_{-1}$ noticeably increases the VB posterior uncertainty. This leads to slightly worse performance, as evidenced in Table \ref{Tab:using_vb_mean}, which shows estimates of the coverage over 1000 realisations of the dataset. Using the VB mean yields slight undercoverage with only minimal time savings in this fully uncorrelated setting.

\begin{figure}
    \centering
    \includegraphics[width=0.75\linewidth]{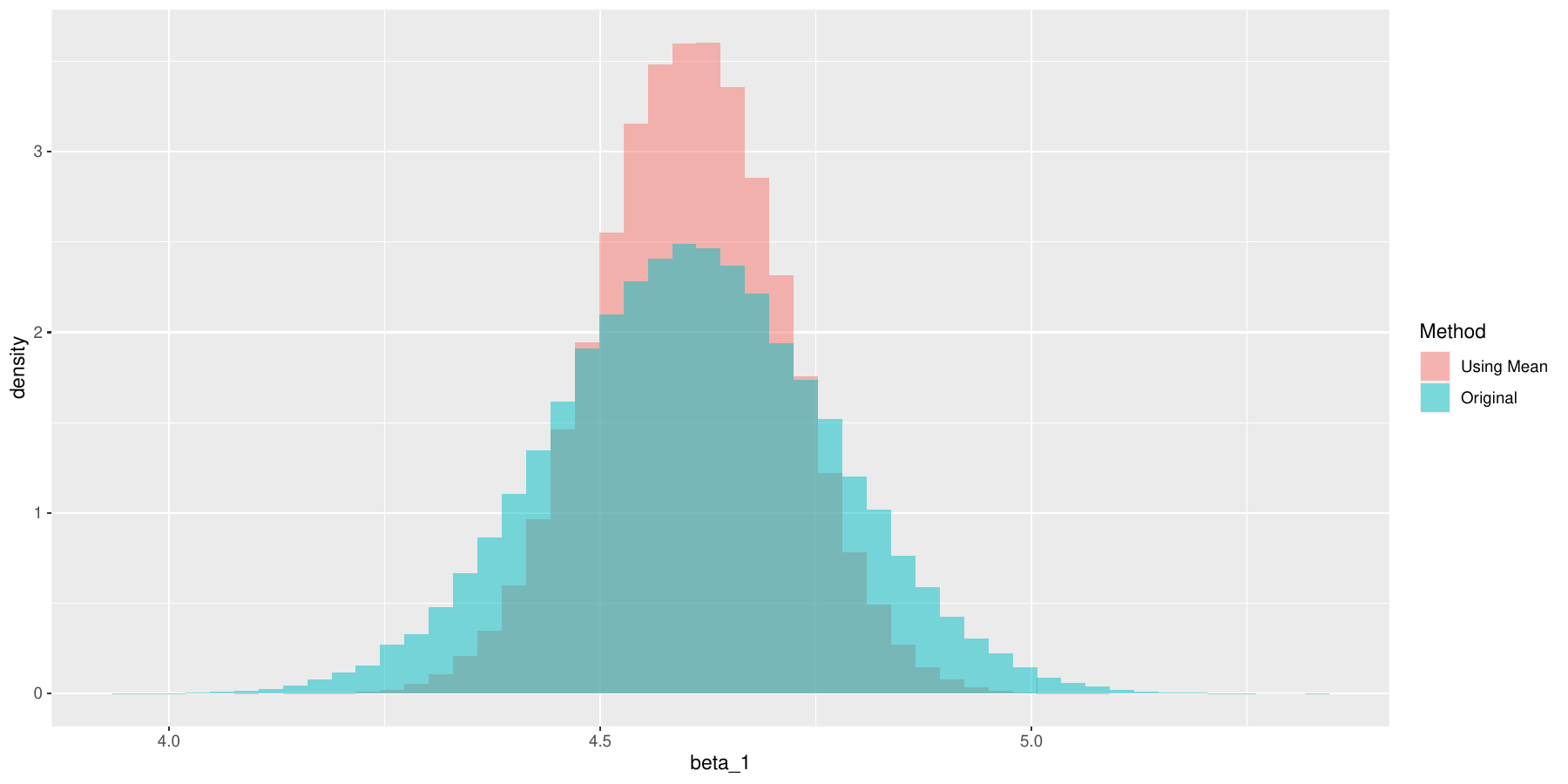}
    \caption{Comparison of the VB posteriors when $\beta_{-1}$ is drawn according to $\hat{Q}_{-1}$ and when it is set equal to the VB posterior mean (with no randomization).}
    \label{fig:using_vb_mean}
\end{figure}

\begin{table}
\centering
\begin{tabular}{r|r|cccc}
\toprule
Scenario $(n, p, s_0, \beta_1^0, \beta_j^0, \rho)$ & Method & Cov. & MAE & Length & Time\\
\hline
 $(100, 200, 10, \log n, \log n, 0)$ & Normal        & 0.953 & 0.069 & 0.286 $\pm$ 0.015 & 0.663 $\pm$ 0.097\\
                             & Using VB Mean & 0.904 & 0.069 & 0.255 $\pm$ 0.014 & 0.623 $\pm$ 0.094\\
\bottomrule
\end{tabular}
\caption{Assessing the quality of the uncertainty quantification provided by the VB posteriors when $\beta_{-1}$ is drawn according to $\hat{Q}_{-1}$ and when it is set equal to the VB posterior mean (with no randomization).}
\label{Tab:using_vb_mean}
\end{table}


The above already shows the advantages of randomizing $\beta_{-k}$ in uncorrelated settings. We now turn to more correlated settings, again showing that the debiasing (and its randomization) plays a significant role. We provide analogous plots to Figures \ref{fig:credible_regions_example} and \ref{fig:cov_structure_AR}, but using the more heavily correlated structure where each row of the design is distributed as $X_{i\cdot} \sim \mathcal{N}_p(0, \Sigma_\rho)$, where $[\Sigma_\rho]_{ij} = \rho$ if $i \neq j$ and is 1 otherwise, with results shown in Figures \ref{fig:credible_regions_full_corr} and \ref{fig:cov_structure_full_corr}. Figure \ref{fig:credible_regions_full_corr} presents a similar story to Figure \ref{fig:credible_regions_example}, but in this case I-SVB is the only (non-Oracle) method to cover the truth in the most highly correlated scenario. Figure \ref{fig:cov_structure_full_corr} shows that in this more correlated scenario, it is even more important to take into account the covariance structure of the debiasing, as the ellipse from $\beta_{1:2}^*$ is far too small; once the debiasing has taken place (which depends on $\beta_{-k} \sim \hat{Q}_{-k}$) the covariance structure of $\beta_{1:2}$ is better, though perhaps conservative.

\begin{figure}
    \centering
    {\bf Realisations of 2D credible regions}
    \includegraphics[width=0.9\linewidth]{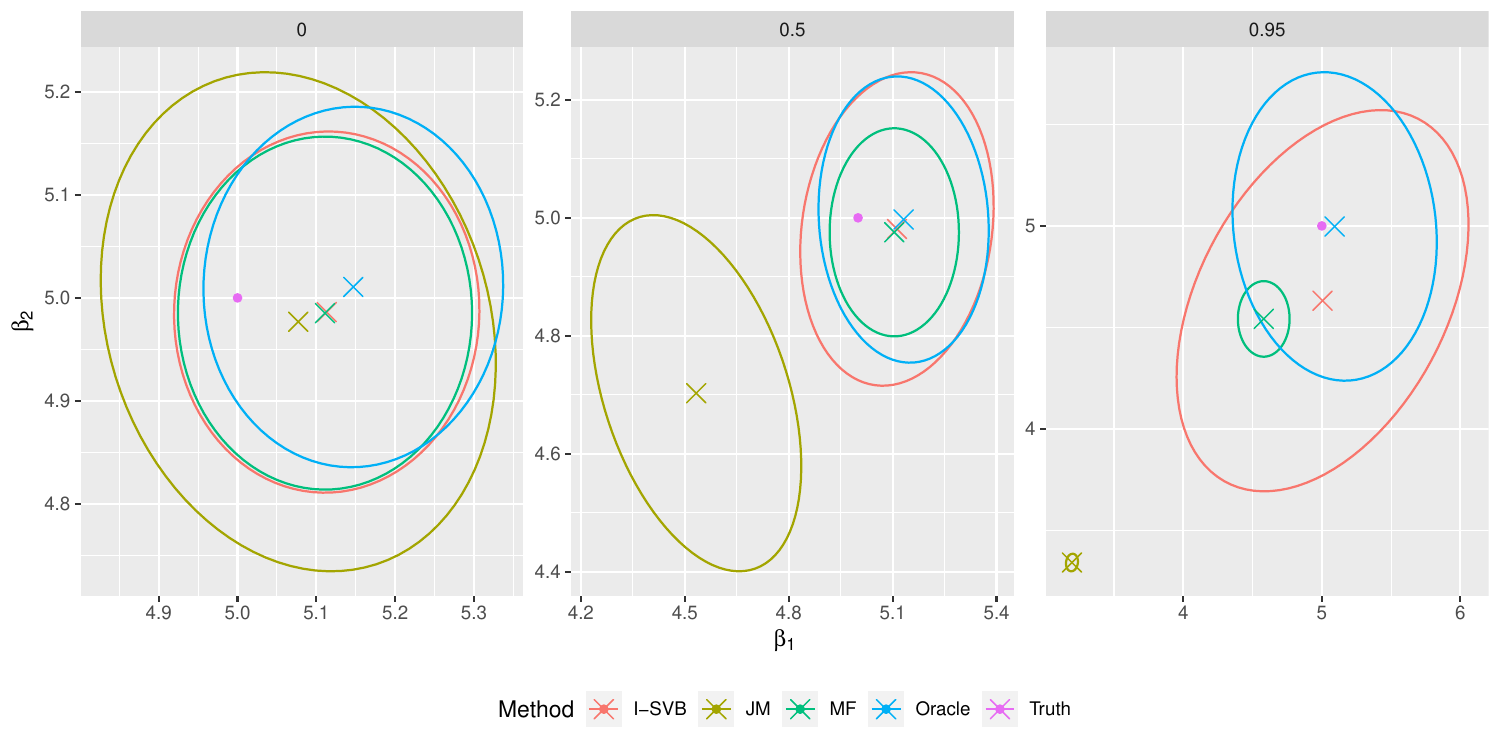}
    \caption{95\%-credible or confidence regions in three scenarios with design matrix rows $X_{i\cdot}\sim^{iid} \mathcal{N}_p(0, \Sigma_\rho)$ for increasing values of $\rho$ given in the title of each facet. The interior of the ellipses represent the credible or confidence regions, the crosses mark their centering and the pink point is the true value of $(\beta_1, \beta_2)$. Each scenario has $n=200, p=400, k=2, s_0=10$ and $\beta^0_i=5$.}
    \label{fig:credible_regions_full_corr}
\end{figure}

\begin{figure}
    \centering
    {\bf Covariance structure of the components of $\beta_{1:2}$}
    \includegraphics[width=0.9\linewidth]{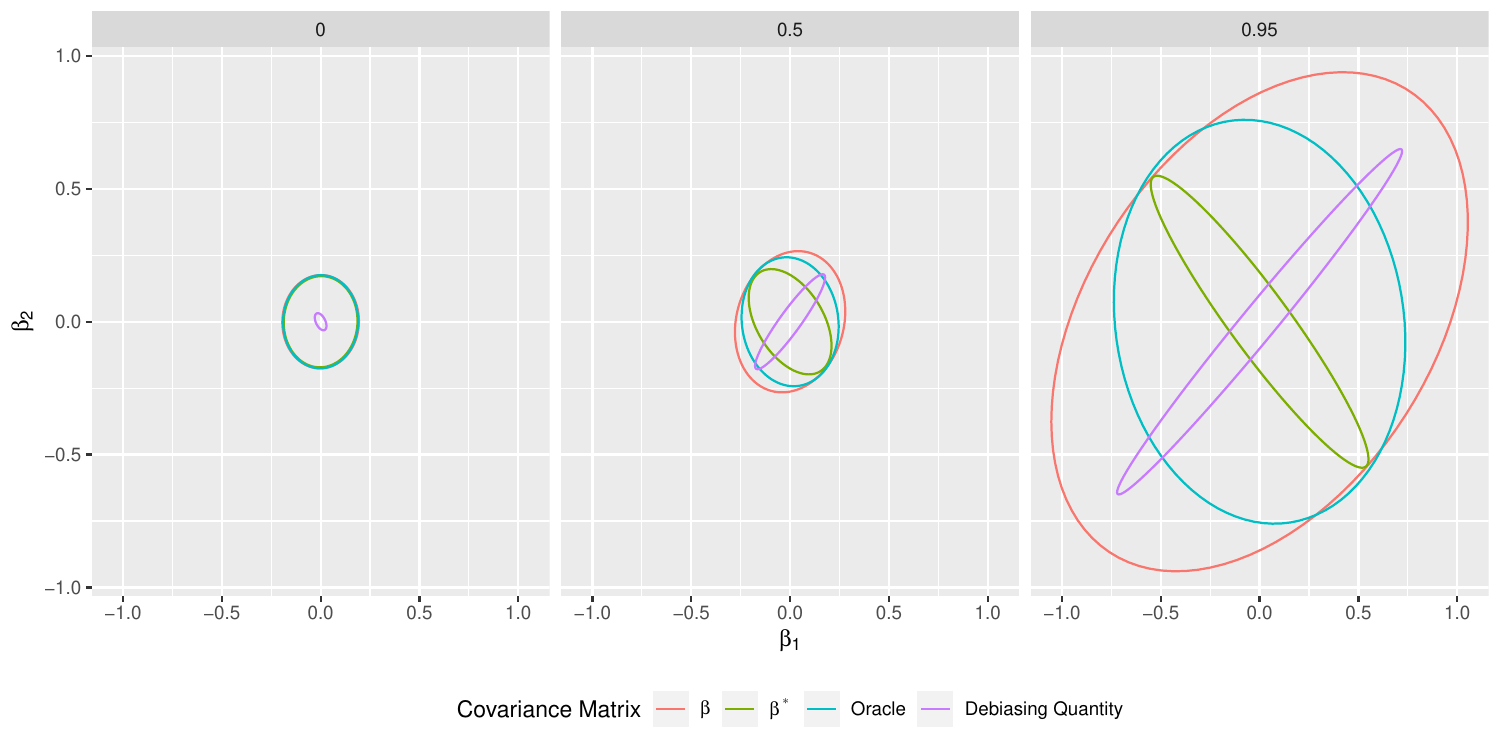}
    \caption{Structure of the 95\%-credible regions given by the covariance matrices of $\beta_{1:2}$, $\beta_{1:2}^*$, the Oracle, and the debiasing quantity ($(X_{1:2}^TX_{1:2})^{-1}X_{1:2}^TX_{-2}\beta_{-2}$) with design matrix rows $X_{i\cdot}\sim^{iid} \mathcal{N}_p(0, \Sigma_\rho)$ for increasing values of $\rho$ given in the title of each facet. Each scenario has $n=200, p=400, k=2, s_0=10$ and $\beta^0_i=5$.}
    \label{fig:cov_structure_full_corr}
\end{figure}

\subsection{Additional simulations}
Tables \ref{Tab:experiments_1d_supplement_1} and \ref{Tab:experiments_1d_supplement_2} show the performance of the various methods in some additional scenarios, where again the correlation structure between features is given by $\Sigma_\rho$. The discussion of these results is largely the same as that presented in Section \ref{sect:simulations}; the I-SVB method delivers consistent uncertainty quantification in difficult scenarios where other methods struggle.

\begin{table}
\centering
\begin{tabular}{r|r|cccc}
\toprule
Scenario $(n, p, s_0, \beta_1^0, \beta_j^0, \rho, \sigma^2)$  & Method & Cov.  & MAE                        & Length                     & Time\\
\hline
$(vii)$                                                       & I-SVB  & 0.964 & \textbf{0.056 $\pm$ 0.041} & 0.281 $\pm$ 0.016          & 0.370 $\pm$ 0.127\\
$(200, 800, 3, \log n, \log n, 0, 1)$                         & MF     & 0.968 & \textbf{0.056 $\pm$ 0.041} & \textbf{0.279 $\pm$ 0.013} & 0.223 $\pm$ 0.092\\
                                                              & ZZ     & 0.952 & 0.062 $\pm$ 0.046          & 0.293 $\pm$ 0.039          & 0.479 $\pm$ 0.153\\
                                                              & JM     & 1.000 & 0.070 ± 0.053              & 0.483 ± 0.025              & 2.451 ± 0.203\\
\hline
$(viii)$                                                      & I-SVB  & 0.946 & \textbf{0.056 $\pm$ 0.045} & 0.286 $\pm$ 0.017          & 0.397 $\pm$ 0.135\\
$(200, 800, 10, \log n, \log n, 0, 1)$                        & MF     & 0.928 & \textbf{0.056 $\pm$ 0.045} & \textbf{0.279 $\pm$ 0.014} & 0.250 $\pm$ 0.119\\
                                                              & ZZ     & 0.880 & 0.074 $\pm$ 0.059          & 0.295 $\pm$ 0.061          & 0.511 $\pm$ 0.161\\
                                                              & JM     & 1.000 & 0.078 ± 0.058              & 0.498 ± 0.029              & 2.547 ± 0.291\\
\hline
$(ix)$                                                        & I-SVB  & 0.926 & \textbf{0.045 $\pm$ 0.034} & \textbf{0.203 $\pm$ 0.009}          & 1.299 $\pm$ 0.238\\
$(400, 1500, 32, \mathcal{N}(0, 1), \mathcal{N}(0, 1), 0, 1)$ & MF     & 0.818 & 0.059 $\pm$ 0.052          & 0.196 $\pm$ 0.007 & 0.688 $\pm$ 0.137\\
                                                              & ZZ     & 0.822 & 0.059 $\pm$ 0.045          & 0.201 $\pm$ 0.012          & 1.396 $\pm$ 0.259\\
                                                              & JM     & 0.724 & 0.145 ± 0.105              & 0.369 ± 0.015              & 17.77 ± 1.712\\
\hline
$(x)$                                                         & I-SVB  & 0.982 & 0.101 $\pm$ 0.075          & 0.595 $\pm$ 0.054          & 0.295 $\pm$ 0.091\\
$(100, 1000, 3, \log n, \log n, 0.5, 1)$                      & MF     & 0.896 & \textbf{0.097 $\pm$ 0.071} & \textbf{0.395 $\pm$ 0.028} & 0.273 $\pm$ 0.116\\
                                                              & ZZ     & 0.948 & 0.152 $\pm$ 0.115          & 0.845 $\pm$ 0.603          & 0.324 $\pm$ 0.122\\
                                                              & JM     & 0.960 & 0.231 ± 0.149              & 0.962 ± 0.11               & 1.718 ± 0.202\\
\hline
$(xi)$                                                        & I-SVB  & 0.986 & 0.069 $\pm$ 0.049          & 0.413 $\pm$ 0.032          & 0.359 $\pm$ 0.089\\
$(200, 800, 3, \log n, \log n, 0.5, 1)$                       & MF     & 0.914 & \textbf{0.067 $\pm$ 0.049} & \textbf{0.278 $\pm$ 0.014} & 0.261 $\pm$ 0.084\\
                                                              & ZZ     & 0.978 & 0.084 $\pm$ 0.063          & 0.475 $\pm$ 0.078          & 0.530 $\pm$ 0.130\\
                                                              & JM     & 0.976 & 0.131 ± 0.085              & 0.630 ± 0.063              & 3.863 ± 0.49\\
\bottomrule
\end{tabular}
\caption{Assessing the performance of the uncertainty quantification provided by each method in 5 additional scenarios. Highlighted in bold are the smallest interval length, subject to coverage being larger than 0.85, and the smallest MAE.}
\label{Tab:experiments_1d_supplement_1}
\end{table}

\begin{table}
\begin{tabular}{r|r|cccc}
\toprule
Scenario  $(n , p    , s_0 , \beta_1^0 , \beta_j^0 , \rho, \sigma^2)$ & Method & Cov.  & MAE                    & Length                 & Time\\
\midrule
$(xii)$                                                               & I-SVB  & 0.830 & \textbf{0.353 ± 0.272} & 1.349 ± 0.391          & 0.339 ± 0.112\\
$(100, 1000, 3, \log n, \log n, 0, 16)$                               & MF     & 0.802 & 0.391 ± 0.290          & 1.302 ± 0.399          & 0.228 ± 0.103\\
                                                                      & ZZ     & 0.766 & 0.468 ± 0.338          & 1.579 ± 1.185          & 0.296 ± 0.081\\
                                                                      & JM     & 0.924 & 0.596 ± 0.342          & \textbf{2.870 ± 0.854} & 1.636 ± 0.219\\
\hline
$(xiii)$                                                              & I-SVB  & 0.956 & 0.054 ± 0.042          & 0.281 ± 0.017          & 0.349 ± 0.083\\
$(200, 1000, 5, 0, \log n, 0, 1)$                                     & MF     & 1.000 & \textbf{0.000 ± 0.000} & \textbf{0.272 ± 0.014} & 0.211 ± 0.066\\
                                                                      & ZZ     & 0.976 & 0.049 ± 0.038          & 0.295 ± 0.048          & 0.434 ± 0.103\\
                                                                      & JM     & 1.000 & 0.041 ± 0.035          & 0.511 ± 0.028          & 2.980 ± 0.243\\
\hline
$(xiv)$                                                               & I-SVB  & 0.928 & \textbf{0.058 ± 0.045} & 0.283 ± 0.017          & 0.389 ± 0.092\\
$(200, 1000, 10, 1, \log n, 0, 1)$                                    & MF     & 0.930 & \textbf{0.058 ± 0.045} & \textbf{0.277 ± 0.014} & 0.238 ± 0.075\\
                                                                      & ZZ     & 0.854 & 0.082 ± 0.062          & 0.292 ± 0.036          & 0.469 ± 0.108\\
                                                                      & JM     & 0.974 & 0.092 ± 0.069          & 0.531 ± 0.032          & 3.237 ± 0.246\\
\hline
$(xv)$                                                                & I-SVB  & 0.962 & \textbf{0.037 ± 0.025} & 0.177 ± 0.007          & 1.157 ± 0.245\\
$(500, 1000, 10, \mathcal{U}(-5, 5), \mathcal{U}(-5, 5), 0, 1)$       & MF     & 0.952 & 0.039 ± 0.030          & \textbf{0.176 ± 0.005} & 0.434 ± 0.140\\
                                                                      & ZZ     & 0.946 & 0.041 ± 0.027          & 0.178 ± 0.008          & 1.407 ± 0.343\\
                                                                      & JM     & 0.574 & 0.136 ± 0.113          & 0.267 ± 0.011          & 25.371 ± 7.692\\
\hline
$(xvi)$                                                               & I-SVB  & 0.986 & 0.101 ± 0.304          & 0.598 ± 0.124          & 0.978 ± 0.242\\
$(200, 1000, 10, 1, \log n, 0.5, 1)$                                  & MF     & 0.638 & 0.314 ± 0.466          & 0.276 ± 0.014 & 2.032 ± 0.587\\
                                                                      & ZZ     & 0.952 & \textbf{0.095 ± 0.074} & \textbf{0.482 ± 0.087}          & 0.610 ± 0.079\\
                                                                      & JM     & 0.312 & 1.813 ± 1.359          & 1.496 ± 0.13           & 14.73 ± 1.267\\
\bottomrule
\end{tabular}
\caption{Assessing the performance of the uncertainty quantification provided by each method in 5 additional scenarios. Highlighted in bold are the smallest interval length, subject to coverage being larger than 0.85, and the smallest MAE.}
\label{Tab:experiments_1d_supplement_2}
\end{table}

\section{Design matrix conditions}\label{sec::Additional_result}

We provide here some further discussion to help understand the conditions on the design matrix required by our theoretical results. Our Bernstein-von Mises results in Section \ref{sec:BvM_VB_1d} require conditions on the compatibility constants of the transformed matrix $\cW$ in \eqref{eq:preprocess} rather than the original design $X$, which has been extensively studied in the literature \cite{Buhlmann2011}. The next lemma shows how one can relate these.

\begin{lemma}\label{lem:compatibility_number}
For any $S \subset \{1,\dots,p\}$ with $|S| > 1$, 
 $$
 \phi^{\cW}(S\backslash\{1\}) \geq \sqrt{1 - \frac{1}{|S|}}\phi^X(S) - \frac{\|X_1\|_2}{\|\cW\|} \sqrt{|S|} \max_{i \geq 2} |\gamma_i| .
 $$
 Furthermore, for any $s \in \{1,\dots,p\}$, $$\tilde{\phi}^{\cW}(s-1) \geq \tilde{\phi}^X(s) - \frac{\|X_1\|_2}{\|\cW\|}\sqrt{s} \max_{i \geq 2} |\gamma_i|.$$
 In particular, if \eqref{assum::design_matrix} is satisfied, then
 $$\phi^{\cW}(S\backslash\{1\}) \geq \sqrt{1 - \frac{1}{|S|}}\phi^X(S) - o(1) \quad \textrm{and} \quad \tilde{\phi}^{\cW}(s-1) \geq \tilde{\phi}^X(s) - o(1).$$
\end{lemma}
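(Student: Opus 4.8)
The plan is to deduce every statement about the transformed matrix $\cW$ from the corresponding statement about $X$, using two elementary facts. First, for any $u\in\R^{p-1}$, since $(I_n-H)X_{-1}u=X_{-1}u-X_1\sum_{i\ge 2}\gamma_i u_i$ while $\|\cW u\|_2=\|P^TX_{-1}u\|_2=\|(I_n-H)X_{-1}u\|_2$, one has the identity
\[
\|\cW u\|_2=\|X\tilde\beta(u)\|_2,\qquad \tilde\beta(u):=\Bigl(-\textstyle\sum_{i\ge 2}\gamma_i u_i,\ u^T\Bigr)^T\in\R^p ,
\]
together with $\tilde\beta(u)\ne 0$ whenever $u\ne 0$, $|S_{\tilde\beta(u)}|\le|S_u|+1$, $\|\tilde\beta(u)\|_2\ge\|u\|_2$, $\tilde\beta(u)$ and $u$ agreeing off the first coordinate, and $|\tilde\beta(u)_1|\le\max_{i\ge 2}|\gamma_i|\,\|u\|_1$. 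Second, since $I_n-H$ is an orthogonal projection, $\|\cW\|=\max_{i\ge 2}\|(I_n-H)X_i\|_2\le\max_{i\ge 2}\|X_i\|_2\le\|X\|$. The single extra coordinate carried by $\tilde\beta(u)$ is precisely what produces the shift from $S$ to $S\backslash\{1\}$ (resp. $s$ to $s-1$).

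For the sparse singular value inequality, take $u$ with $0\ne|S_u|\le s-1$, so $\tilde\beta(u)\ne 0$ has at most $s$ nonzero entries; then by definition of $\tilde\phi^X$ and the facts above,
\[
\|\cW u\|_2=\|X\tilde\beta(u)\|_2\ \ge\ \tilde\phi^X(s)\,\|X\|\,\|\tilde\beta(u)\|_2\ \ge\ \tilde\phi^X(s)\,\|\cW\|\,\|u\|_2 .
\]
Dividing by $\|\cW\|\|u\|_2$ and taking the infimum gives $\tilde\phi^{\cW}(s-1)\ge\tilde\phi^X(s)$, which already implies the claimed bound since the subtracted term is nonnegative. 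For the compatibility inequality, consider first the case $1\in S$, which is the one actually needed for Theorems \ref{thm:asymptotic_normality_variational_1D} and \ref{thm:asymptotic_normality_variational_kD}: given $u$ in the feasible set defining $\phi^{\cW}(S\backslash\{1\})$, the key step is that $\tilde\beta(u)$ then lies in the feasible set defining $\phi^X(S)$. Indeed $\tilde\beta(u)$ is supported in $(S\backslash\{1\})\cup\{1\}=S$ with $\tilde\beta(u)_S\ne 0$; since $1\in S$ we have $S^c\subseteq\{2,\dots,p\}$ and $\tilde\beta(u)_{S^c}=u_{S^c}$, while $\|\tilde\beta(u)_S\|_1\ge\|u_{S\backslash\{1\}}\|_1$, so the cone condition $\|u_{S^c}\|_1\le 7\|u_{S\backslash\{1\}}\|_1$ upgrades to $\|\tilde\beta(u)_{S^c}\|_1\le 7\|\tilde\beta(u)_S\|_1$. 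Hence $\|\cW u\|_2=\|X\tilde\beta(u)\|_2\ge\phi^X(S)\,\|X\|\,\|u_{S\backslash\{1\}}\|_1/|S|^{1/2}$; dividing by $\|\cW\|\,\|u_{S\backslash\{1\}}\|_1/|S\backslash\{1\}|^{1/2}$, using $\|X\|\ge\|\cW\|$ and $|S\backslash\{1\}|/|S|=1-1/|S|$, and taking the infimum yields $\phi^{\cW}(S\backslash\{1\})\ge\sqrt{1-1/|S|}\,\phi^X(S)$, again stronger than required.

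When $1\notin S$ (so $S\backslash\{1\}=S$) the correction $\tilde\beta(u)_1X_1$ lives on a coordinate outside $S$ and the cone condition no longer transfers directly; one instead writes $\|\cW u\|_2=\|X\tilde\beta(u)\|_2\ge\|Xu'\|_2-|\tilde\beta(u)_1|\,\|X_1\|_2$ with $u':=(0,u^T)^T$, observes that $u'$ (whose first coordinate vanishes) is feasible for $\phi^X(S)$, bounds $\|Xu'\|_2\ge\phi^X(S)\|X\|\|u_S\|_1/|S|^{1/2}$ and $|\tilde\beta(u)_1|\le\max_{i\ge 2}|\gamma_i|\,\|u\|_1\le 8\max_{i\ge 2}|\gamma_i|\,\|u_S\|_1$ (the last bound from the cone), and divides out as before; this recovers the displayed inequality with an absolute constant in place of $1$, whose value is immaterial since it enters only the $o(1)$ error (and equals $0$ in the case $1\in S$ relevant to the applications, by the previous paragraph). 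Finally, the ``in particular'' assertions follow on inserting $\|\cW\|=\max_{i\ge 2}\|(I_n-H)X_i\|_2$: assumption \eqref{assum::design_matrix} forces $\tfrac{\|X_1\|_2}{\|\cW\|}\,\max_{i\ge 2}|\gamma_i|\,\rho_n s_0\sqrt{\log p}\to 0$, and since $\rho_n s_0\sqrt{\log p}$ dominates $\sqrt{|S|}$ (resp. $\sqrt{s}$) at the sparsity levels used there, both error terms are $o(1)$. The only genuinely delicate point in the argument is the bookkeeping of supports and $\ell_1$-cones as the index $1$ is moved in and out of the active set; everything else reduces to the two identities above and $\|\cW\|\le\|X\|$.
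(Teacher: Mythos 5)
Your proof is correct and reaches the stated conclusions, but it takes a genuinely different route from the paper's. The paper's proof is a single uniform argument based on the triangle inequality: it writes $\cW\beta$ (in norm) as $X\left[\begin{smallmatrix}0\\ \beta\end{smallmatrix}\right]-X_1\sum_{i\ge 2}\gamma_i\beta_i$ and peels off the $X_1$-correction as an additive error bounded by $\|X_1\|_2\|\beta\|_1\max_{i\ge2}|\gamma_i|$, which is exactly what produces the subtracted terms in the statement; the cone/sparsity conditions are then transferred via the zero-padded vector $(0,\beta^T)^T$. You instead exploit the exact identity $\|\cW u\|_2=\|X\tilde\beta(u)\|_2$ with the augmented vector $\tilde\beta(u)=(-\sum_{i\ge2}\gamma_iu_i,\,u^T)^T$, absorbing the correction into one extra coordinate; since $\tilde\beta(u)$ stays $s$-sparse (resp.\ stays in the $\ell_1$-cone for $S$ when $1\in S$) and has larger $\ell_2$-norm (resp.\ larger $\|\cdot_S\|_1$), you get the \emph{error-free} bounds $\tilde{\phi}^{\cW}(s-1)\ge\tilde{\phi}^X(s)$ and $\phi^{\cW}(S\backslash\{1\})\ge\sqrt{1-1/|S|}\,\phi^X(S)$, which are strictly stronger than the lemma in the cases that matter for Theorems \ref{thm:asymptotic_normality_variational_1D} and \ref{thm:asymptotic_normality_variational_kD}. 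You only revert to the paper's triangle-inequality argument for the case $1\notin S$, where the cone constraint yields $\|u\|_1\le 8\|u_S\|_1$ and hence a constant $8$ in front of the error term; note this is not really a defect of your argument relative to the paper's, since the paper's own proof produces the same factor of $8$ (which silently disappears between two displayed lines there), and in any case the constant is irrelevant for the $o(1)$ conclusion under \eqref{assum::design_matrix}. Your handling of the ``in particular'' part via $\|\cW\|=\max_{i\ge2}\|(I-H)X_i\|_2$ matches what the paper leaves implicit.
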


\begin{proof}[Proof of Lemma \ref{lem:compatibility_number}]
For the first statement, write $S_* = S\backslash\{1\}$. 
	Observe that, for any $\beta \in \mathbb{R}^{p-1}$,
	\begin{align}\label{eq:proof_inequality}
	\|\cW\beta\|_2 &= \left\|X\left[\begin{matrix}
0 \\ \beta	
\end{matrix}
 \right] - X_1\sum_{i \geq 2}\gamma_i \beta_i \right\|_2  \geq \left\|X\left[\begin{matrix}
0 \\ \beta	
\end{matrix}
 \right]\right\|_2 - \|X_1\|_2 \left|\sum_{i \geq 2}\gamma_i \beta_i \right| 
 & \geq \left\|X\left[\begin{matrix}
0 \\ \beta	
\end{matrix}
 \right]\right\|_2 - \|X_1\|_2\|\beta\|_1\max_{i \geq 2}|\gamma_i |.
	\end{align}
Then we have,
\begin{align*}
\phi^{\cW}(S_*) &= \inf_{\|\beta_{S_*^C}\|_1 \leq 7\|\beta_{S_*}\|_1}	\frac{\sqrt{|S_*|}\|\cW\beta\|_2}{\|\cW\|\|\beta_{S_*}\|_1} \geq \inf_{\|\beta_{S_*^C}\|_1 \leq 7\|\beta_{S_*}\|_1}	\frac{\sqrt{|S_*|}\left[\left\|X\left[\begin{matrix}
0 \\ \beta	
\end{matrix}
 \right]\right\|_2 - \|X_1\|_2 \|\beta\|_1 \max_{i \geq 2}|\gamma_i|\right] }{\| \cW \|\|\beta_{S_*}\|_1} \\
 &\geq \inf_{\|\beta_{S_*^C}\|_1 \leq 7\|\beta_{S_*}\|_1}	\frac{\sqrt{|S_*|}\left\|X\left[\begin{matrix}
0 \\ \beta	
\end{matrix}
 \right]\right\|_2  }{\| \cW \|\|\beta_{S_*}\|_1} - \frac{ \sqrt{|S_*|}\|X_1\|_2 \|\beta\|_1 \max_{i \geq 2}|\gamma_i|}{\| \cW \|\|\beta_{S_*}\|_1}.
\end{align*}
Now, when $\|\beta_{S_*^C}\|_1 \leq 7\|\beta_{S_*}\|_1$ we have $\|\beta\|_1/\|\beta_{S_*}\|_1 \leq 8$, so the second term is bounded below by $-\frac{\|X_1\|_2}{\|\cW\|}\sqrt{|S_*|}\max_{i \geq 2}|\gamma_i| = -\frac{\|X_1\|_2}{\|\cW\|}\sqrt{|S_*|} \max_{i \geq 2}|\gamma_i|$. For the first term we have
\begin{align*}
	\inf_{\|\beta_{S_*^C}\|_1 \leq 7\|\beta_{S_*}\|_1}	\frac{\sqrt{|S_*|}\left\|X\left[\begin{matrix}
0 \\ \beta	
\end{matrix}
 \right]\right\|_2  }{\|\cW\|\|\beta_{S_*}\|_1} &\geq \sqrt{1 - \frac{1}{|S|}}\inf_{\|\beta'_{S^C}\|_1 \leq 7\|\beta'_{S}\|_1}	\frac{\sqrt{|S|}\left\|X\beta'\right\|_2  }{\|X\|\|\beta'_{S}\|_1} \\
 &= \sqrt{1 - \frac{1}{|S|}}\phi^X(S).
\end{align*}
For the second statement, write $s_* = s-1$. Once again using Equation \eqref{eq:proof_inequality}, we have
\begin{align*}
\tilde{\phi}^{\cW}(s_*) &= \inf_{0 \neq |S_\beta| \leq s_*}	\frac{\|\cW\beta\|_2}{\|\cW\|\|\beta\|_2} \geq \inf_{0 \neq |S_\beta| \leq s_*}	\frac{\left[\left\|X\left[\begin{matrix}
0 \\ \beta	
\end{matrix}
 \right]\right\|_2 - \|X_1\|_2\|\beta\|_1 \max_{i \geq 2}|\gamma_i| \right] }{\| \cW \|\|\beta\|_2} \\
 &\geq \inf_{0 \neq |S_\beta| \leq s_*}	\frac{\left\|X\left[\begin{matrix}
0 \\ \beta	
\end{matrix}
 \right]\right\|_2  }{\| \cW \|\|\beta\|_2} - \frac{ \|X_1\|_2 \|\beta\|_1 \max_{i \geq 2}|\gamma_i|}{\| \cW \|\|\beta\|_2}.
\end{align*}

Now, when $0 \neq |S_\beta| \leq s_*$ we have $\|\beta\|_1/\|\beta\|_2 \leq \sqrt{s_*}$, so the second term is bounded below by $-\frac{\|X_1\|_2}{\|\cW\|}\sqrt{s_*} \max_{i \geq 2}|\gamma_i| = -\frac{\|X_1\|_2}{\|\cW\|}\sqrt{s_*} \max_{i \geq 2}|\gamma_i|$. For the first term we have,
\begin{align*}
	\inf_{0 \neq |S_\beta| \leq s_*}	\frac{\left\|X\left[\begin{matrix}
0 \\ \beta	
\end{matrix}
 \right]\right\|_2  }{\|\cW\|\|\beta\|_2} &\geq \inf_{0 \neq |S_{\beta'}| \leq s}	\frac{\left\|X\beta'\right\|_2  }{\|X\|\|\beta'\|_2} = \phi^X(s).
\end{align*}
\end{proof}

Lemma \ref{lem:compatibility_number} shows that if \eqref{assum::design_matrix} holds, then the compatibility constants for $\cW$ can be bounded from below by those for $X$, which is a much more commonly studied problem (e.g. Section D of \cite{Ray_Szabo_2020}). 

A related (but stronger) notion is the \textit{mutual coherence} of $X$, given by
    $$
    mc(X) = \max_{1 \leq i \neq j \leq p} \frac{|X_i^T X_j|}{\|X_i\|_2 \|X_j\|_2}.
    $$
Lemma 1 of \cite{CS-HV2015} allows one to bound the compatibility numbers of $X$ from below if one has control of the mutual coherence via the following inqualities
$$\phi(S)^2 \geq \bar{\phi}(1)^2 - 15\; |S| \; mc(X), \qquad \tilde{\phi}(s)^2 \geq \bar{\phi}(1)^2 - s \; mc(X),$$
where $\bar{\phi}(1) = \min_i \|X_i\|_2/\|X\|$ is typically bounded away from 0. We can relate the first quantity in our `no-bias' condition \eqref{assum::design_matrix} to the mutual coherence.

\begin{lemma}\label{lem:design_bound_mc}
We have
\begin{align*}
    \frac{\|X_1\|_2 \max_{i = 2, \dots, p} |\gamma_i|}{\max_{i = 2, \dots, p}\|(I - H)X_i\|_2} = \frac{ \max_{i=2, \dots, p}\|HX_{i}\|_2 }{\max_{i=2, \dots, p}\|(I-H)X_{i}\|_2} \leq \frac{mc(X)}{1-mc(X)}.
\end{align*}
\end{lemma}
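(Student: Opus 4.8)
The plan is to express everything in terms of the projection $H = X_1X_1^T/\|X_1\|_2^2$ and bound the ratio directly. First I would establish the equality $\|X_1\|_2 |\gamma_i| = \|HX_i\|_2$: since $HX_i = X_1(X_1^TX_i)/\|X_1\|_2^2 = \gamma_i X_1$, we get $\|HX_i\|_2 = |\gamma_i|\,\|X_1\|_2$, which immediately gives the first claimed equality once we take maxima over $i = 2,\dots,p$ (the index maximizing $|\gamma_i|$ is the same one maximizing $\|HX_i\|_2$). So the real content is the inequality
\[
\frac{\max_{i\geq 2}\|HX_i\|_2}{\max_{i\geq 2}\|(I-H)X_i\|_2} \leq \frac{mc(X)}{1-mc(X)}.
\]

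For this, fix any $i \geq 2$ and bound $\|HX_i\|_2$ from above and $\|(I-H)X_i\|_2$ from below in terms of $mc(X)$. For the numerator, $\|HX_i\|_2 = |\gamma_i|\,\|X_1\|_2 = \frac{|X_1^TX_i|}{\|X_1\|_2}$. Using the definition of mutual coherence, $|X_1^TX_i| \leq mc(X)\,\|X_1\|_2\|X_i\|_2$, so $\|HX_i\|_2 \leq mc(X)\,\|X_i\|_2$. For the denominator, write $\|(I-H)X_i\|_2^2 = \|X_i\|_2^2 - \|HX_i\|_2^2 \geq \|X_i\|_2^2 - mc(X)^2\|X_i\|_2^2 = (1-mc(X)^2)\|X_i\|_2^2$. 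That would give a bound with $\sqrt{1-mc(X)^2}$ in the denominator, which is weaker than claimed; so instead I would use the sharper estimate $\|(I-H)X_i\|_2 \geq \|X_i\|_2 - \|HX_i\|_2 \geq (1 - mc(X))\|X_i\|_2$ via the triangle inequality $\|X_i\|_2 \leq \|HX_i\|_2 + \|(I-H)X_i\|_2$. Dividing, for each fixed $i$,
\[
\frac{\|HX_i\|_2}{\|(I-H)X_i\|_2} \leq \frac{mc(X)\,\|X_i\|_2}{(1-mc(X))\,\|X_i\|_2} = \frac{mc(X)}{1-mc(X)}.
\]

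Finally, to pass to the ratio of maxima, let $i^*$ attain $\max_{i\geq 2}\|HX_i\|_2$. Then $\max_{i\geq 2}\|HX_i\|_2 = \|HX_{i^*}\|_2 \leq \frac{mc(X)}{1-mc(X)}\|(I-H)X_{i^*}\|_2 \leq \frac{mc(X)}{1-mc(X)}\max_{i\geq 2}\|(I-H)X_i\|_2$, and rearranging gives the result (assuming $mc(X) < 1$, which is needed for the bound to be meaningful and is implicit in the statement). The only mild subtlety — the "main obstacle," such as it is — is being careful that the index achieving the max in the numerator need not achieve it in the denominator, which is why one bounds the maximizing term of the numerator against its own denominator term before relaxing to the overall max; this is routine but worth stating explicitly. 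No deep estimate is required here.
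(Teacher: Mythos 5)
Your proof is correct, and its overall architecture matches the paper's: reduce the ratio of maxima to a per-index ratio, bound the numerator via $\|HX_i\|_2 = |X_1^TX_i|/\|X_1\|_2 \leq mc(X)\|X_i\|_2$, and lower-bound the denominator. The one genuine difference is the mechanism for that lower bound: you use the triangle inequality $\|X_i\|_2 \leq \|HX_i\|_2 + \|(I-H)X_i\|_2$ to get $\|(I-H)X_i\|_2 \geq (1-mc(X))\|X_i\|_2$, whereas the paper works with squares, uses the exact Pythagorean identity $\|(I-H)X_i\|_2^2 = \|X_i\|_2^2 - \|HX_i\|_2^2$ together with the monotonicity of $x \mapsto x/(1-x)$, and only at the very last step relaxes $\frac{mc(X)^2}{1-mc(X)^2} \leq \frac{mc(X)^2}{(1-mc(X))^2}$. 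Both are equally elementary. One remark worth correcting: you dismiss the Pythagorean route on the grounds that it gives a bound with $\sqrt{1-mc(X)^2}$ in the denominator, "which is weaker than claimed" --- this is backwards. Since $\sqrt{1-mc(X)^2} = \sqrt{(1-mc(X))(1+mc(X))} \geq 1-mc(X)$, the bound $mc(X)/\sqrt{1-mc(X)^2}$ is \emph{smaller} than $mc(X)/(1-mc(X))$, i.e.\ strictly sharper, and immediately implies the claimed inequality; this is precisely the route the paper takes. Your handling of the ratio of maxima (bounding the argmax of the numerator against its own denominator term) is careful and correct, and is logically the same observation the paper uses when it writes $\max_i a_i / \max_i b_i \leq \max_i (a_i/b_i)$.
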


\begin{proof}
    We begin with
    \begin{align*}
\left[\frac{ \max_{i=2, \dots, p}\|HX_{i}\|_2 }{\max_{i=2, \dots, p}\|(I-H)X_{i}\|_2}\right]^2 \leq \left[\max_{i = 2,\dots, p} \frac{\|H X_i\|_2}{\|(I-H)X_i\|_2}\right]^2 =  \max_{i = 2,\dots, p}\frac{\|H X_i\|^2_2}{\|X_i\|^2_2 - \|HX_i\|_2^2}.
    \end{align*}
As $\|H X_i\|_2^2 = (X_i^T X_1)^2/\|X_1\|_2^2$,
\begin{align*}
 \max_{i = 2,\dots, p}\frac{\|H X_i\|^2_2}{\|X_i\|^2_2 - \|HX_i\|_2^2} &=  \max_{i = 2,\dots, p}\frac{(X_i^T X_1)^2/\|X_1\|_2^2}{\|X_i\|^2_2 - (X_i^T X_1)^2/\|X_1\|_2^2} \\
 &= \max_{i = 2,\dots, p}\frac{(X_i^T X_1)^2/\|X_1\|_2^2\|X_i\|_2^2}{1 - (X_i^T X_1)^2/\|X_1\|_2^2\|X_i\|_2^2}\\
 &= \frac{\max_{i = 2,\dots, p}(X_i^T X_1)^2/\|X_1\|_2^2\|X_i\|_2^2}{1 - \max_{i = 2,\dots, p}(X_i^T X_1)^2/\|X_1\|_2^2\|X_i\|_2^2},
\end{align*}
since the function $x \mapsto x/(1-x)$ is increasing.
We now define $mc_1(X) := \max_{i = 2,\dots, p}(X_i^T X_1)/\|X_1\|_2\|X_i\|_2 \leq mc(X)$, so that $$\left[\frac{ \max_{i=2, \dots, p}\|HX_{i}\|_2 }{\max_{i=2, \dots, p}\|(I-H)X_{i}\|_2}\right]^2  \leq \frac{mc_1(X)^2}{1-mc_1(X)^2} \leq \frac{mc(X)^2}{1-mc(X)^2} \leq \left[\frac{mc(X)}{1-mc(X)} \right]^2$$
since $x \mapsto \frac{x^2}{1-x^2}$ is increasing on $[0,1]$. Taking the square root of both sides completes the proof.
\end{proof}

If the entries of the design matrix $X_{ij}$ are i.i.d. with either $|X_{ij}| \leq C$ and $\log p = o(n)$ or $E e^{t_0 |X_{ij}|^\alpha} < \infty$ for some $\alpha, t_0 > 0$ and $\log p = o\left(n^{\frac{\alpha}{4+\alpha}} \right)$, then Theorems 1 and 2 of \cite{CaiJiang2011} give that $\sqrt{\frac{n}{\log p}} mc(X) \xrightarrow{P} 2$ as $n \rightarrow \infty$. The compatibility numbers are then bounded away from 0 with probability 1 asymptotically for models of size $s_0 = o(\sqrt{n/\log p})$. In addition, using Lemma \ref{lem:design_bound_mc}, our `no-bias' condition \eqref{assum::design_matrix} is satisfied for models of size $s_0 = o\left(\sqrt{n}/\log p \right)$.

As a concrete example to illustrate our conditions, we consider the case of the i.i.d. random design matrix $X \sim^{iid} \mathcal{N}(0,1)$. One can check that if the truth is sparse enough, then the conditions required in Theorem \ref{thm:asymptotic_normality_variational_1D} hold with probability tending to 1.

\begin{corollary}\label{cor::random_design}
Suppose that $\log p=o(n^{1/3})$ and $\beta^0\in\R^p$ is $s_0$-sparse with $s_0 =  o(\sqrt{n}/\log p)$. Consider the prior \eqref{general_prior} with $\nu$ satisfying \eqref{assum::prior_1}, $\lambda$ satisfying $2 \frac{\sqrt{n-1}}{p-1}  \leq \lambda \leq \, 2 \frac{\sqrt{(n-1) \log(p-1)}}{s_0}$ and $g$ is $c$-Lipschitz with $c=o(\sqrt{n})$. Then the semiparametric BvM holds for $\hat{Q}_{1}$. 
\end{corollary}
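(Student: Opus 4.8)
The strategy is to show that, under the stated hypotheses, all the conditions of Theorem \ref{thm:asymptotic_normality_variational_1D} are met on an event of probability tending to one; the semiparametric BvM for $\hat{Q}_1$ then follows at once. Concretely, one must verify the slab-scale conditions \eqref{assum::prior_2} and $\lambda = O(\|\cW\|\sqrt{\log(p-1)}/s_0)$, the Lipschitz requirement $c = o(\|X_1\|_2)$ on $\log g$, the compatibility conditions \eqref{assum::beta_0} for the transformed design $\cW$, and the no-bias condition \eqref{assum::design_matrix}.

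First I would collect standard concentration facts for the i.i.d.\ $\mathcal{N}(0,1)$ design. Since $\|X_i\|_2^2 \sim \chi_n^2$, a union bound over $i = 1,\dots,p$ (valid as $\log p = o(n)$) gives $\max_i|\|X_i\|_2^2/n - 1| \to 0$ in probability, so in particular $\|X_1\|_2 = \sqrt{n}\,(1+o_P(1))$; using $\|(I-H)X_i\|_2^2 = \|X_i\|_2^2 - (X_i^TX_1)^2/\|X_1\|_2^2$ and that $(X_i^TX_1)^2/\|X_1\|_2^2$ is $\chi_1^2$ conditionally on $X_1$, the same argument yields $\|\cW\| = \max_i\|(I-H)X_i\|_2 = \sqrt{n}\,(1+o_P(1))$. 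Finally, Theorems 1--2 of \cite{CaiJiang2011} (applied with sub-Gaussian tails, i.e.\ $\alpha = 2$, which is where the assumption $\log p = o(n^{1/3})$ enters) give $\sqrt{n/\log p}\;mc(X) \to 2$ in probability, hence $mc(X) \asymp \sqrt{\log p/n}$. The slab-scale conditions are then immediate: the assumed bounds $2\sqrt{n-1}/(p-1) \le \lambda \le 2\sqrt{(n-1)\log(p-1)}/s_0$ sandwich $\lambda$ between $\|\cW\|/(p-1)$ and $2\bar\lambda = 4\|\cW\|\sqrt{\log(p-1)}$ once $\|\cW\| \asymp \sqrt{n}$, and the same upper bound gives $\lambda = O(\|\cW\|\sqrt{\log(p-1)}/s_0)$, while $c = o(\sqrt{n}) = o(\|X_1\|_2)$. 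For \eqref{assum::design_matrix}, Lemma \ref{lem:design_bound_mc} identifies its left-hand factor with $\max_i\|HX_i\|_2/\max_i\|(I-H)X_i\|_2$ and bounds it by $mc(X)/(1-mc(X)) \asymp \sqrt{\log p/n}$, so \eqref{assum::design_matrix} reduces to $\rho_n s_0 \log p/\sqrt{n} \to 0$; since $s_0 = o(\sqrt{n}/\log p)$ makes $s_0\log p/\sqrt{n}\to 0$, one can pick the free sequence $\rho_n \to \infty$ slowly enough that the product still vanishes, and this $\rho_n$ will be reused in \eqref{assum::beta_0}.

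It remains to check the compatibility conditions for $\cW$. The mutual-coherence estimate together with Lemma 1 of \cite{CS-HV2015} shows $\phi^X(S)$ and $\tilde\phi^X(s)$ are bounded away from zero whenever $|S|, s = o(\sqrt{n/\log p})$ (using $\bar\phi^X(1) = \min_i\|X_i\|_2/\|X\| \to 1$); note $s_0 = o(\sqrt{n}/\log p) = o(\sqrt{n/\log p})$, and, with $\rho_n$ chosen as above, also $\rho_n s_0 = o(\sqrt{n/\log p})$. Lemma \ref{lem:compatibility_number} then transfers these lower bounds to $\cW$: the correction term there is $\tfrac{\|X_1\|_2}{\|\cW\|}\sqrt{|S|}\max_{i\ge2}|\gamma_i| \lesssim \sqrt{\log p\,/\,n}\,\sqrt{|S|}$, which is $o(1)$ for $|S| = O(\rho_n s_0)$, so $\phi^{\cW}(S_{\beta^0_{-1}}) \ge c_0$; moreover $\lambda/\bar\lambda \lesssim 1/s_0$, so the model size appearing inside $\tilde\psi^{\cW}_{\rho_n}(S_{\beta^0_{-1}})$ is $O(\rho_n s_0) = o(\sqrt{n/\log p})$ and hence $\tilde\psi^{\cW}_{\rho_n}(S_{\beta^0_{-1}}) \ge c_0$ too. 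With $s_0 = o(n)$ this establishes \eqref{assum::beta_0}, and Theorem \ref{thm:asymptotic_normality_variational_1D} yields the conclusion.

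\textbf{Main obstacle.} The subtle point is the joint calibration of the free sequence $\rho_n$: it must diverge (as needed by the contraction results of \cite{Ray_Szabo_2020} underlying \eqref{assum::beta_0}) yet slowly enough that simultaneously $\rho_n s_0\log p/\sqrt{n} \to 0$ and $\rho_n s_0 = o(\sqrt{n/\log p})$, and one must check that the single sparsity budget $s_0 = o(\sqrt{n}/\log p)$ leaves exactly enough room for such a choice, keeping careful track of the $\log p$ versus $\sqrt{\log p}$ factors throughout.
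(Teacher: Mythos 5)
Your proposal is correct and reaches the conclusion by verifying exactly the hypotheses of Theorem \ref{thm:asymptotic_normality_variational_1D}, as the paper does, but it differs in one step: the verification of the compatibility conditions \eqref{assum::beta_0} for $\cW$. You establish compatibility for the original design $X$ via mutual coherence (Lemma 1 of \cite{CS-HV2015} together with \cite{CaiJiang2011}) and then transfer it to $\cW$ using Lemma \ref{lem:compatibility_number}, absorbing the correction term $\tfrac{\|X_1\|_2}{\|\cW\|}\sqrt{|S|}\max_{i\ge 2}|\gamma_i| \lesssim \sqrt{|S|\log p/n}$. The paper instead observes that, since $P^TP=I_{n-1}$ and $X_{-1}$ is independent of $X_1$, the entries of $\cW=P^TX_{-1}$ are themselves i.i.d.\ $\mathcal{N}(0,1)$, so the mutual-coherence machinery applies \emph{directly} to $\cW$ with no transfer needed. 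The paper's route is shorter and exploits the Gaussian structure; yours is more robust in that it would survive for designs where $\cW$ does not inherit an i.i.d.\ structure, and it is the route the paper itself advertises in Section \ref{sec::Additional_result} for general designs. The remaining steps — $\|X_1\|_2=\sqrt{n}(1+o_P(1))$, $\|\cW\|=\sqrt{n-1}(1+o_P(1))$, the sandwiching of $\lambda$, the identification $c=o(\sqrt{n})=o(\|X_1\|_2)$, and the reduction of \eqref{assum::design_matrix} to $\rho_n s_0\log p/\sqrt{n}\to 0$ via Lemma \ref{lem:design_bound_mc} — coincide with the paper's argument, and your remark that the free sequence $\rho_n\to\infty$ can always be chosen slowly enough given $s_0\log p/\sqrt{n}\to 0$ correctly disposes of the only calibration issue.
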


\begin{proof}[Proof of Corollary \ref{cor::random_design}]
	 Since the $X_{ij} \sim^{iid} \mathcal{N}(0,1)$ and $P^T P = I_{n-1}$, we have that the $\check{W}_{ij}$ are also i.i.d. $\mathcal{N}(0,1)$ and therefore the compatibility conditions can be verified using results on the compatibility of Gaussian matrices.
    	More precisely, since $\log p=o(n^{1/3})$ and $s_0=  o\left(\sqrt{n / \log p}\right)$, the compatibility numbers are bounded away from 0 on an event of probability 1 asymptotically as discussed above. Similarly, since $s_0= o(\sqrt{n}/\log p)$, condition \eqref{assum::design_matrix} is satisfied with probability 1 asymptotically.
	
	Since the $X_{ij}$ are i.i.d. $\mathcal{N}(0,1)$ and the $\check{W}_{ij}$ are also i.i.d. $\mathcal{N}(0,1)$,  we have $\|X_1\|_2 = \sqrt{n}(1+o_P(1))$ and $\|\check{W}\| = \sqrt{n-1}(1+o_P(1))$. Therefore, by the assumptions on $\lambda$ and $c$, the conditions required on $\lambda$ and $c$ in the deterministic case are satisfied on an event of probability 1 asymptotically.
\end{proof}

\section{A connection to a larger variational class}\label{sec:additional_VB}

As discussed in the introduction, our method can be viewed more generally as placing a rough MF variational approximation on the high-dimensional nuisance parameters $\beta_{-1}$ and then carefully modelling the conditional distribution $\beta_1|\beta_{-1}$. We now make this link precise, showing that our variational approximation \eqref{intuitive_approximation} corresponds to the KL-minimizer of such a variational family. More precisely, define the set of conditional densities:
$$
\mathcal{D} = \left\{ q:  \R \times \R^{p-1} \rightarrow  [0,\infty): \; q(\cdot| \beta_{-1}) \; \text{is a density with respect to Lebesgue measure for all } \beta_{-1} \in \R^{p-1} \right\},
$$
and the variational class 
$$\mathcal{V} = \left\{ q(\beta_{1} | \beta_{-1}) d\beta_{1} dQ_{-1}(\beta_{-1})  :\  q \in \mathcal{D}\,,\,  Q_{-1} \in \mathcal{Q}_{-1}  \right\}.$$ 
In Lemma \ref{lem::general_variational_class}, we show that the KL-minimizer between $\mathcal{V}$ and the posterior is given by $q=\pi(\cdot |\beta_{-1}, Y)$ with $\pi(\cdot |\beta_{-1}, Y)$ being the conditional distribution of $\beta_1|\beta_{-1}$ under the posterior, and $Q_{-1}=\hat{Q}_{-1}$. By rewriting the distribution \eqref{intuitive_approximation} in term of a distribution on $\beta_{-1}$ and a conditional distribution of $\beta_1|\beta_{-1}$ , one can see that this minimizer is exactly the approximation \eqref{intuitive_approximation}. Such partial factorisations have also been used in other settings, for instance binary regression \cite{Fasano2022}.

 \begin{lemma}\label{lem::general_variational_class}
 	For any $V = V(q, Q_{-1})$ in $\mathcal{V}$, 
 	\begin{align*}
 		\operatorname{KL}(V \| \Pi(\cdot | Y)) = \operatorname{KL}(Q_{-1} \| \Pi_{-1}(\cdot| Y)) + \int_{\R^{p-1}} \operatorname{KL}( q(\cdot |\beta_{-1}) \| \pi(\cdot |\beta_{-1}, Y)) dQ_{-1}.
 	\end{align*}
  Consequently,
 \begin{align*}
     \argmin_{V \in \mathcal{V}} \operatorname{KL}(V,\Pi(\cdot | Y)) = V(\pi(\cdot | \beta_{-1}, Y), \hat{Q}_{-1} ).
 \end{align*}
 \end{lemma}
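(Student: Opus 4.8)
The plan is to recognize the claimed identity as the chain rule for Kullback--Leibler divergence with respect to the splitting $\beta=(\beta_1,\beta_{-1})$, and then read off the minimizer. By Lemma~\ref{lem:posterior_form} the posterior disintegrates as $d\Pi(\beta_1,\beta_{-1}\mid Y)=\pi(\beta_1\mid\beta_{-1},Y)\,d\Lambda(\beta_1)\,d\Pi_{-1}(\beta_{-1}\mid Y)$, where $\pi(\cdot\mid\beta_{-1},Y)$ is the Lebesgue density on $\R$ obtained by normalizing $u\mapsto e^{-\frac12\|X_1\|_2^2(u+\sum_{i\geq2}\gamma_i\beta_i-X_1^TY/\|X_1\|_2^2)^2}g\big(u+\sum_{i\geq2}\gamma_i\beta_i\big)$; this is well defined and \emph{strictly positive} on $\R$ since $g>0$. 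Similarly, by the very definition of $\mathcal{V}$, every $V=V(q,Q_{-1})\in\mathcal{V}$ disintegrates as $dV(\beta_1,\beta_{-1})=q(\beta_1\mid\beta_{-1})\,d\Lambda(\beta_1)\,dQ_{-1}(\beta_{-1})$.

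First I would dispose of the degenerate case: taking $\beta_{-1}$-marginals, $V\ll\Pi(\cdot\mid Y)$ forces $Q_{-1}\ll\Pi_{-1}(\cdot\mid Y)$, so if $Q_{-1}\not\ll\Pi_{-1}(\cdot\mid Y)$ then both $\operatorname{KL}(V\|\Pi(\cdot\mid Y))$ and $\operatorname{KL}(Q_{-1}\|\Pi_{-1}(\cdot\mid Y))$ are $+\infty$ and the identity holds trivially. Hence assume $Q_{-1}\ll\Pi_{-1}(\cdot\mid Y)$ with density $r:=dQ_{-1}/d\Pi_{-1}(\cdot\mid Y)$. Since $\pi(\cdot\mid\beta_{-1},Y)$ has full support, $q(\cdot\mid\beta_{-1})\ll\pi(\cdot\mid\beta_{-1},Y)$ for every $\beta_{-1}$, and multiplying the two disintegrations gives $V\ll\Pi(\cdot\mid Y)$ with
\[
\frac{dV}{d\Pi(\cdot\mid Y)}(\beta_1,\beta_{-1})\;=\;r(\beta_{-1})\,\frac{q(\beta_1\mid\beta_{-1})}{\pi(\beta_1\mid\beta_{-1},Y)}.
\]
Next I would integrate the logarithm of this Radon--Nikodym derivative against $V$. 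Writing $\log(dV/d\Pi(\cdot\mid Y))=\log r(\beta_{-1})+\log q(\beta_1\mid\beta_{-1})-\log\pi(\beta_1\mid\beta_{-1},Y)$ and integrating over $\beta_1$ first for fixed $\beta_{-1}$: the term $\log r(\beta_{-1})$ contributes $\int\log r\,dQ_{-1}=\operatorname{KL}(Q_{-1}\|\Pi_{-1}(\cdot\mid Y))$ because $\int q(\beta_1\mid\beta_{-1})\,d\Lambda(\beta_1)=1$, and the remaining two terms contribute $\int\operatorname{KL}(q(\cdot\mid\beta_{-1})\|\pi(\cdot\mid\beta_{-1},Y))\,dQ_{-1}(\beta_{-1})$. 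The care needed is the usual one: decompose the integrand into positive and negative parts before applying Tonelli, use the standard fact that the negative part of $\log(dV/d\Pi(\cdot\mid Y))$ is always $V$-integrable, and note that if either term on the right is $+\infty$ then so is the left-hand side, consistently. This yields the displayed decomposition.

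Finally, for the consequence I would observe that in the decomposition the first summand depends only on $Q_{-1}$ while the second is nonnegative and vanishes as soon as $q(\cdot\mid\beta_{-1})=\pi(\cdot\mid\beta_{-1},Y)$ for $Q_{-1}$-a.e.\ $\beta_{-1}$. Hence for every $Q_{-1}$ the choice $q=\pi(\cdot\mid\beta_{-1},Y)$ is optimal, and the problem reduces to minimizing $\operatorname{KL}(Q_{-1}\|\Pi_{-1}(\cdot\mid Y))$ over $\mathcal{Q}_{-1}$, whose minimizer is $\hat{Q}_{-1}$ by \eqref{eq:optimization_k=1}. Therefore $\argmin_{V\in\mathcal{V}}\operatorname{KL}(V\|\Pi(\cdot\mid Y))=V(\pi(\cdot\mid\beta_{-1},Y),\hat{Q}_{-1})$, and rewriting \eqref{intuitive_approximation} as a marginal law for $\beta_{-1}$ together with the conditional law of $\beta_1=\beta_1^*-\sum_{i\geq2}\gamma_i\beta_i$ given $\beta_{-1}$ identifies this with $\hat{Q}$. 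The only genuine difficulty is the measure-theoretic bookkeeping: $\Pi_{-1}(\cdot\mid Y)$ is a spike-and-slab-type law with atoms on coordinate subspaces, so one must justify the disintegration, the measurability of $\beta_{-1}\mapsto\operatorname{KL}(q(\cdot\mid\beta_{-1})\|\pi(\cdot\mid\beta_{-1},Y))$, and the interchange of integrals carefully, and track the cases where a term equals $+\infty$ — nothing here is specific to the regression model.
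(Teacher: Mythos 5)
Your proposal is correct and follows essentially the same route as the paper: write the Radon--Nikodym derivative $dV/d\Pi(\cdot\mid Y)$ as the product $r(\beta_{-1})\,q(\beta_1\mid\beta_{-1})/\pi(\beta_1\mid\beta_{-1},Y)$ via the two disintegrations, integrate its logarithm against $V$ using Fubini--Tonelli to obtain the chain-rule decomposition, and then minimize the two summands separately. Your treatment is in fact slightly more careful than the paper's (which dismisses the non-absolutely-continuous case and the positivity of $\pi(\cdot\mid\beta_{-1},Y)$ without comment), but there is no substantive difference in approach.
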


 \begin{proof}[Proof of Lemma \ref{lem::general_variational_class}]
    We prove the first assertion. Let $V = V(q, Q_{-1}) \in \mathcal{V}$ with $Q_{-1}$ being absolutely continuous with respect to $\Pi_{-1}(\cdot|Y)$, otherwise the result is immediate. We have that 
    \begin{align*}
		dV(\beta)&=q(\beta_{1} | \beta_{-1}) d\beta_{1} \frac{dQ_{-1}}{d\Pi_{-1}(\cdot|Y)}(\beta_{-1})d\Pi_{-1}(\beta_{-1}|Y) \\
		& = \frac{q(\beta_{1} | \beta_{-1})}{\pi( \beta_1|\beta_{-1}, Y)}  \pi( \beta_1|\beta_{-1}, Y) d\beta_{1} \frac{dQ_{-1}}{d\Pi_{-1}(\cdot|Y)}(\beta_{-1})d\Pi_{-1}(\beta_{-1}|Y)\\
		&= \frac{q(\beta_{1} | \beta_{-1})}{\pi( \beta_1|\beta_{-1}, Y)} \frac{dQ_{-1}}{d\Pi_{-1}(\cdot|Y)}(\beta_{-1}) d\Pi(\beta|Y).
	\end{align*}
	Consequently, $V$ is absolutely continous with respect to $\Pi(\cdot|Y)$ and
	\begin{align*}
		\operatorname{KL}(V||\Pi(\cdot|Y)) &= \int \log(\frac{q(\beta_{1} | \beta_{-1})}{\pi( \beta_1|\beta_{-1}, Y)} \frac{dQ_{-1}}{d\Pi_{-1}(\cdot|Y)}(\beta_{-1})) dV(\beta) \\ 
		&=\int \log(\frac{q(\beta_{1} | \beta_{-1})}{\pi( \beta_1|\beta_{-1}, Y)}) dV(\beta) + \int \log( \frac{dQ_{-1}}{d\Pi_{-1}(\cdot|Y)}(\beta_{-1})) dV(\beta) \\
		&=\int_{\R^{p-1}} \int_{\R} \log(\frac{q(\beta_{1} | \beta_{-1})}{\pi( \beta_1|\beta_{-1}, Y)}) q(\beta_{1} | \beta_{-1})d\beta_{1} dQ_{-1}(\beta_{-1}) + \int_{\R^{p-1}} \log( \frac{dQ_{-1}}{d\Pi_{-1}(\cdot|Y)}) dQ_{-1} \\
		&=\int_{\R^{p-1}} \operatorname{KL}( q(\cdot |\beta_{-1}) \| \pi(\cdot |\beta_{-1}, Y)) dQ_{-1} + \operatorname{KL}(Q_{-1}|| \Pi_{-1}(\cdot|Y)).
	\end{align*}
 The second assertion can be easily deduced form the first assertion.
\end{proof}

\bibliographystyle{plainnat}
\bibliography{ref}

\end{document}